\definecolor{mydarkblue}{rgb}{0,0.08,0.45}
  \let\Cref\crtCref
  \let\cref\crtcref
\DeclareMathOperator{\E}{E}
\DeclareMathOperator{\var}{Var}
\DeclareMathOperator{\Var}{Var}
\DeclareMathOperator{\cor}{Corr}
\DeclareMathOperator{\diag}{diag}
\DeclareMathOperator{\sign}{sign}
\DeclareMathOperator{\st}{S}
\DeclareMathOperator{\normal}{Normal}
\DeclareMathOperator{\fnormal}{FoldedNormal}
\DeclareMathOperator{\bernoulli}{Bernoulli}
\DeclareMathOperator{\erf}{erf}
\DeclareMathOperator{\pdf}{\phi}
\DeclareMathOperator{\cdf}{\Phi}
\renewcommand{\vec}{\bm}
\newcommand{\mat}{\bm}
\newcommand*\du{\mathop{}\!\mathrm{d}}
\def\ceil#1{\lceil #1 \rceil}
\newcommand{\T}{\intercal}
\newcommand{\ones}{\bm{1}}
\theoremstyle{plain}
\newtheorem{theorem}{Theorem}[section]
\newtheorem{corollary}[theorem]{Corollary}
\theoremstyle{definition}
\newtheorem{definition}{Definition}
\theoremstyle{remark}
\newcommand{\pkg}[1]{\textsf{#1}}
\newcommand{\data}[1]{\texttt{#1}}
\newrobustcmd{\best}{\bfseries}
\title{The Choice of Normalization Influences Shrinkage in Regularized Regression}
\author{%
  \name Johan Larsson \email jola@math.ku.dk\\
  \addr Department of Mathematical Sciences, University of Copenhagen\\
  \addr Department of Statistics, Lund University
  \AND
  \name Jonas Wallin \email jonas.wallin@stat.lu.se\\
  Department of Statistics, Lund University
}
\begin{document}

\maketitle

\begin{abstract}
  Regularized models are often sensitive to the scales of the features in the data and it has
therefore become standard practice to normalize (center and scale) the features before
fitting the model. But there are many different ways to normalize the features and the
choice may have dramatic effects on the resulting model. In spite of this, there has so far
been no research on this topic. In this paper, we begin to bridge this knowledge gap by
studying normalization in the context of lasso, ridge, and elastic net regression. We focus
on binary features and show that their class balances (proportions of ones) directly
influences the regression coefficients and that this effect depends on the combination of
normalization and regularization methods used. We demonstrate that this effect can be
mitigated by scaling binary features with their variance in the case of the lasso and
standard deviation in the case of ridge regression, but that this comes at the cost of
increased variance of the coefficient estimates. For the elastic net, we show that scaling
the penalty weights, rather than the features, can achieve the same effect. Finally, we
also tackle mixes of binary and normal features as well as interactions and provide some
initial results on how to normalize features in these cases.

\end{abstract}

\section{Introduction}

When modeling high-dimensional data where the number of features~(\(p\)) exceeds the number
of observations~(\(n\)), it is impossible to apply classical statistical models such as
standard linear regression since the design matrix \(\mat X\) is no longer of full rank. A
common remedy to this problem is to \emph{regularize} the model by adding a penalty term to
the objective that punishes models with large coefficients. The resulting problem takes the
following form:
\begin{equation}
  \label{eq:general-objective}
  \operatorname*{minimize}_{\beta_0 \in \mathbb{R},\vec{\beta} \in \mathbb{R}^p} g(\beta_0, \vec{\beta}; \mat X, \vec y) + h(\vec\beta),
\end{equation}
where \(\vec y\) is the response vector, \(\mat X\) the design matrix, \(\beta_0\) the
intercept, and \(\bm{\beta}\) the coefficients. Furthermore, \(g\) is a data-fitting
function that attempts to optimize the fit to the data and \(h\) is a penalty that depends
only on \(\bm{\beta}\). Two common penalties are the \(\ell_1\) norm and squared \(\ell_2\)
norm penalties, which if \(g\) is the standard ordinary least-squares objective, represent
the lasso~\citep{tibshirani1996,santosa1986,donoho1994} and ridge (Tikhonov) regression
respectively.

These penalties depend on the magnitudes of the coefficients, which means that they are
sensitive to the scales of the features in \(\mat X\). To avoid this, it is common to
\emph{normalize} the features before fitting the model by shifting and scaling each feature
by some measures of their locations and scales, respectively. For some problems it is
possible to arrive at these measures by contextual knowledge of the data at hand. In most
cases, however, they must be estimated. A popular strategy is to use the mean and standard
deviation of each feature as location and scale factors respectively, which is called
\emph{standardization}.

The choice of normalization may, however, have consequences for the estimated model. As a
first example of this, consider \Cref{fig:realdata-paths}, which displays the
regularization paths for the lasso\footnote{The estimated coefficients of the lasso as the
  penalty strength is varied from a large-enough value for all coefficients to be zero to a
  low value at which the model is almost saturated.} for four datasets:
\data{housing}~\citep{harrison1978}, \data{a1a}~\citep{becker1996,platt1998},
\data{triazines}~\citep{king1995,hirst1994}, and \data{w1a}~\citep{platt1998}, and two
types of normalization.

In the figure, we have colored the lines corresponding to features that were among the
first five to enter the model in either of the two type of normalization schemes. Note that
the choice of normalization result in different sets of features being selected as well as
different coefficient estimates. This is especially striking in the case of
\data{triazines} and \data{w1a}.

\begin{figure}[bpt]
  \centering
  \includegraphics[]{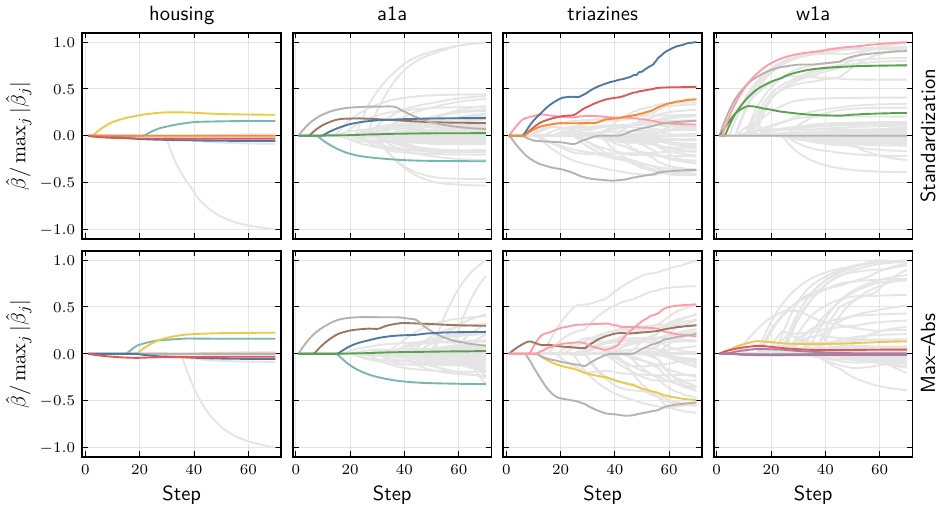}
  \caption{%
    Lasso paths for real datasets using two types of normalization:
    standardization and maximum absolute value normalization (max--abs). For
    each dataset, we have colored the coefficients if they were among the first
    five to become non-zero under either of the two normalization schemes.
    The \(x\)-axis shows the steps along the regularization path and
    the \(y\)-axis the estimated coefficients normalized by the
    maximum magnitude of the coefficients in each case. See
    \Cref{sec:data-summary} for more information about datasets used here.
  }
  \label{fig:realdata-paths}
\end{figure}

To illustrate this further, we show the estimated coefficients for the same datasets after
having fitted the lasso with a penalty strength (\(\lambda\)) set by 5-fold
cross-validation repeated 5 times on a 50\% training data subset. The estimated
coefficients for the 50\% held-out test data are shown in \Cref{tab:realdata-cv-coefs}. We
see that the estimated coefficients on the test set are different between the two cases,
especially for the \data{triazines} datasets, where the two normalization schemes disagree
completely.

\begin{table}[htpb]
  \centering
  \caption{
    Estimated lasso coefficients on test sets, with \(\lambda\) set from 5-fold
    cross-validation repeated 5 times. \(\hat{\bm{\beta}}_\text{std}\) and
    \(\hat{\bm{\beta}}_\text{max--abs}\) show the
    coefficients based on normalizing the design matrix with standardization and
    maximum absolute value (max--abs) normalization respectively. We show the five largest
    coefficients (in magnitude) for the standardization case and the respective
    coefficients for the max--abs case. In each case, we present the coefficients
    on the original scale of the features.
    See \Cref{sec:data-summary} for more
    information about these datasets.
  }
  \label{tab:realdata-cv-coefs}
  \begin{tabular}{
      S[table-format=-1.2,round-mode=figures,round-precision=2]
      S[table-format=-1.2,round-mode=figures,round-precision=2]
      S[table-format=-1.2,round-mode=figures,round-precision=2]
      S[table-format=-1.4,round-mode=figures,round-precision=2]
      S[table-format=1.3,round-mode=figures,round-precision=2]
      S[table-format=1.1,round-mode=figures,round-precision=2]
      S[table-format=1.1,round-mode=figures,round-precision=2]
      S[table-format=1.2,round-mode=figures,round-precision=2]
    }
    \toprule
    \multicolumn{2}{c}{\data{housing}} & \multicolumn{2}{c}{\data{a1a}}         & \multicolumn{2}{c}{\data{triazines}} & \multicolumn{2}{c}{\data{w1a}}                                                                                                                                                                   \\
    \cmidrule(rl){1-2}
    \cmidrule(rl){3-4}
    \cmidrule(rl){5-6}
    \cmidrule(rl){7-8}
    {\(\hat{\bm{\beta}}_\text{std}\)}  & {\(\hat{\bm{\beta}}_\text{max--abs}\)} & {\(\hat{\bm{\beta}}_\text{std}\)}    & {\(\hat{\bm{\beta}}_\text{max--abs}\)} & {\(\hat{\bm{\beta}}_\text{std}\)} & {\(\hat{\bm{\beta}}_\text{max--abs}\)} & {\(\hat{\bm{\beta}}_\text{std}\)} & {\(\hat{\bm{\beta}}_\text{max--abs}\)} \\
    \midrule
    -0.6309                            & -0.675004                              & 0.542532                             & 0.536762                               & 0.17369                           & 0.0                                    & 1.80033                           & 0.0                                    \\
    -1.38393                           & -0.779929                              & 0.327537                             & 0.523057                               & 0.0691224                         & 0.0                                    & 1.78789                           & 0.78455                                \\
    0.265659                           & 0.0                                    & -0.388699                            & -0.514664                              & 0.0284156                         & 0.0                                    & 1.78348                           & 0.630671                               \\
    -0.987113                          & -0.335828                              & 0.308715                             & 0.320916                               & 0.0706408                         & 0.0                                    & 1.44741                           & 0.07977                                \\
    2.77031                            & 3.06118                                & 0.175958                             & 0.23103                                & 0.0293775                         & 0.0                                    & 1.65553                           & 0.0                                    \\
    \bottomrule
  \end{tabular}
\end{table}

In spite of this apparent connection between normalization and regularization, there has so
far been almost no research on the topic. And in its absence, the choice of normalization
is typically motivated by computational concerns or by being ``standard''. This is
problematic since the effects of normalization are unknown and because there exists no
natural choice for many types of data. In particular, there is no obvious choice for binary
features (where each observation takes either of two values). In this paper we begin to
bridge this knowledge gap by studying normalization in the context of three particular
cases of the regularized problem in \Cref{eq:general-objective}: the lasso, ridge, and
elastic net~\citep{zou2005}. The latter of these, the elastic net, is a generalization of
the previous two, and is represented by the following optimization problem:
\begin{equation}
  \label{eq:elastic-net}
  \operatorname*{minimize}_{\beta_0 \in \mathbb{R},\vec{\beta} \in \mathbb{R}^p} \frac{1}{2} \lVert \vec y - \beta_0 - \tilde{\mat{X}}\vec{\beta} \rVert^2_2  + \lambda_1 \lVert \vec\beta \rVert_1 + \frac{\lambda_2}{2}\lVert \vec \beta \rVert_2^2,
\end{equation}
where setting \(\lambda_1 = 0\) results in ridge regression and setting \(\lambda_2 = 0\)
results in the lasso. Our focus in this paper is on binary data and we pay particular
attention to the case when they are imbalanced, that is, have relatively many ones or
zeroes. In this scenario, we demonstrate that the choice of normalization directly
influences the estimated coefficients and that this effect depends on the particular
combination of normalization and regularization.

Our key contributions are:
\begin{enumerate}
  \item We reveal that class balance in binary features directly affects lasso, ridge, and elastic
        net estimates, and show that scaling binary features with standard deviation (ridge) or
        variance (lasso) mitigates these effects at the cost of increased variance. Through
        extensive empirical analysis, we show that this finding extends to a wide range of
        settings~(\Cref{sec:experiments}).

  \item In our theoretical work, we examine this relationship in detail, showing that this bias
        from class imbalance holds even in the case of orthogonal
        features~(\Cref{sec:theory-binary-features}). For the elastic net, however, we show that
        this effect \emph{cannot} be mitigated by normalization and instead must be dealt with by
        scaling the penalty weights rather than the features~(\Cref{sec:binary-weighting}).

  \item For mixed data designs, we demonstrate how normalization choices implicitly determine the
        relative regularization effects on binary versus continuous
        features~(\Cref{sec:mixed-data}).

  \item For interaction features, we show that a common alternative to normalizing interaction
        features leads to biased estimates and provide an alternative approach that mitigates this
        problem~(\Cref{sec:interactions}).
\end{enumerate}

Collectively, our results demonstrate that normalization is not merely a preprocessing step
but rather an integral component of the model that requires careful consideration based on
data characteristics and the chosen regularization approach.

\section{Normalization}
To avoid possible confusion regarding the ambiguous use of terminology in the literature,
we will begin by clarifying what we mean by \emph{normalization}, which we define as the
process of centering and scaling the feature matrix.

\begin{definition}[Normalization]
  \label{def:normalization}
  Let \(\bm{X} \in \mathbb{R}^{n\times p}\) be the feature matrix and let
  \(\vec{c} \in \mathbb{R}^p\) and \(\vec{s} \in \mathbb{R}^p_+\) be centering
  and scaling factors respectively. Then \(\tilde{\bm{X}}\) is the
  \emph{normalized} feature matrix with elements given by
  \(\tilde{x}_{ij} = (x_{ij} - c_j)/s_j\).
\end{definition}

Some authors refer to the procedure in \Cref{def:normalization} as \emph{standardization},
but here we define standardization only as the case when centering with the mean and
scaling with the (uncorrected\footnote{Standard deviation computed without Bessel's
  correction (use of \(n-1\) instead of \(n\) in the standard deviation formula).}) standard
deviation.

There are many different normalization strategies and we have listed a few common choices
in \Cref{tab:normalization-types}. Standardization is perhaps the most popular type of
normalization, at least in the field of statistics. One of its benefits is that it
simplifies certain aspects of fitting the model, such as fitting the intercept. The
downside of standardization is that it involves centering by the mean, which destroys
sparsity in \(\bm{X}\) since centering shifts zero values to non-zero.

\begin{table}[t]
  \centering
  \caption{
    Common ways to normalize a matrix of features using centering and scaling
    factors \(c_j\) and \(s_j\), respectively. Note that \(\bar{x}_j\) is
    the arithmetic mean of feature \(j\) and that \(Q_a(\bm{x}_j)\) is the
    \(a\)th quartile of feature \(j\).
  }
  \label{tab:normalization-types}
  \begin{tabular}{lll}
    \toprule
    Normalization            & \(c_{j}\)          & \(s_j\)                                                     \\
    \midrule
    Standardization          & \(\bar{x}_j\)      & \(\frac{1}{\sqrt{n}} \lVert \vec{x}_j - \bar{x}_j\rVert_2\) \\
    \addlinespace
    \(\ell_1\)-Normalization & \(\bar{x}_j\)      & \(\frac{1}{\sqrt{n}} \lVert \vec{x}_j - \bar{x}_j\rVert_1\) \\
    \addlinespace
    Max--Abs                 & 0                  & \(\max_i|x_{ij}|\)                                          \\
    \addlinespace
    Min--Max                 & \(\min_i(x_{ij})\) & \(\max_i(x_{ij}) - \min_i(x_{ij})\)                         \\
    \addlinespace
    Robust Normalization     & \(Q_2(\bm{x}_j)\)  & \(Q_3(\bm{x}_j) - Q_1(\bm{x}_j)\)                           \\
    \addlinespace
    Adaptive Lasso           & 0                  & \(|\hat{\beta}_j^\text{OLS}|\)                              \\
    \bottomrule
  \end{tabular}
\end{table}

When \(\bm{X}\) is sparse, two common alternatives to standardization are min--max and
max--abs (maximum absolute value) normalization, which scale the data to lie in \([0, 1]\)
and \([-1, 1]\) respectively, and therefore retain sparsity when features are binary. These
methods are, however, both sensitive to outliers. And since sample extreme values often
depend on sample size, as in the case of normal data~(\Cref{sec:maxabs-theory}), use of
these methods may sometimes be problematic. Another alternative is to replace the
\(\ell_2\)-norm in the standardization method with the \(\ell_1\)-norm, which leads to
\(\ell_1\)-normalization. We note here that this method is often used without centering,
but this would make the method depend on the mean of the feature, as in the case of
max--abs normalization, and we therefore prefer the centered version here.

We have also included robust normalization in \Cref{tab:normalization-types}, which is a
version of normalization that uses the median and interquartile range (IQR) as centering
and scaling factors. Finally, we have also included the adaptive lasso~\citep{zou2006a},
which is a special case of normalization that fits a standard ordinary least-squares
regression (OLS) model to the data and uses the OLS estimates as scaling
factors.\footnote{In the case when \(p \gg n\), a ridge estimator is typically used
  instead.} We have both of these methods in \Cref{tab:normalization-types} for completeness,
but we will not study it further in this paper.

In the next section, we will examine how the choice of normalization affects the estimates
for the lasso, ridge, and elastic net regression.

\section{Ridge, Lasso, and Elastic Net Regression}%
\label{sec:theory}

In this setting we begin to describe the connection between normalization and the elastic
net estimator. We start by showing how the scaling and centering parameters of the
normalization method factors into the elastic net estimator in a general case. We then
narrow our focus to the case of binary features and present our main results on bias,
variance, and selection probability when the features are unbalanced.

Throughout the paper we assume that the response \(\vec{y}\) is generated according to
\(\bm{y} = \beta_0^* + \bm{X\beta}^* + \bm{\varepsilon}\), with \(\mat X\) being the \(n
\times p\) design matrix with features (columns) \(\vec x_j\), \(\bm{\varepsilon}\) the
vector of noise, with mean zero, finite variance \(\sigma_\varepsilon^2\), and identically
and independently distributed entries. We also assume \(\mat{X}\), \(\beta_0^*\), and
\(\vec{\beta}^*\) to be fixed and the features of the normalized design matrix to be
orthogonal, that is, \(\tilde{\mat{X}}^\intercal \tilde{\mat{X}} =
\diag\left(\tilde{\vec{x}}_1^\T \tilde{\vec{x}}_1, \dots, \tilde{\vec{x}}_p^\intercal
\tilde{\vec{x}}_p\right)\). In this case, it is a well-known fact~\citep{tibshirani1996}
that the solution to the elastic net problem is given by
\begin{equation}
  \label{eq:orthogonal-solution-normalized}
  \hat{\beta}^{(n)}_j = \frac{\st_{\lambda_1}\left(\tilde{\vec{x}}_j^\T \vec{y}\right)}{\tilde{\vec{x}}_j^\T \tilde{\vec{x}}_j + \lambda_2},
  \qquad
  \hat{\beta}_0^{(n)} = \frac{\vec{y}^\T \ones}{n},
\end{equation}
where \(\st_\lambda(z)\) is the soft-thresholding operator, defined as \(\st_\lambda(z) =
\sign(z) \max(|z| - \lambda, 0)\), which is the proximal operator of the \(\ell_1\) norm.
We refer to \Cref{sec:elastic-net-estimator} for a derivation of the results above.

The assumption of orthogonal features may seem strong and is indeed almost never realised
in practice. Here we use it for our theoretical results to show the direct connection
between normalization and the elastic net estimator, and prove that even in this simple
case, normalization has a pronounced effect on the estimates. In our experimental
work~(\Cref{sec:experiments}), however, we show that our findings extend to a much wider
class of designs and also refer the reader to \Cref{sec:orthogonality-assumption}, where we
discuss this assumption in detail and provide additional theoretical and empirical results
in relation to this.

Normalization changes the optimization problem and the estimated coefficients, which will
now be on the scale of the normalized features. But here we are interested in
\(\hat{\vec{\beta}}\): the coefficients on the scale of the original problem. To obtain
these, we transform the coefficients from the normalized problem, \(\hat\beta^{(n)}_j\),
back via \(\hat\beta_j = \hat\beta^{(n)}_j/s_j\) for \(j \in [p]\), where \([p] =
\{1,2,\dots,p\}\). There is a similar transformation for the intercept, but we omit it here
since we are not interested in interpreting it.

Taken together, this means that the solution for \(\hat{\vec{\beta}}\) can be expressed as
\[
  \hat{\beta}_j = \frac{\st_{\lambda_1}(\tilde{\vec{x}}_j^\T \vec{y})}{d_j}
\]
where
\begin{equation}
  \label{eq:z-d}
  \begin{aligned}
    \tilde{\vec{x}}_j^\T \vec{y} & = \frac{\beta_j^* n \nu_j- \vec{x}_j^\T \vec{\varepsilon}}{s_j}                \\
    d_j                          & = s_j\left(\frac{n \nu_j}{s_j^2} + \lambda_2\right) = s_j(\tilde{\vec{x}}_j^\T
    \tilde{\vec{x}}_j + \lambda_2),
  \end{aligned}
\end{equation}
with \(\nu_j\) being the uncorrected sample variance of \(\vec{x}_j\).
The bias and variance of \(\hat{\beta}_j\) are then given by
\begin{align}
  \E \hat\beta_j - \beta_j^* & = \frac{1}{d_j}\E \st_\lambda(\tilde{\vec{x}}_j^\T \vec{y}) - \beta^*_j,\label{eq:bias} \\
  \var \hat\beta_j           & = \frac{1}{d_j^2} \var \st_\lambda(\tilde{\vec{x}}_j^\T \vec{y}).\label{eq:variance}
\end{align}
See \Cref{sec:bias-var-deriv} for a derivation of the results above
as well as expressions for \(\E \st_\lambda(x)\) and \(\var S_\lambda(x)\).

These results hold in a general case. From now on, however, we will narrow our scope and
assume that the entries of \(\vec{\varepsilon}\) are identically, independently, and
normally distributed, in which case both the bias and variance of \(\hat{\beta}_j\) have
analytical expressions~(\Cref{sec:normally-distributed-noise}) and
\[
  \tilde{\vec{x}}_j^\T \vec{y} \sim \normal\left(\mu_j = \tilde{\vec{x}}_j^\T\vec{x}_j \beta_j^*, \sigma_j^2 = \tilde{\vec{x}}_j^\T\tilde{\vec{x}}_j \sigma_\varepsilon^2 \right).
\]
So far, we have assumed nothing about the features themselves, apart from being orthogonal
to each other. The main focus of our paper, however, is binary features, which we will now
turn to.

\subsection{Binary Features}%
\label{sec:theory-binary-features}

When \(x_{ij} \in \{0, 1\}\) for all \(i\), we define \(\bm{x}_j\) to be a \emph{binary
  feature}, and the \emph{class balance} of this feature as \(q_j =
\frac{1}{n}\sum_{i=1}^n{x_{ij}}\): the proportion of ones. It would make no difference to
the majority of our results if we were to swap the ones and zeros as long as an intercept
is included, and ``class balance'' is then equivalent to the proportion of either. But in
the case of interactions~(\Cref{sec:interactions}), the choice does in fact matter.

If feature $j$ is binary then \(\nu_j = (q_j - q_j^2)\) (the uncorrected sample variance
for a binary feature), which in \Cref{eq:z-d} yields
\begin{equation*}
  \tilde{\vec{x}}_j^\T \vec{y} = \frac{\beta_j^* n(q_j - q_j^2) - \vec{x}_j^\T \vec{\varepsilon}}{s_j}, \qquad
  d_j                          = s_j \left(\frac{n(q_j - q_j^2)}{s_j^2} + \lambda_2\right),
\end{equation*}
and consequently
\[
  \mu_j = \frac{\beta^*_j n(q_j - q_j^2)}{s_j}\quad \text{and} \quad \sigma_j^2 = \frac{\sigma_\varepsilon^2n(q_j- q_j^2)}{s^2_j}.
\]
We obtain bias and variance of the estimator with respect to \(q_j\) by inserting
\(\tilde{\vec{x}}_j^\T \vec{y}\) and \(d_j\) into \Cref{eq:bias,eq:variance}.

The presence of the factor \(q_j - q_j^2\) in \(\mu_j\), \(\sigma_j^2\), and \(d_j\)
indicates a link between class balance and the elastic net estimator and, moreover, that
this relationship is mediated by the scaling factor \(s_j\). To achieve some initial
intuition for this relationship, consider the noiseless case (\(\sigma_\varepsilon = 0\))
in which we have
\begin{equation}
  \label{eq:noiseless-estimator}
  \hat{\beta}_j = \frac{\st_{\lambda_1}(\tilde{\vec{x}}_j^\intercal \vec{y})}{s_j\left(\tilde{\vec{x}}_j^\intercal \tilde{\vec{x}}_j + \lambda_2\right)}
  =
  \frac{\st_{\lambda_1}\left(\frac{\beta_j^* n (q_j - q_j^2)}{s_j}\right)}{s_j\left(\frac{n(q_j - q_j^2)}{s_j^2} + \lambda_2\right)}.
\end{equation}
This expression shows that class balance (\(q_j\)) directly affects the estimator through
the factor \(q_j - q_j^2\) (the variance of the binary feature). Starting with the lasso
(\(\lambda_2 = 0\)), observe that the soft-thresholding part of the estimator (numerator)
diminishes for values of \(q_j\) close to \(0\) or \(1\) unless we use the scaling factor
\(s_j = q_j - q_j^2\), in which case \Cref{eq:noiseless-estimator} simplifies to
\[
  \hat{\beta}_j
  = \frac{\st_{\lambda_1}\left(\frac{\beta_j^* n (q_j - q_j^2)}{q_j - q_j^2}\right)}{(q_j - q_j^2)\left(\frac{n(q_j - q_j^2)}{(q_j - q_j^2)^2} + \lambda_2\right)}
  = \frac{\st_{\lambda_1}(\beta_j^* n)}{n},
\]
which is independent of \(q_j\). For other choices of \(s_j\), the soft-thresholding part
of the estimator will be affected by class balance and also depend on the size of
\(\lambda_1\), with larger values of \(\lambda_1\) leading to a larger effect of class
balance.

Turning to the ridge case (\(\lambda_1 = 0\)), the soft-thresholding part simplifies to the
identity function and the shrinkage instead comes from the presence of \(\lambda_2\) in the
denominator, which will scale the estimator towards zero as \(q_j\) approaches 0 or 1.
Contrary to the lasso case, when then instead need to take \(s_j = (q_j - q_j^2)^{1/2}\),
in which case the expression becomes
\[
  \hat{\beta}_j
  = \frac{\st_{\lambda_1}\left(\frac{\beta_j^* n (q_j - q_j^2)}{(q_j - q_j^2)^{1/2}}\right)}{(q_j - q_j^2)^{1/2}\left(\frac{n(q_j - q_j^2)}{(q_j - q_j^2)} + \lambda_2\right)}
  = \frac{\beta_j^* n}{n + \lambda_2},
\]
which is again independent of \(q_j\).

Observe, however, that for the elastic net (\(\lambda_1 > 0, \lambda_2 > 0\)), there exists
no \(s_j\) that can make the estimator independent of \(q_j\). In other words, there is no
type of normalization, at least under our parameterization, that is able to mitigate the
class balance bias in this case. In \Cref{sec:binary-weighting}, however, we will show how
to tackle this issue for the elastic net by scaling the penalty weights. But for now we
continue to study the case of normalization.

Based on the reasoning above, we will consider the scaling parameterization \(s_j =
(q_j-q_j^2)^\delta\), \(\delta \geq 0\), which includes the cases that we are primarily
interested in, namely \(\delta = 0\) (no scaling, as in min--max and max--abs
normalization), \(\delta = 1/2\) (standard-deviation scaling), and \(\delta = 1\) (variance
scaling). The last of these, variance scaling, is in fact equivalent to scaling with the
mean-centered \(\ell_1\)-norm in this particular case of binary features. In
\Cref{sec:noiseless-estimator}, we expand \Cref{eq:noiseless-estimator} under this
parameterization to clarify how the choice of \(\delta\) affects class-balance bias in the
lasso and ridge cases and why it is impossible (under our parameterization) to remove this
bias in the case of the elastic net.

Another consequence of \Cref{eq:noiseless-estimator}, which holds also in the noisy
situation, is that normalization affects the estimator even when the binary feature is
balanced (\(q_j = 1/2\)). \(\delta = 0\), for instance, scales \(\beta_j^*\) in the input
to \(\st_\lambda\) by \(n (q_j - q_j^2) = n/4\). \(\delta = 1\), in contrast, imposes no
such scaling in the class-balanced case. And for \(\delta = 1/2\), the scaling factor is
\(n/2\). Generalizing this, we see that to achieve equivalent scaling in the class-balanced
case for all types of normalization, under our parameterization, we would need to use \(s_j
= 4^{\delta - 1} (q_j - q_j^2)^\delta\). But this only resolves the issue for the lasso. To
achieve a similar effect for ridge regression, we would need another (but similar)
modification. When all features are binary, we can just scale \(\lambda_1\) and
\(\lambda_2\) to account for this effect,\footnote{We use this strategy in all of the
  following examples.} which is equivalent to modifying \(s_j\). But when we consider mixes
of binary and normal features in \Cref{sec:mixed-data}, we need to exert extra care.

We now proceed to consider how class balance affects the bias, variance, and selection
probability of the elastic net estimator under the presence of noise. A consequence of our
assumption of a normal error distribution and consequent normal distribution of
\(\tilde{\vec{x}}_j^\T \vec{y}\) is that the probability of selection in the elastic net
problem is given by
\begin{align}
  \label{eq:selection-probability}
  \Pr\left(\hat{\beta}_j \neq 0\right)  ={} & \cdf \left( \frac{\beta_j^*n (q_j-q_j^2)^{1/2} - \lambda_1(q_j-q_j^2)^{\delta - 1/2}}{\sigma_\varepsilon \sqrt{n}}\right)\nonumber \\
                                            & + \cdf \left( \frac{-\beta_j^*n (q_j-q_j^2)^{1/2} - \lambda_1(q_j-q_j^2)^{\delta - 1/2}}{\sigma_\varepsilon \sqrt{n}}\right).
\end{align}
where \(\cdf\) is the cumulative distribution function of the standard normal distribution.
Letting \(\theta_j = -\mu_j - \lambda_1 \) and \(\gamma_j = \mu_j - \lambda_1\), we can
express this probability asymptotically as \(q_j \rightarrow 1^-\) as
\begin{equation}
  \label{eq:selection-probability-limit}
  \lim_{q_j \rightarrow 1^-} \Pr(\hat{\beta}_j \neq 0) =
  \begin{cases}
    0                                                                & \text{if } 0 \leq \delta < \frac{1}{2}, \\
    2\cdf\left(-\frac{\lambda_1}{\sigma_\varepsilon \sqrt{n}}\right) & \text{if } \delta = \frac{1}{2},        \\
    1                                                                & \text{if } \delta > \frac{1}{2}.
  \end{cases}
\end{equation}

In \Cref{fig:selection-probability}, we plot this probability for various settings of
\(\delta\) for a single feature. Our intuition from the noiseless case holds: suitable
choices of \(\delta\) can mitigate the influence of class imbalance on selection
probability. The lower the value of \(\delta\), the larger the effect of class imbalance
becomes. Note that the probability of selection initially decreases also in the case when
\(\delta \geq 1\). This is a consequence of increased variance of \(\tilde{\vec{x}}_j^\T
\vec{y}\) due to the scaling factor that inflates the noise term. But as \(q_j\) approaches
1, the probability eventually rises towards 1 for \(\delta \in \{1, 1.5\}\). The reason for
this is that this rise in variance eventually quells the soft-thresholding effect
altogether. Note, also, that the selection probability is unaffected by \(\lambda_2\).

\begin{figure}[htpb]
  \centering
  \includegraphics[]{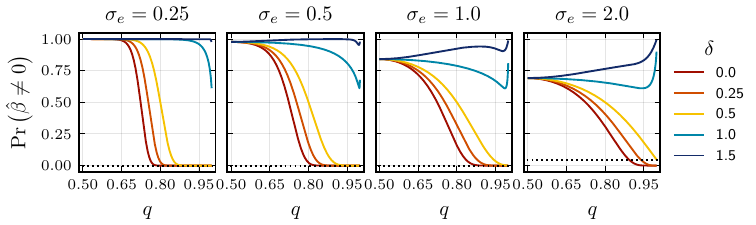}
  \caption{%
    Probability of selection in the lasso given measurement noise
    \(\sigma_\varepsilon\), regularization level \(\lambda_1\), and class
    balance \(q\). The scaling factor is set to \(s_j = (q - q^2)^\delta\),
    \(\delta \geq 0\). The dotted line represents the asymptotic limit for
    \(\delta = 1/2\) from \Cref{eq:selection-probability-limit}.
    \label{fig:selection-probability}}
\end{figure}

Now we turn to the impact of class imbalance on bias and variance of the elastic net
estimator. We begin, in \Cref{thm:classbalance-bias}, by considering the expected value of
the elastic net estimator in the limit as \(q_j \rightarrow 1^-\).

\begin{theorem}
  \label{thm:classbalance-bias}
  If \(\vec{x}_j\) is a binary feature with class balance \(q_j \in (0, 1)\),
  \(\lambda_1 \in [0,\infty)\), \(\lambda_2 \in [0,\infty)\),
  \(\sigma_\varepsilon > 0\), and \(s_j = (q_j - q_j^2)^{\delta}\), \(\delta
  \geq 0\)  then
  \[
    \lim_{q_j \rightarrow 1^-} \E \hat{\beta}_j =
    \begin{cases}
      0                                                                                                  & \text{if } 0 \leq \delta < \frac{1}{2}, \\
      \frac{2n \beta_j^*}{n + \lambda_2} \cdf\left(-\frac{\lambda_1}{\sigma_\varepsilon \sqrt{n}}\right) & \text{if } \delta = \frac{1}{2},        \\
      \beta^*_j                                                                                          & \text{if } \delta > \frac{1}{2}.
    \end{cases}
  \]
\end{theorem}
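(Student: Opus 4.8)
The plan is to reduce the problem to the first Gaussian moment of a soft‑thresholded normal and then track powers of $t := q_j - q_j^2$, which tends to $0^+$ as $q_j \to 1^-$. From \Cref{eq:z-d} and its specialization to binary features, writing $Z := \tilde{\vec x}_j^\T \vec y$ and taking $s_j = (q_j - q_j^2)^\delta = t^\delta$, we have $\hat\beta_j = \st_{\lambda_1}(Z)/d_j$ with $Z \sim \normal(\mu_j,\sigma_j^2)$ and
\begin{equation*}
  \mu_j = \beta_j^*\, n\, t^{1-\delta}, \qquad \sigma_j^2 = \sigma_\varepsilon^2\, n\, t^{1-2\delta}, \qquad d_j = n\, t^{1-\delta} + \lambda_2\, t^{\delta}.
\end{equation*}
Hence $\mu_j/d_j = \beta_j^* n/(n + \lambda_2 t^{2\delta-1})$ and $\lambda_1/\sigma_j = \big(\lambda_1/(\sigma_\varepsilon\sqrt n)\big)\, t^{\delta-1/2}$, and by \Cref{eq:bias} we have $\E\hat\beta_j = \E\st_{\lambda_1}(Z)/d_j$. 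The three cases of the theorem will be exactly $\delta - 1/2 < 0$, $= 0$, $> 0$.

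The analytic core is the expansion
\begin{equation*}
  \E\st_{\lambda_1}(Z) = 2\mu_j\,\cdf\!\left(-\tfrac{\lambda_1}{\sigma_j}\right) + E_t, \qquad |E_t| \le \frac{\mu_j^2}{\sqrt{2\pi}\,\sigma_j}.
\end{equation*}
To obtain it, write $\st_{\lambda_1}(z) = z - \operatorname{clip}_{\lambda_1}(z)$, where $\operatorname{clip}_{\lambda_1}$ is the projection onto $[-\lambda_1,\lambda_1]$, so $\E\st_{\lambda_1}(Z) = \mu_j - h(\mu_j)$ with $h(\mu) := \E\operatorname{clip}_{\lambda_1}(\mu + \sigma_j W)$, $W \sim \normal(0,1)$. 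Since $\operatorname{clip}_{\lambda_1}$ is $1$-Lipschitz with derivative equal a.e.\ to the indicator of $(-\lambda_1,\lambda_1)$, differentiation under the integral is justified and yields $h(0) = 0$ (by symmetry), $h'(0) = \Pr(|\sigma_j W| < \lambda_1) = 2\cdf(\lambda_1/\sigma_j) - 1$, and $h''(\mu) = \sigma_j^{-1}\big[\pdf((-\lambda_1-\mu)/\sigma_j) - \pdf((\lambda_1-\mu)/\sigma_j)\big]$, so $\sup_\mu|h''(\mu)| \le 2\pdf(0)/\sigma_j = \sqrt{2/\pi}/\sigma_j$. A second-order Taylor expansion of $h$ about $0$ with Lagrange remainder, together with $1 - \cdf(x) = \cdf(-x)$, then gives the claim. (One could instead start from the closed form for $\E\st_\lambda$ of a normal recorded in \Cref{sec:bias-var-deriv}, but the route above produces the factor $\cdf(-\lambda_1/\sigma_j)$ directly.)

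Dividing by $d_j$ gives $\E\hat\beta_j = \frac{2\mu_j}{d_j}\cdf(-\lambda_1/\sigma_j) + \frac{E_t}{d_j}$. Since $d_j \ge n t^{1-\delta}$, the remainder obeys $|E_t|/d_j \le \mu_j^2/(\sqrt{2\pi}\,\sigma_j n t^{1-\delta})$, which after substituting the explicit forms collapses to a constant depending only on $\beta_j^*, n, \sigma_\varepsilon$ times $t^{1/2}$, and hence vanishes as $q_j \to 1^-$ for \emph{every} $\delta \ge 0$. So the limit equals $\lim_{t\to 0^+}\frac{2\beta_j^* n}{n + \lambda_2 t^{2\delta-1}}\,\cdf\!\big(-\tfrac{\lambda_1}{\sigma_\varepsilon\sqrt n}\, t^{\delta-1/2}\big)$. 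For $\delta > \tfrac12$ both $t^{2\delta-1}$ and $t^{\delta-1/2}$ tend to $0$, giving $2\beta_j^*\cdf(0) = \beta_j^*$. For $\delta = \tfrac12$ both exponents vanish, giving $\frac{2n\beta_j^*}{n+\lambda_2}\cdf\!\big(-\tfrac{\lambda_1}{\sigma_\varepsilon\sqrt n}\big)$. For $\delta < \tfrac12$ we have $t^{\delta-1/2}\to\infty$, so if $\lambda_1 > 0$ the argument of $\cdf$ tends to $-\infty$, driving $\cdf(\cdot)\to 0$ against a prefactor bounded by $2|\beta_j^*|$, while if $\lambda_1 = 0$ the prefactor itself tends to $0$ because $\lambda_2 t^{2\delta-1}\to\infty$; either way the limit is $0$. (The one excluded corner is $\lambda_1 = \lambda_2 = 0$, ordinary least squares, where $\hat\beta_j$ is exactly unbiased.)

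The step I expect to be most delicate is not the case analysis but the remainder control in the expansion of $\E\st_{\lambda_1}(Z)$: the crude bound $|\E\st_{\lambda_1}(Z) - \mu_j| \le \lambda_1$ is useless in the regime $\tfrac12 < \delta < 1$ where $d_j \to 0$, and it is precisely the hidden factor $\mu_j/\sigma_j \sim t^{1/2}$ in the $O(\mu_j^2/\sigma_j)$ remainder — a reflection of the near-symmetry of the $\normal(\mu_j,\sigma_j^2)$ density about its vanishing mean — that makes $E_t/d_j \to 0$ uniformly in $\delta$. A secondary nuisance is the behaviour at the boundary values $\lambda_1 = 0$ and $\lambda_2 = 0$, handled above.
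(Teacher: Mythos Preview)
Your proof is correct and takes a genuinely different route from the paper. The paper works directly from the closed form \(\E\st_{\lambda_1}(Z) = -\theta\cdf(\theta/\sigma) - \sigma\pdf(\theta/\sigma) + \gamma\cdf(\gamma/\sigma) + \sigma\pdf(\gamma/\sigma)\) (with \(\theta=-\mu-\lambda_1\), \(\gamma=\mu-\lambda_1\)), splits the \(\cdf\)- and \(\pdf\)-contributions, and evaluates several \(0/0\) and \(\infty\cdot 0\) limits with repeated applications of L'H\^opital's rule, separately in each \(\delta\)-regime. Your second-order Taylor expansion of \(\mu\mapsto\E\operatorname{clip}_{\lambda_1}(\mu+\sigma_j W)\) about \(\mu=0\) packages all of those cancellations into the single remainder bound \(|E_t|\le \mu_j^2/(\sqrt{2\pi}\,\sigma_j)\), which after division by \(d_j\ge n t^{1-\delta}\) is uniformly \(O(t^{1/2})\); this removes the need for L'H\^opital entirely and unifies the three cases. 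The paper's heavier bookkeeping does buy something: the intermediate quantities \(\theta/\sigma\), \(\gamma/\sigma\), \(\theta/d\), etc.\ are reused verbatim in the proof of the companion variance result (\Cref{thm:classbalance-variance}), so some of its cost is amortized. Your observation about the corner \(\lambda_1=\lambda_2=0\) is also apt; the paper's proof does not flag it explicitly.
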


\Cref{thm:classbalance-bias} shows that bias of the elastic net estimator approaches
\(-\beta_j^*\) as \(q_j \rightarrow 1^-\) when \(0 \leq \delta < 1/2\). When \(\delta =
1/2\) (standardization), the estimate approaches a constant that depends on regularization
strength, noise level, and the true strength of the coefficient. For \(\delta > 1/2\), the
estimate is asymptotically unbiased as a by-product of variance dominating in the limit as
\(q_j \rightarrow 1^{-1}\), which suggests that variance-scaling (\(\ell_1\)-norm
normalization) could be problematic in a scenario with much noise and highly imbalanced
features.

In \Cref{thm:classbalance-variance}, we continue by studying the variance in the limit as
\(q_j \rightarrow 1^-\), which shows that the variance of the elastic net estimator tends
to \(\infty\) in the limit unless the scaling parameter \(s_j < 1/2\).

\begin{theorem}
  \label{thm:classbalance-variance}
  Assume the conditions of \Cref{thm:classbalance-bias} hold, except that
  \(\lambda_1 > 0\). Then
  \[
    \lim_{q_j \rightarrow 1^-} \var \hat{\beta}_j =
    \begin{cases}
      0      & \text{if } 0 \leq \delta < \frac{1}{2}, \\
      \infty & \text{if } \delta \geq \frac{1}{2}.
    \end{cases}
  \]
\end{theorem}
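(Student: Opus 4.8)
The plan is to start from the closed form in \Cref{eq:variance}, writing \(\var\hat\beta_j = \var\st_{\lambda_1}(W)/d_j^2\) with \(W := \tilde{\vec x}_j^\T\vec y \sim \normal(\mu_j,\sigma_j^2)\), and then substitute the binary-feature expressions with \(s_j = \nu_j^\delta\), where \(\nu_j = q_j - q_j^2 \to 0^+\) as \(q_j \to 1^-\). This gives \(\mu_j = \beta_j^* n\nu_j^{1-\delta}\), \(\sigma_j^2 = \sigma_\varepsilon^2 n\nu_j^{1-2\delta}\), and \(d_j = n\nu_j^{1-\delta} + \lambda_2\nu_j^{\delta}\); in particular, one records the crude but convenient lower bound \(d_j \ge n\nu_j\), valid for all \(\delta \ge 0\) whenever \(\nu_j \in (0,1)\). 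The argument then splits into the three regimes \(\delta \in [0,1/2)\), \(\delta = 1/2\), and \(\delta > 1/2\), according to the sign of \(1 - 2\delta\).

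For \(\delta \in [0,1/2)\) both \(\mu_j \to 0\) and \(\sigma_j \to 0\). The subtlety here is that the naive bound \(\var\st_{\lambda_1}(W) \le \sigma_j^2\) is not enough: divided by \(d_j^2 \ge n^2\nu_j^2\) it need not vanish. Instead I would exploit \(\lambda_1 > 0\): since \(\lvert\st_{\lambda_1}(w)\rvert \le \lvert w\rvert\,\mathbb{1}\{\lvert w\rvert > \lambda_1\}\), we get \(\var\st_{\lambda_1}(W) \le \E[W^2\,\mathbb{1}\{\lvert W\rvert>\lambda_1\}]\), and because \(\lvert\mu_j\rvert < \lambda_1/2\) for \(\nu_j\) small this reduces, after passing to a standard normal variable, to a Gaussian tail integral bounded by \(C\exp(-\lambda_1^2/(16\sigma_j^2))\). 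Since \(1/\sigma_j^2 = \nu_j^{-(1-2\delta)}/(n\sigma_\varepsilon^2) \to \infty\), this is \(C\exp(-c\,\nu_j^{-(1-2\delta)})\) with \(c>0\), which vanishes faster than \(\nu_j^2\); dividing by \(d_j^2 \ge n^2\nu_j^2\) still gives the limit \(0\).

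For \(\delta > 1/2\) we have \(\sigma_j^2 = \sigma_\varepsilon^2 n\nu_j^{1-2\delta} \to \infty\). Writing \(\st_{\lambda_1}(W) = W - R\) with \(\lvert R\rvert \le \lambda_1\) almost surely, the triangle inequality for the \(L^2\)-norm of centred variables yields \(\var\st_{\lambda_1}(W) \ge (\sigma_j - \lambda_1)^2 \ge \tfrac14\sigma_j^2\) once \(\sigma_j\) is large, hence \(\var\hat\beta_j \ge \sigma_j^2/(4 d_j^2)\). A one-line computation gives \(d_j/\sigma_j = (n\nu_j^{1/2} + \lambda_2\nu_j^{2\delta - 1/2})/(\sigma_\varepsilon\sqrt n)\), and both exponents \(1/2\) and \(2\delta - 1/2\) are positive, so \(d_j/\sigma_j \to 0\) and \(\var\hat\beta_j \to \infty\). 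For the boundary case \(\delta = 1/2\), instead \(\sigma_j^2 \equiv \sigma_\varepsilon^2 n\) is constant and \(\mu_j \to 0\), so \(W \Rightarrow \normal(0, n\sigma_\varepsilon^2)\); using the explicit expression for \(\var\st_{\lambda_1}\) from \Cref{sec:bias-var-deriv} (or dominated convergence via \(\st_{\lambda_1}(W)^2 \le W^2\)), \(\var\st_{\lambda_1}(W) \to v_0 := \var\st_{\lambda_1}\!\big(\normal(0, n\sigma_\varepsilon^2)\big) > 0\), the positivity holding because \(\lambda_1 < \infty\) makes \(\st_{\lambda_1}\) non-constant on the support. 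Since \(d_j = (n + \lambda_2)\nu_j^{1/2} \to 0\), the ratio \(v_0/((n+\lambda_2)^2\nu_j)\) again diverges.

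The step I expect to be the real obstacle is the regime \(0 \le \delta < 1/2\): there the \(1/d_j^2\) factor blows up polynomially, so everything hinges on showing that soft-thresholding suppresses the variance of the numerator \emph{exponentially} fast in \(1/\sigma_j^2\) — which is precisely why the extra hypothesis \(\lambda_1 > 0\), absent from \Cref{thm:classbalance-bias}, is needed here. The remaining cases are comparatively routine once the right lower bound on \(\var\st_{\lambda_1}(W)\) is in hand, the only care being the degenerate-versus-nondegenerate distinction at \(\delta = 1/2\).
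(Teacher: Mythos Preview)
Your proposal is correct and takes a genuinely different route from the paper. The paper expands \(\var\st_{\lambda_1}(Z_j)\) via the explicit error-function formula from \Cref{sec:normally-distributed-noise}, splits \(\var\hat\beta_j\) into many terms, and then handles each term separately---in the regime \(0\le\delta<1/2\) by repeated applications of L'H\^opital's rule, and in the regimes \(\delta=1/2\) and \(\delta>1/2\) by tracking which power of \((q-q^2)\) dominates in each integral piece of \Cref{eq:varthm-var}. Your argument bypasses all of this: for \(\delta<1/2\) you bound \(\var\st_{\lambda_1}(W)\le\E[W^2\mathbf{1}\{|W|>\lambda_1\}]\) and use a Gaussian tail estimate to get exponential decay in \(\sigma_j^{-2}\), which beats the polynomial blow-up of \(1/d_j^2\); for \(\delta>1/2\) you use the \(1\)-Lipschitz property of \(\st_{\lambda_1}\) (via \(\st_{\lambda_1}(W)=W-R\) with \(|R|\le\lambda_1\)) and the triangle inequality for standard deviations to get \(\var\st_{\lambda_1}(W)\ge(\sigma_j-\lambda_1)^2\); and for \(\delta=1/2\) you argue by continuity that \(\var\st_{\lambda_1}(W)\) tends to a strictly positive constant while \(d_j\to0\). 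What your approach buys is a short, probabilistic argument that makes transparent \emph{why} the additional hypothesis \(\lambda_1>0\) is needed (it is exactly what creates the exponential tail suppression in the numerator for \(\delta<1/2\)); what the paper's approach buys is that it works entirely within the closed-form expressions already derived and, in principle, gives access to finer asymptotic constants.
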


Note that \Cref{thm:classbalance-variance} applies only to the case when \(\lambda_1 > 1\).
In \Cref{cor:ridge-variance}~(\Cref{sec:ridge-variance}), we state the corresponding result
for ridge regression.

Taken together, \Cref{thm:classbalance-bias,thm:classbalance-variance,cor:ridge-variance},
indicate that the choice of scaling parameter constitutes a bias--variance trade-off with
respect to \(\delta\): increasing \(\delta\) reduces class-balance bias, but does so at the
cost of increased variance.

In \Cref{fig:bias-var-onedim-lasso}, we now visualize bias, variance, and mean-squared
error for ranges of class balance and various noise-level settings for a lasso problem. The
figure demonstrates the bias--variance trade-off that our asymptotic results suggest and
indicates that the optimal choice of \(\delta\) is related to the noise level in the data.
Since this level is typically unknown and can only be reliably estimated in the
low-dimensional setting, it suggests there might be value in selecting \(\delta\) through
hyper-parameter optimization.\footnote{In \Cref{sec:normalization-tuning} we demonstrate
  the utility of doing so.} In
\Cref{fig:bias-var-onedim-ridge-full}~(\Cref{sec:additional-results-biasvar}) we show
results for ridge regression as well. As expected, it is then \(\delta = 1/2\) that leads
to unbiased estimates.

\begin{figure}[htb]
  \centering
  \includegraphics[]{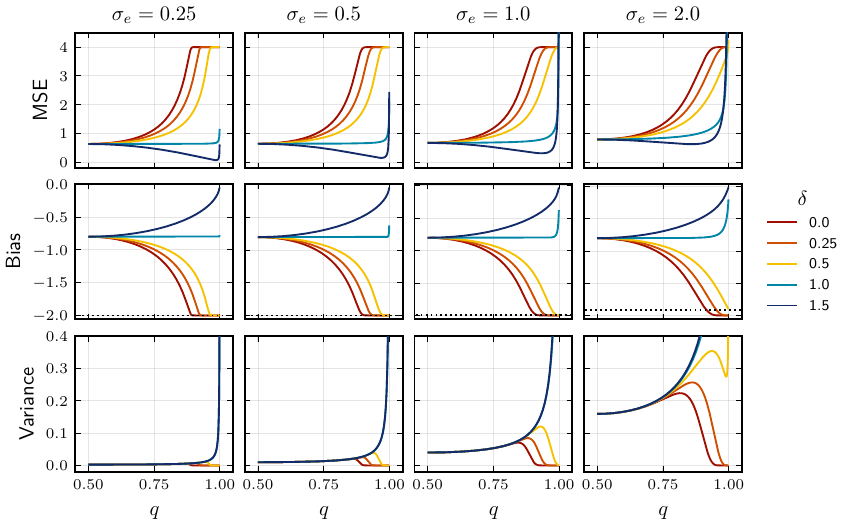}
  \caption{%
    Bias, variance, and mean-squared error for a one-dimensional lasso problem,
    parameterized by noise level (\(\sigma_\varepsilon\)), class balance (\(q\)), and
    scaling (\(\delta\)). Dotted lines represent asymptotic bias of the lasso
    estimator in the case when \(\delta = 1/2\).}
  \label{fig:bias-var-onedim-lasso}
\end{figure}

So far, we have only considered a single binary feature, but in
\Cref{sec:power-fdr-multiple} we present results on power and false discovery rates for
problems with multiple features. In the next section we will also step beyond the
all-binary context and consider mixes of binary and continuous features.

\subsection{Mixed Data}%
\label{sec:mixed-data}

A fundamental problem with mixes of binary and continuous features is deciding how to put
these features on the same scale in order to regularize each type of feature fairly. In
principle, we need to match a one-unit change in the binary feature with some amount of
change in the normal feature. This problem has previously been tackled, albeit from a
different angle, by \citet{gelman2008}, who argued that the common default choice of
presenting standardized regression coefficients unduly emphasizes coefficients from
continuous features.

To setup this situation formally, we will say that the effects of a binary feature
\(\vec{x}_1\) and a normal feature \(\vec{x}_2\) are \emph{comparable} if \(\beta^*_1 =
\kappa \sigma \beta^*_2\), where \(\kappa > 0\) represents the number of standard
deviations of the normal feature we consider to be comparable to one unit on the binary
feature. As an example, assume \(\kappa = 2\). Then, if \(\vec{x}_2\) is sampled from
\(\normal\left(\mu_j, \sigma^2 = (1/2)^2\right)\), the effects of \(\vec{x}_1\) and
\(\vec{x}_2\) are comparable if \(\beta_1^* = 2\sigma \beta_2^* = \beta_2^*\).

The definition above refers to \(\bm{\beta}^*\), but for our regularized estimates we need
\(\hat{\beta}_1 = \kappa\sigma\hat{\beta}_2\) to hold. If we assume that we are in a
noiseless situation (\(\sigma_\varepsilon = 0\)), are standardizing the normal feature, and
that, without loss of generality, \(\bar{x}_1 = 0\), then we need the following equality to
hold:
\begin{equation}
  \label{eq:comparable-effects}
  \hat{\beta}_1 = \kappa\sigma \hat{\beta}_2 \implies \frac{\st_{\lambda_1}(\tilde{\vec{x}}_1^\intercal \vec{y})}{s_1\left(\tilde{\vec{x}}_1^\intercal \tilde{\vec{x}}_1 + \lambda_2\right)}  =\frac{\kappa\sigma \st_{\lambda_1}(\tilde{\vec{x}}_2^\intercal \vec{y})}{s_2\left(\tilde{\vec{x}}_2^\intercal \tilde{\vec{x}}_2 + \lambda_2\right)}  \implies \frac{\st_{\lambda_1}\left(\frac{n\beta_1^* (q - q^2)}{s_1}\right)}{s_1\left(\frac{n(q - q^2)}{s_1^2} + \lambda_2\right)} = \frac{\kappa \st_{\lambda_1}\left(\frac{n\beta_1^*}{\kappa} \right)}{n + \lambda_2}.
\end{equation}
For the lasso (\(\lambda_2 = 0\)) and ridge regression (\(\lambda_1=0\)), we see that the
equation holds for \(s_1 = \kappa (q - q^2)\) and \(s_1 = (q - q^2)^{1/2}\), respectively.
In other words, we achieve comparability in the lasso by scaling each binary feature with
its variance times \(\kappa\). And for ridge regression, we can achieve comparability by
scaling with standard deviation, irrespective of \(\kappa\). For any other choices of
\(s_1\), equality holds only at a fixed level of class balance. Let this level be \(q_0\).
Then, to achieve equality for \(\lambda_2 = 0\), we need \(s_1 =\kappa (q_0 - q_0^2)^{1 -
  \delta}(q - q^2)^\delta\). Similarly, for \(\lambda_1 = 0\), we need \(s_1 = (q_0 -
q_0^2)^{1 - 2\delta} (q - q^2)^\delta\). In the sequel, we will assume that \(q_0 = 1/2\),
to have effects be equivalent for the class-balanced case.

Note that this means that the choice of normalization has an implicit effect on the
relative penalization of binary and normal features---even in the class-balanced case
(\(q_1 = 1/2\)). If we for instance use \(\delta=0\) and fit the lasso, then
\Cref{eq:comparable-effects} for a binary feature with \(q_1=1/2\) becomes
\(4\st_{\lambda_1}\left(n\beta_1^*/4\right) = \kappa \st_{\lambda_1}(n\beta_1^*/\kappa ),\)
which implies \(\kappa = 4\). In other words, the choice of normalization equips our model
with a belief about how binary and normal features should be penalized relative to one
another.

For the rest of this paper, we will use \(\kappa = 2\) and say that the effects are
comparable if the effect of a flip in the binary feature equals the effect of a
two-standard deviation change in the normal feature. We motivate this by an argument by
\citet{gelman2008}, but want to stress that the choice of \(\kappa\) should, if possible,
be based on contextual knowledge of the data and that our results depend only superficially
on this particular setting.

\subsection{Interactions}\label{sec:interactions}

The elastic net can be extended to include interactions. There is previous literature on
this topic~\citep{bien2013,lim2015,zemlianskaia2022}, but it has not considered the
possible influence of normalization. Here, we will consider simple pairwise interactions
with no restriction on the presence of main effects. For our analysis, we let \(\vec{x}_1\)
and \(\vec{x}_2\) be two features of the data and \(\bm{x}_3\) their interaction, so that
\(\beta_3\) represents the interaction effect.

We consider two cases in which we assume that the features are orthogonal and that
\(\vec{x}_1\) is binary with class balance \(q_1\). In the first case, we let \(\bm{x}_2\)
be normal with mean \(\mu\) and variance \(\sigma^2\), and in the second case \(\bm{x}_2\)
be binary with class balance \(q_2\). To construct the interaction feature, we
center\footnote{See \Cref{sec:centering-interactions} for motivation for why we center the
  features before computing the interaction.} the main features and then multiply
element-wise. The elements of the interaction feature are then given by \(x_{3,i} =
(x_{1,i} - \bar{\bm{x}}_1)(x_{2,i} - \bar{\bm{x}}_2)\).

If \(\bm{x}_2\) is normal and both features are centered before computing the interaction
term, the variance becomes \(\sigma^2 (q-q^2)\), which suggests using \(s_3 = \sigma (q -
q^2)^\delta\) along the lines of our previous reasoning. And if \(\bm{x}_2\) is binary,
instead, then similar reasoning suggests using \(s_3 = ((q_1-q_1^2)(q_2-q_2^2))^\delta\).
In \Cref{sec:experiments-interactions}, we study the effects of these choices in simulated
experiments.

\subsection{The Weighted Elastic Net}\label{sec:binary-weighting}

We have so far shown that certain choices of normalization can mitigate the class-balance
bias imposed by the lasso and ridge regularization. But we have also
demonstrated~(\Cref{sec:theory-binary-features}) that there is no (simple) choice of
scaling that can achieve the same effect for the elastic net.
\Cref{eq:noiseless-estimator}, however, suggests a natural alternative to normalization,
which is to use the weighted elastic net, in which we minimize
\[
  \frac{1}{2} \lVert \vec{y} - \beta_0 - \mat{X}\vec{\beta}\rVert_2^2 + \lambda_1 \sum_{j=1}^p u_j |\beta_j| + \frac{\lambda_2}{2} \sum_{j=1}^p v_j \beta_j^2,
\]
with \(\vec{u}\) and \(\vec{v}\) being \(p\)-length vectors of positive scaling factors.
This is equivalent to the standard elastic net for a normalized feature matrix when \(u_j =
s_j\) and \(v_j = s_j^2\), which can be seen by substituting \(\beta_js_j =
\tilde{\beta}_j\) in \Cref{eq:elastic-net} and solving for \(\tilde{\vec{\beta}}\). Note
that we do not need to rescale the coefficients from this problem as we would for the
standard elastic net on normalized data.

This allows us to control class-balance bias by setting our weights according to \(u_j =
v_j = (q_j - q_j^2)^{\omega}\) and counteract it, at least in the noiseless case, with
\(\omega = 1\), which, we want to emphasize, is \emph{not} possible using the standard
elastic net. For the lasso and ridge regression, however, this setting of \(\omega=1\) is
equivalent to using \(\delta = 1\) and \(\delta = 1/2\), respectively, in the standard
elastic net with normalized data. Results analogous to those in
\Cref{sec:theory-binary-features} can be attained with a few small modifications for the
weighted elastic net case. Starting with selection probability, we can set \(s_j = 1\) and
replace \(\lambda_1\) with \(\lambda_1 u_j = \lambda_1(q_j-q_j^2)^\omega\) in
\Cref{eq:selection-probability}, which shows that \(\omega\) and \(\delta\) have
interchangeable effects for selection probability.

As far as expected value and variance of the weighted elastic net estimator is concerned,
the same expressions apply directly in the case of the weighted elastic net given \(s_j =
1\) for all \(j\) and replacing \(\lambda_1\) as in the previous paragraph and
\(\lambda_2\) with \(\lambda_2 (q_j - q_j^2)^\omega\). On the other hand, the asymptotic
results differ slightly as we now show.

\begin{theorem}
  \label{thm:weighted-elasticnet-bias-variance}
  Let \(\vec{x}_j\) be a binary feature with class balance \(q_j \in (0, 1)\) and take
  \(\lambda_1 > 0\), \(\lambda_2 > 0\), and \(\sigma_\varepsilon > 0\). For the
  weighted elastic net with weights \(u_j = v_j = (q_j-q_j^2)^\omega\) and \(\omega \geq 0\), it holds that
  \[
    \lim_{q_j \rightarrow 1^-} \E \hat{\beta}_j =
    \begin{cases}
      0                              & \text{if } 0 \leq \omega < 1, \\
      \frac{\beta^*n}{n + \lambda_2} & \text{if } \omega = 1,        \\
      \beta^*                        & \text{if } \omega > 1,
    \end{cases}
    \qquad
    \lim_{q_j \rightarrow 1^-} \var \hat{\beta}_j =
    \begin{cases}
      0      & \text{if } 0 \leq \omega < \frac{1}{2}, \\
      \infty & \text{if } \omega \geq \frac{1}{2}.
    \end{cases}
  \]
\end{theorem}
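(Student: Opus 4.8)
The plan is to reduce the weighted elastic net to the already-analyzed normalized elastic net via the substitution indicated in the text, and then apply Theorems \ref{thm:classbalance-bias} and \ref{thm:classbalance-variance} with the appropriate modifications. Specifically, as argued just before the theorem statement, the weighted elastic net estimator for a single orthogonal binary feature has the same form as the estimator in \Cref{eq:noiseless-estimator} and \Cref{eq:bias,eq:variance}, but with $s_j$ replaced by $1$, $\lambda_1$ replaced by $\lambda_1 u_j = \lambda_1 (q_j - q_j^2)^\omega$, and $\lambda_2$ replaced by $\lambda_2 v_j = \lambda_2 (q_j - q_j^2)^\omega$. From \Cref{eq:z-d} with $s_j = 1$, we get $\tilde{\vec{x}}_j^\T \vec{y} \sim \normal\!\left(\beta_j^* n(q_j - q_j^2),\; \sigma_\varepsilon^2 n (q_j - q_j^2)\right)$, so the numerator is $\st_{\lambda_1(q_j-q_j^2)^\omega}(\tilde{\vec{x}}_j^\T\vec{y})$ and the denominator is $d_j = n(q_j - q_j^2) + \lambda_2(q_j - q_j^2)^\omega$.

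First I would handle the expectation. Write $t = q_j - q_j^2 \to 0^+$ as $q_j \to 1^-$. The key quantity is the effective threshold relative to the noise scale: the soft-thresholding argument has mean $\mu_j = \beta_j^* n t$ and standard deviation $\sigma_j = \sigma_\varepsilon \sqrt{n}\, t^{1/2}$, while the threshold is $\lambda_1 t^\omega$. Factoring $t^{1/2}$ out, the ``signal-to-threshold'' comparison is governed by $t^{1/2}$ (from $\mu_j/\sigma_j \propto t^{1/2}$) versus $t^{\omega - 1/2}$ (from threshold$/\sigma_j$). Thus the relevant breakpoints are $\omega = 1/2$ for the thresholding effect and a separate comparison for the denominator: $d_j = n t + \lambda_2 t^\omega \sim n t$ when $\omega < 1$ and $\sim \lambda_2 t^\omega$ when $\omega > 1$, with both terms of order $t$ when $\omega = 1$. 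Combining, for $\omega < 1$ the denominator behaves like $nt$ while $\E \st_{\lambda_1 t^\omega}(\cdot)$ is dominated by the soft-thresholding killing the small signal (for $\omega < 1/2$ even the threshold $t^\omega \to 0$ slower than we'd need, but the signal $\mu_j = \beta^*_j nt$ shrinks like $t$, which for $0\le\omega<1/2$ still vanishes faster relative to... ) — here I would reuse the explicit asymptotic expansions for $\E \st_\lambda(x)$ from \Cref{sec:normally-distributed-noise} that were used to prove \Cref{thm:classbalance-bias}, plugging in the modified parameters. The three cases $\omega < 1$, $\omega = 1$, $\omega > 1$ for the limit of $\E\hat\beta_j$ then fall out: in the $\omega=1$ case the $t^\omega = t$ cancels against the $nt$ in the denominator, leaving $\beta^* n/(n+\lambda_2)$ modulated by a factor from the soft-thresholding that tends to $1$ because the threshold $\lambda_1 t$ vanishes much faster than... — this needs care, and I would check that $\lambda_1 t^\omega / \sigma_j = \lambda_1 t^{\omega - 1/2}/(\sigma_\varepsilon\sqrt n) \to 0$ precisely when $\omega > 1/2$, which holds here since $\omega = 1$.

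For the variance, I would similarly invoke the expression for $\var \st_\lambda(x)$ and the proof structure of \Cref{thm:classbalance-variance}. We have $\var\hat\beta_j = \var\st_{\lambda_1 t^\omega}(\tilde{\vec x}_j^\T\vec y)/d_j^2$ with $d_j \sim nt$ (for $\omega \le 1$) or $d_j \sim \lambda_2 t^\omega$ (for $\omega \ge 1$) and $\var\st_{\lambda_1 t^\omega}(\cdot) \le \sigma_j^2 = \sigma_\varepsilon^2 n t$; more precisely the variance of the soft-thresholded variable is of order $\sigma_j^2$ up to a multiplicative factor bounded between a constant and $1$ depending on how the threshold compares with $\sigma_j$. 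So $\var\hat\beta_j$ is of order $t/(nt)^2 = 1/(n^2 t)$ when $\omega < 1/2$... wait — that diverges. Here I must be careful: for $\omega < 1/2$ the threshold $\lambda_1 t^\omega$ dwarfs the noise scale $\sigma_\varepsilon\sqrt n\, t^{1/2}$ (since $t^\omega/t^{1/2} = t^{\omega - 1/2} \to \infty$), so the soft-thresholding sets the estimator to zero with probability tending to one and $\var\st_{\lambda_1 t^\omega}(\cdot)$ decays much faster than $\sigma_j^2$ — exponentially, via the Gaussian tail $\cdf(-\lambda_1 t^{\omega-1/2}/(\sigma_\varepsilon\sqrt n))$ — which overwhelms the polynomial $1/(n^2 t)$ blow-up of $1/d_j^2$, giving the limit $0$. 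For $\omega \ge 1/2$, the threshold no longer dominates the noise, so $\var\st_{\lambda_1 t^\omega}(\cdot)$ stays of order $\sigma_j^2 = \sigma_\varepsilon^2 n t$, and then $\var\hat\beta_j \asymp t/(nt)^2 \to \infty$ (or $\asymp t/(\lambda_2 t^\omega)^2 = t^{1-2\omega}/\lambda_2^2 \to \infty$ when $\omega \ge 1$), yielding the $\infty$ case.

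The main obstacle I anticipate is the $\omega < 1/2$ variance case: one must show that the exponentially small factor from the Gaussian tail genuinely beats the polynomial divergence $1/d_j^2 \asymp 1/(n^2 t^2)$ and also that the contribution to the variance from the rare event $\{\hat\beta_j \ne 0\}$ (where the realized value can be moderately large) does not spoil this — i.e. a careful bound on $\E[\st_{\lambda_1 t^\omega}(\tilde{\vec x}_j^\T\vec y)^2]$, splitting into the zero region and the tail region, with the tail contribution controlled by a Gaussian tail integral of the form $\int_{\lambda_1 t^\omega}^\infty (z - \lambda_1 t^\omega)^2 \pdf((z - \mu_j)/\sigma_j)\,dz/\sigma_j$, which is $O(\sigma_j^2 e^{-c t^{2\omega - 1}})$ and hence negligible against $t^2$. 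Everything else is bookkeeping on the three exponent comparisons ($\omega$ vs $1/2$ for thresholding, $\omega$ vs $1$ for the denominator), and can be lifted almost verbatim from the proofs of \Cref{thm:classbalance-bias,thm:classbalance-variance} by the parameter substitution $\lambda_1 \mapsto \lambda_1 t^\omega$, $\lambda_2 \mapsto \lambda_2 t^\omega$, $s_j \mapsto 1$; I would present the reduction explicitly and then only spell out the steps where the new $\omega$-dependence changes the location of the breakpoints.
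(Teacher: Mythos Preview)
Your overall plan---reduce to the normalized setting by the substitution \(s_j=1\), \(\lambda_1\mapsto\lambda_1(q_j-q_j^2)^\omega\), \(\lambda_2\mapsto\lambda_2(q_j-q_j^2)^\omega\), and then rerun the computations from the proofs of \Cref{thm:classbalance-bias,thm:classbalance-variance}---is exactly what the paper does. The identification of the two relevant comparisons (threshold vs.\ noise governed by \(\omega\) vs.\ \(1/2\); the two terms in \(d_j\) governed by \(\omega\) vs.\ \(1\)) is also correct. But there is a concrete error in how you carry out the second comparison.

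You write that \(d_j = nt + \lambda_2 t^{\omega}\sim nt\) when \(\omega<1\) and \(d_j\sim\lambda_2 t^{\omega}\) when \(\omega>1\), with \(t=q_j-q_j^2\to 0^+\). This is backwards. As \(t\to 0^+\), the term with the \emph{smaller} exponent dominates: for \(\omega<1\) one has \(t^{\omega}\gg t\) and hence \(d_j\sim\lambda_2 t^{\omega}\); for \(\omega>1\) one has \(t\gg t^{\omega}\) and hence \(d_j\sim nt\). The error is not harmless for the expectation. In the range \(1/2<\omega<1\) the threshold is negligible relative to the noise (since \(\lambda_1 t^{\omega}/\sigma_j\propto t^{\omega-1/2}\to 0\)), so \(\E\st_{\lambda_1 t^{\omega}}(\tilde{\vec{x}}_j^\T\vec{y})\) is asymptotically \(\mu_j=\beta_j^*nt\). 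With your (incorrect) \(d_j\sim nt\) this would give \(\E\hat\beta_j\to\beta_j^*\), contradicting the claimed limit \(0\); with the correct \(d_j\sim\lambda_2 t^{\omega}\) one gets \(\E\hat\beta_j\sim \beta_j^* n t^{1-\omega}/\lambda_2\to 0\), which is the desired conclusion. (Equivalently, in the paper's formulation this is the cancellation between the \(\Phi(\theta/\sigma)\) and \(\Phi(\gamma/\sigma)\) terms after dividing numerator and denominator of \(\theta/d\), \(\gamma/d\) by \((q_j-q_j^2)^{\omega-1}\).)

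For the variance the swapped asymptotics happen not to change the answer, because in every regime both candidate orders for \(d_j\) lead to \(1/d_j^2\) diverging only polynomially in \(t\), and you correctly argue that for \(\omega<1/2\) the Gaussian tail makes \(\E[\st_{\lambda_1 t^{\omega}}(\tilde{\vec{x}}_j^\T\vec{y})^2]\) decay like \(\exp(-c\,t^{2\omega-1})\), which beats any polynomial. Your treatment of that case is in fact more careful than the paper's brief sketch. So: fix the direction of the \(d_j\) comparison, and the argument goes through along the lines you describe.
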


This result for expected value is similar to the one for the unweighted but normalized
elastic net. The only difference arises in the case when \(\omega = 1\), in which case the
limit is unaffected by \(\lambda_1\) in the case of the weighted elastic net. For variance,
the result mimics the result for the elastic net with normalization. The results for bias,
variance, and mean-squared error for the weighted elastic net are similar to those in
\Cref{fig:bias-var-onedim-lasso} and are plotted in
\Cref{fig:binary-onedim-bias-var-elnet}~(\Cref{sec:additional-results-biasvar}).

This wraps up our theoretical contributions in our paper. In the coming section, we will
turn to empirical experiments and demonstrate that our theoretical results both hold in
practice and, at least empirically, extend beyond our current assumptions.

\section{Experiments}
\label{sec:experiments}

In the following sections we present the results of our experiments. For all simulated data
we generate our response vector according to \(\vec{y} = \mat{X}\vec{\beta}^* +
\vec{\varepsilon},\) with \(\vec{\varepsilon} \sim \normal(\vec{0}, \sigma_\varepsilon^2
\mat{I})\). We consider two types of features: binary (quasi-Bernoulli) and quasi-normal
features. To generate binary vectors, we sample \(\ceil{nq_j}\) indexes uniformly at random
without replacement from \([n]\) and set the corresponding elements to one and the
remaining ones to zero. To generate quasi-normal features, we generate a linear sequence
\(\vec{w}\) with \(n\) values from \(10^{-4}\) to \(1 - 10^{-4}\), set \(x_{ij} =
\cdf^{-1}(w_i)\), and then shuffle the elements of \(\vec{x}_j\) uniformly at random.

We use a coordinate solver from \pkg{Lasso.jl}~\citep{kornblith2024} to optimize our
models, which we have based on the algorithm outlined by \citet{friedman2010}. All
experiments were coded using the Julia programming language~\citep{bezanson2017} and the
code is available
in the supplementary material and will be published online upon acceptance.
All simulated experiments were run for at least 100 iterations and, unless stated
otherwise, are presented as means $\pm$ one standard deviation (using bars or ribbons).

\subsection{Normalization in Lasso and Ridge Regression}%
\label{sec:experiments-lassoridge}

In this section we consider fitting the lasso and ridge regression to normalized datasets.
To normalize the data, we standardize all quasi-normal features. For binary features, we
center by mean and scale by \(s_j \propto (q_j-q_j^2)^\delta\).

\subsubsection{Variability and Bias in Estimates}

In our first experiment, we consider fitting the lasso to a simulated dataset with
\(n=500\) observations and \(p = \num{1000}\) features. The first 20 features correspond to
signals, with \(\beta_j^* = 1\), and otherwise we set \(\beta_j^*\) to 0. Furthermore, we
set the class balance of the first 20 features so that it increases geometrically from 0.5
to 0.99. For all other features we pick \(q_j\) uniformly at random in \([0.5,0.99]\). We
estimate the regression coefficients using the lasso, setting \(\lambda_1 = 2
\sigma_\varepsilon \sqrt{2 \log p }\), with \(\sigma_\varepsilon\) set to achieve a
signal-to-noise ratio (SNR) of 2. In addition, we introduce correlation between the
features by copying the first \(\ceil{\rho n/2}\) values from the first feature to each of
the following features.

The results~(\Cref{fig:binary-decreasing}) show that class balance has considerable effect,
particularly in the case of no scaling (\(\delta = 0\)), which corroborates our results in
\Cref{sec:theory-binary-features}. At \(q_j=0.99\), for instance, the estimate
(\(\hat{\beta}_{20}\)) is consistently zero when \(\delta = 0\). For larger values of
\(\delta\), we see that class imbalance leads to increased estimation variance in
accordance with our theory for the orthogonal case. The effect of correlation appears to
have no effect on this class-balance bias.

\begin{figure}[htpb]
  \centering
  \includegraphics[]{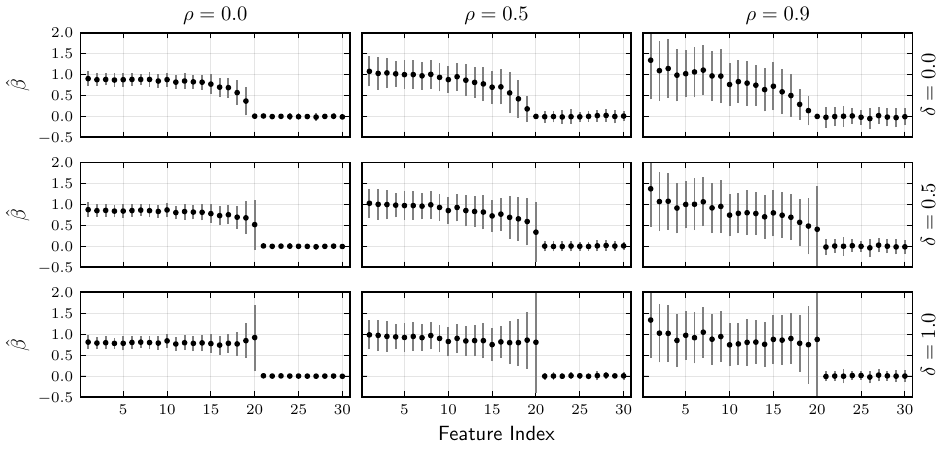}
  \caption{%
    Regression coefficients for a lasso problem with binary data with \(n = 500\) and \(p =
    \num{1000}\). For \(j \in \{1,2,\dots,p\}\), we set \(\beta_j^* = 1\) and
    let \(q_j\) increase geometrically from 0.5 to 0.99. For the remaining features,
    we pick \(q_j\) uniformly at random from \([0.5, 0.99]\) and
    set \(\beta_j^* = 0\). We show only the first 30 coefficients.
  }
  \label{fig:binary-decreasing}
\end{figure}

\subsubsection{Predictive Performance}
\label{sec:experiments-predictive-performance}

In this section we examine the influence of normalization on predictive performance for
three different datasets: \data{rhee2006}~\citep{rhee2006},
\data{eunite2001}~\citep{chen2004}, and
\data{triazines}~\citep{hirst1994,king1995}.\footnote{See \Cref{sec:data-summary} for
  details about these datasets.} We present the results for lasso and ridge regression in
\Cref{fig:hyperopt-contours}, which shows contour plots of the validation set error in
terms of normalized mean-squared error~(NMSE). We see that optimal setting of \(\delta\)
differs between the different datasets: for \data{eunite2001}, both the lasso and ridge are
quite insensitive to the type of normalization, and low error is attainable for the full
range of \(\delta\). For \data{rhee2006}, this holds for the lasso too, but not for ridge
regression, where a value in approximately \([0,3]\) is optimal. Finally, for
\data{triazines}, the problem is quite sensitive to the choice of \(\delta\) as well as the
type of model used.

\begin{figure}[htpb]
  \centering
  \includegraphics[]{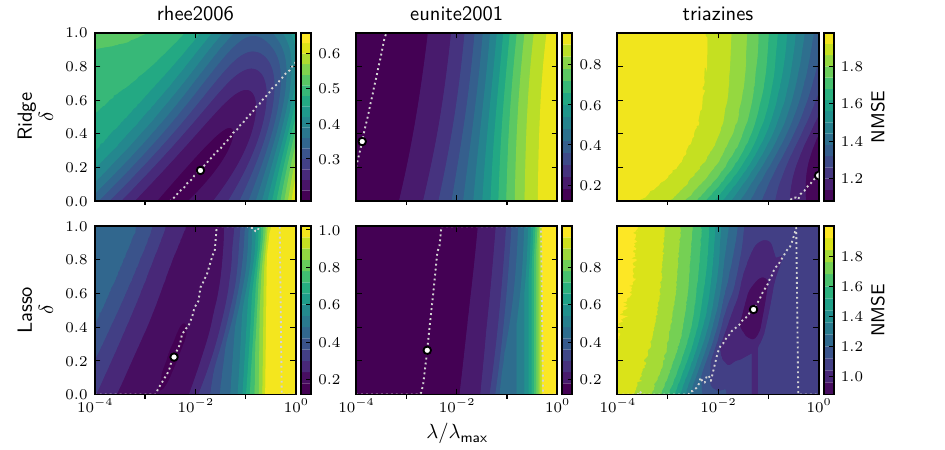}
  \caption{%
    Contour plots of normalized validation set mean-squared error (NMSE) for
    \(\delta\) and \(\lambda\) in lasso and ridge regression on three real data
    sets: \data{rhee2006}, \data{eunite2001}, and \data{triazines}. The dotted
    path shows the smallest NMSE as a function of \(\lambda\) and the circles
    mark combinations with the lowest error.
  }
  \label{fig:hyperopt-contours}
\end{figure}

Also see \Cref{sec:predictive-performance-support}, where we show how the support size of
the lasso solutions vary with \(\delta\) and \(\lambda\) and
\Cref{sec:predictive-performance-simulated}, where we complement these results with
experiments on simulated data under various class balances and signal-to-noise ratios,
again showing that normalization has a strong impact on predictive performance.

\subsubsection{Mixed Data}\label{sec:experiments-mixed-data}

In \Cref{sec:mixed-data} we discussed the issue of normalizing mixed data. Here, we examine
this issue empirically. We construct a quasi-normal feature with mean zero and standard
deviation 1/2 and a binary feature with varying class balance \(q_j\). We set the
signal-to-noise ratio to 0.5 and use \(n = \num{1000}\). These features are constructed so
that their effects are comparable under the notion of comparability that we introduced in
\Cref{sec:mixed-data} using \(\kappa = 2\). In order to preserve the comparability for the
baseline case when we have perfect class balance, we scale by \(s_j = 2 \times
(1/4)^{1-\delta}(q_j-q_j^2)^\delta\). Finally, we set \(\lambda\) to
\(\lambda_\text{max}/2\) and \(2\lambda_\text{max}\) for lasso and ridge regression
respectively.

The results~(\Cref{fig:lasso-ridge-comparison}) reflect our theoretical results from
\Cref{sec:theory}. In the case of the lasso, we need \(\delta =1\) (variance scaling) to
avoid the effect of class imbalance, whereas for ridge we instead need \(\delta =1/2\)
(standardization). As our theory suggests, this extra scaling mitigates this class-balance
bias at the cost of added variance.

\begin{figure}[htpb]
  \centering
  \includegraphics{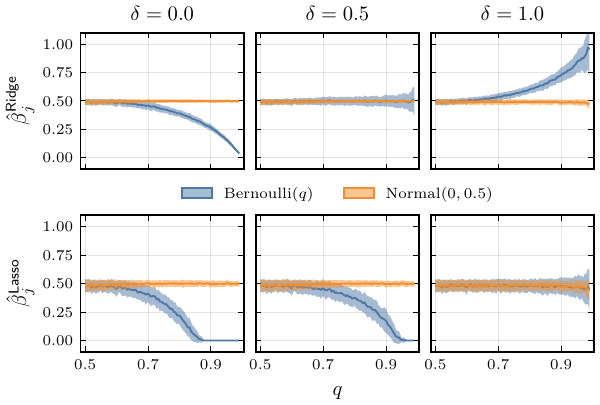}
  \caption{%
    Lasso and ridge estimates for a two-dimensional problem where one feature is a binary
    feature with class balance \(q_j\) (\(\bernoulli(q_j)\)) and the other is quasi-normal
    with standard deviation 1/2, (\(\normal(0, 0.5)\)).
  }
  \label{fig:lasso-ridge-comparison}
\end{figure}

\subsubsection{Interactions}\label{sec:experiments-interactions}

Next, we study the effects of normalization and class balance on interactions in the lasso.
Our example consists of a two-feature problem with an added interaction term given by
\(x_{i3} = x_{i1}x_{i2}\). The first feature is binary with class balance \(q\) and the
second quasi-normal with standard deviation 0.5. We use \(n=1000\), \(\lambda_1 = n/4\),
and normalize the binary feature by mean-centering and scaling by \(\kappa (q - q^2)\),
using \(\kappa = 2\). We consider two different strategies for choosing \(s_3\): in the
first strategy, which we call \emph{Strategy 1}, we simply standardize the resulting
interaction feature. This is a common strategy used, for instance in
\citet{bien2013,lim2015}. In the second strategy, \emph{Strategy 2}, we center with mean
and scale with \(s_1s_2\) (the product of the scales of the binary and normal features).

The results~(\Cref{fig:interactions}) show that only Strategy 2 estimates the effect of the
interaction correctly. Strategy 1, meanwhile, only selects the correct model if the class
balance of the binary feature is close to 1/2 and in general shrinks the coefficient too
much.

\begin{figure}[htpb]
  \centering
  \includegraphics[]{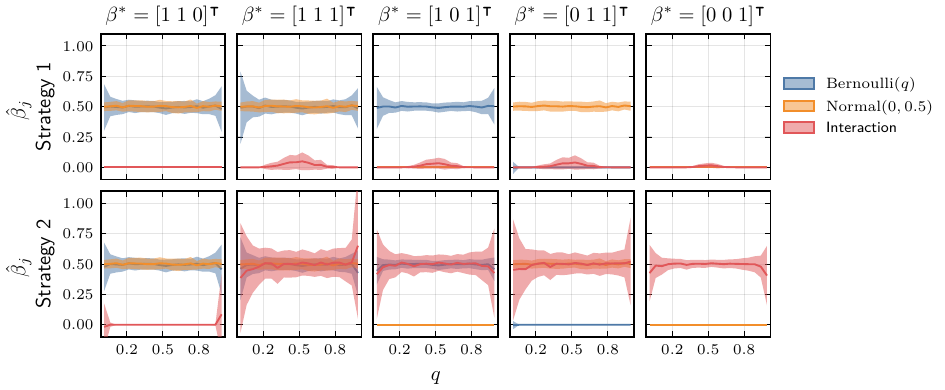}
  \caption{%
    Lasso estimates for a problem with a binary feature, a quasi-normal
    feature, and an interaction feature. We have varied the true signal
    \(\bm{\beta}^*\) and use two different normalization strategies for the
    interaction, where
    Strategy 1 represents standardization and Strategy 2 is mean-centering
    together with scaling by \(s_1 s_2\).
  }
  \label{fig:interactions}
\end{figure}

\subsection{The Weighted Elastic Net}

The weighted elastic net can be used as an alternative to normalization to correct for
class balance bias when \(\lambda_1 > 0\) and \(\lambda_2 >0\). To simplify the
presentation, we parameterize the elastic net as \(\lambda_1 = \alpha \lambda \) and
\(\lambda_2 = (1-\alpha) \lambda\), so that \(\alpha\) controls the balance between the
ridge and lasso. We conduct an experiment with the same setup as in
\Cref{sec:experiments-mixed-data}, but here we use the weighted elastic net instead with
\(\alpha = 0.5\). We use \(n=1000\) and vary \(\omega\), using the weights \(u_j = v_j =
(q_j - q_j^2)^{\omega}\) as we suggested in \Cref{sec:binary-weighting}. Our results
(\Cref{fig:mixed-data-elnet}) show that \(\omega = 1\) leads to seemingly unbiased
estimates.

\begin{figure}[htpb]
  \centering
  \includegraphics{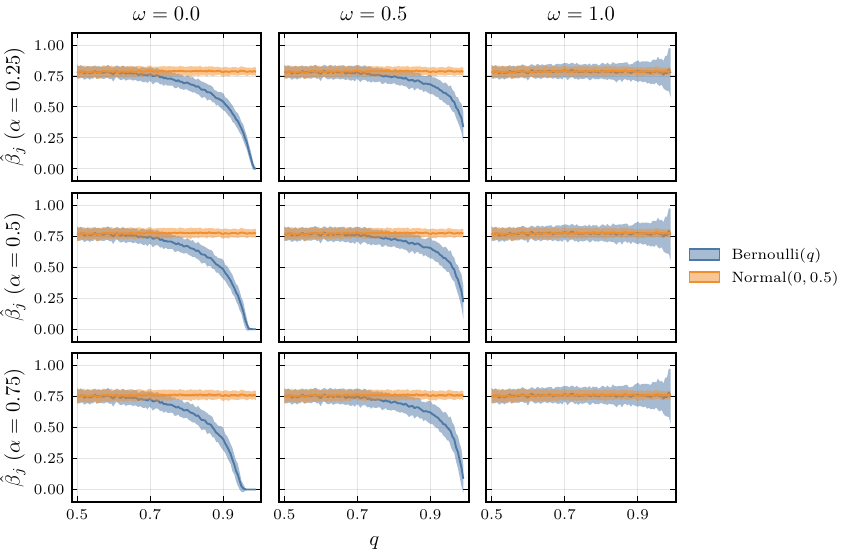}
  \caption{%
    Weighted elastic net estimates for \(\alpha = 0.5\) for a problem with a binary
    feature with class balance \(q\) (\(\bernoulli(q)\)) and quasi-normal
    with standard deviation 1/2 (\(\normal(0, 0.5)\)). \(\omega\) indicates
    the scaling of the penalty weights.
  }
  \label{fig:mixed-data-elnet}
\end{figure}

\subsection{Additional Results}

In \Cref{sec:additional-experiments} we present extended results of our experiments as well
as several additional experiments. For instance, we also consider an experiment on the
Boston housing dataset in \Cref{sec:dichotomization}, where we try to estimate the effect
of normalization on dichotomized versions of the features and find that \(\delta = 1\)
leads to feature ranks that best correspond to the linear regression solution.

\section{Discussion}


This is the first paper to study the effects of normalization in lasso, ridge, and elastic
net regression with binary data. We have discovered that the class balance (proportion of
ones) of these binary features has a pronounced effect on both lasso and ridge estimates
and that this effect depends on the type of normalization used. For the lasso, for
instance, features with large class imbalances stand little chance of being selected if the
binary features are standardized or, to an even greater extent, if they are not scaled at
all---even if their relationships with the response are strong. Not scaling binary features
is a common approach in practice, for instance recommended in the \pkg{scikit-learn}
documentation when the data is sparse~\citep{scikit-learndevelopers2025}. As is
standardization, which is the default in many software packages for the lasso, such as
\pkg{glmnet}~\citep{friedman2010}, as well as the practice taken by countless applications
in research. All in all, this means that the class-balance bias effect is likely pervasive
in practice and risks having already influenced the conclusions drawn in analyses of many
datasets.

The driver of this bias is the relationship between the variance of the feature and type of
normalization. This works as expected for normally distributed features. But for binary
features it means that a one-unit change is treated differently depending on the
corresponding feature's class balance, which we believe may surprise some. We have,
however, shown that scaling binary features with standard deviation in the case of ridge
regression and variance in the case of the lasso mitigates this effect, but that doing so
comes at the price of increased variance. This effectively means that the choice of
normalization constitutes a bias--variance trade-off.

We have also studied the case of mixed data: designs that include both binary and normally
distributed features~(\Cref{sec:mixed-data}). In this setting, our first finding is that
there is an implicit relationship between the choice of normalization and the manner in
which regularization affects binary vis-à-vis normally distributed features, even when the
binary feature is perfectly balanced. The choice of max--abs normalization, for instance,
leads to a specific weighting of the effects of binary features relative to those of normal
features.

For interactions between binary and normal features~(\Cref{sec:experiments-interactions}),
our conclusion is that the interaction feature---contrary to what recent literature on
interactions in the lasso recommends---should be scaled with the \emph{product} of the
standard deviation of the normal feature and variance of the binary feature to avoid this
effect of class imbalance. We have not seen this recommendation in the literature before,
but it is a natural extension of our other results.

We note that our theoretical results are limited by a few assumptions: 1) a fixed feature
matrix \(\bm{X}\), 2) normal and independent errors, and 3) orthogonal features. The first
and second of these assumptions are standard in the literature. The third assumption on
orthogonality, however, is strong and rarely satisfied in practice. Yet, as we show in
\Cref{sec:orthogonality-assumption} and our experiments, the assumption does not in fact
appear to be restrictive for our results, which, at least empirically, hold under much more
general settings. We have also focused on the case of binary and continuous features here,
but are convinced that categorical features are also of interest and might raise additional
challenges with respect to normalization. Finally, most of our results are restricted to
least-squares loss, but since all generalized linear models (GLMs) are parameterized by the
linear predictor, which we have shown to depend directly on class balance, we believe that
our results are also relevant for other loss functions. Our initial results in
\Cref{sec:normalization-tuning} seem to support this claim, but we defer further
investigation of this to future work.

Regularized regression models are widely used in practice, and are staples of popular
machine learning and statistical software packages such as
\pkg{glmnet}~\citep{friedman2010}, \pkg{scikit-learn}~\citep{pedregosa2011},
\pkg{mlpack}~\citep{curtin2023}, \pkg{skglm}~\citep{bertrand2022},
\pkg{LIBLINEAR}~\citep{fan2008a}, and \pkg{MATLAB}~\citep{themathworksinc.2022}. Our
results suggest that the choice of normalization is an important aspect of using these
models that, in spite of the popularity of these methods, has so far been overlooked. We
hope that our results will motivate researchers and practitioners to consider the choice of
normalization more carefully in the future.

\bibliography{main}
\bibliographystyle{tmlr}

\appendix

\section{Orthogonality Assumption}
\label{sec:orthogonality-assumption}

A key assumption in our theoretical results is that the features are orthogonal to one
another. In general, this is a strong assumption that is rarely satisfied in practice. In
the high-dimensional setting with binary data, it is even impossible to achieve. In spite
of this, it turns out that the assumption holds little bearing on our results.

The primary reason for this is a well-known behavior of regularized estimators when
features are correlated. Since information about the effect is shared between the
correlated features, the objective can attain a lower value by favoring one of the features
over the others. The effect is particularly strong in the lasso, which is known to select
only one of the correlated features and ignore the others given a sufficiently large
correlation and penalty strength.

A second reason for why the assumption is not as restrictive as it may seem is that the
correlation between two features, at least one being binary, tends to zero as class balance
increases towards one. We first demonstrate this empirically in the following experiment.
We consider a setting with two binary features: the first, \(\bm{x}_1\) with class balance
\(q_1 = 0.5\), and the second, \(\bm{x}_2\), with varying balance \(q_2 \in [0.5, 0.9]\).
We set the true effect to \(\beta_1^* = \beta_2^* = 1\) and the level of correlation,
\(\rho\) to three different levels (0, 0.4, and 0.6). The noise level
\(\sigma_\varepsilon\) is set to obtain a signal-to-noise ratio of 1. We generate
\(n=\num{10000}\) observations and fit the lasso with \(\lambda = \lambda_\text{max}/2\).

The results are shown in \Cref{fig:orthogonality} where we see that the effect of
correlation has no impact on the shrinkage imposed from decreasing class balance of the
second feature. The results in fact suggest that the effect of \(q_2\) is \emph{stronger}
when the features are correlated, which is due to the nature of the lasso that we
previously discussed.

\begin{figure}[htpb]
  \centering
  \includegraphics[]{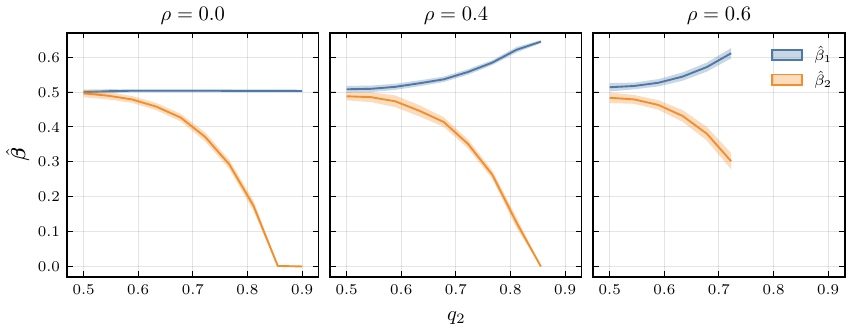}
  \caption{%
    Estimates of the regression coefficients from the lasso,
    \(\hat{\vec{\beta}}\), for the two features in the experiment in
    \Cref{sec:orthogonality-assumption}. The first feature is binary with class
    balance \(q_1 = 0.5\) and the second is binary with class balance \(q_2 \in [0.5, 0.9]\).
    The features are correlated with correlation \(\rho\). The plot shows means and
    95\% normal confidence intervals averaged over 100 iterations. Note
    that it's impossible to achieve high levels of correlation when the second
    feature is highly imbalanced, which is why the results for \(\rho = 0.4\) and \(0.6\)
    do not cover the whole range of \(q_2\) values.
  }
  \label{fig:orthogonality}
\end{figure}

In the following theorems and corollary, we provide further evidence of this behavior by
showing that the correlation between a binary feature and a continuous or binary feature
tends to zero as the class balance increases.

In conclusion, although the orthogonality assumption is strong in general, unrealistic, and
in many other areas of statistics make results hard to generalize to a wider setting, it
does not seem to have a significant impact on our results.

\begin{theorem}[Correlation for Dichotomized Normal Variables]
  Let \(X\) and \(Y\) be two standard normal random variables with correlation \(\rho\).
  Define \(Z = \bm{1}[Y > \alpha]\) where \(\alpha = \cdf^{-1}(q)\) is the
  quantile at which \(Y\) is dichotomized.
  Then
  \[
    \cor(X,Z)
    = \frac{\rho\pdf(\alpha)}{\sqrt{q \bigl(1 - q\bigr)}} \rightarrow 0 \quad\text{as}\quad q \rightarrow 1
  \]
\end{theorem}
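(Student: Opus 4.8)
The plan is to compute $\cor(X,Z)$ directly from the definition $\cor(X,Z) = \cov(X,Z)/\sqrt{\var(X)\var(Z)}$, handle each of the three factors, and then take the limit $q \to 1$. Since $X$ is standard normal, $\var(X) = 1$. Since $Z = \bm{1}[Y > \alpha]$ is Bernoulli with success probability $\Pr(Y > \alpha) = 1 - \cdf(\alpha) = 1 - q$ (using $\alpha = \cdf^{-1}(q)$), we immediately get $\var(Z) = q(1-q)$, which produces the denominator $\sqrt{q(1-q)}$ in the claimed formula. So the only real work is the covariance.

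For the covariance, write $\cov(X,Z) = \E[XZ] - \E[X]\E[Z] = \E[XZ]$ since $\E[X] = 0$. Then $\E[XZ] = \E[X \bm{1}[Y > \alpha]]$. The key step is to evaluate this using the bivariate normal structure: condition on $Y$ and use $\E[X \mid Y] = \rho Y$ (the regression of one standard normal on a correlated standard normal), so that
\[
  \E[X \bm{1}[Y > \alpha]] = \E[\rho Y \,\bm{1}[Y > \alpha]] = \rho \int_\alpha^\infty y\, \pdf(y)\, \du y.
\]
The remaining integral is elementary: since $\frac{\du}{\du y}\pdf(y) = -y\,\pdf(y)$, we have $\int_\alpha^\infty y\,\pdf(y)\,\du y = \pdf(\alpha)$. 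Hence $\cov(X,Z) = \rho\,\pdf(\alpha)$, and assembling the three pieces gives $\cor(X,Z) = \rho\,\pdf(\alpha)/\sqrt{q(1-q)}$, exactly the stated closed form.

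It remains to show this tends to $0$ as $q \to 1$. As $q \to 1^-$, $\alpha = \cdf^{-1}(q) \to \infty$, so $\pdf(\alpha) \to 0$, while $\sqrt{q(1-q)} \to 0$ as well, so the ratio is a $0/0$ form and requires a genuine (if short) argument. The cleanest route is to use the standard Mills-ratio tail bound $1 - \cdf(\alpha) \leq \pdf(\alpha)/\alpha$ for $\alpha > 0$, equivalently $1 - q \leq \pdf(\alpha)/\alpha$; this gives $\sqrt{1-q} \leq \sqrt{\pdf(\alpha)/\alpha}$, hence
\[
  \frac{\pdf(\alpha)}{\sqrt{q(1-q)}} \leq \frac{\pdf(\alpha)}{\sqrt{q}\,\sqrt{\pdf(\alpha)/\alpha}} = \frac{\sqrt{\alpha\,\pdf(\alpha)}}{\sqrt{q}},
\]
and $\alpha\,\pdf(\alpha) \to 0$ as $\alpha \to \infty$ (exponential decay beats the linear factor), while $\sqrt{q} \to 1$, so the bound $\to 0$. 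The main obstacle is nothing deep — it is just making sure the $0/0$ limit is handled rigorously rather than hand-waved; using an explicit tail bound like the Mills ratio (or, alternatively, L'Hôpital on $q \mapsto \pdf(\cdf^{-1}(q))^2/(q(1-q))$) sidesteps it cleanly. The covariance computation via $\E[X\mid Y] = \rho Y$ is the conceptual heart but is entirely routine for jointly Gaussian variables.
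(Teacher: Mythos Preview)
Your computation of the closed form is essentially identical to the paper's: both condition on \(Y\), use \(\E[X\mid Y]=\rho Y\), evaluate \(\int_\alpha^\infty y\,\pdf(y)\,\du y=\pdf(\alpha)\), and combine with \(\var(X)=1\), \(\var(Z)=q(1-q)\). No issues there.

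Your limit argument, however, contains a genuine slip. From \(\sqrt{1-q}\le \sqrt{\pdf(\alpha)/\alpha}\) you conclude
\[
  \frac{\pdf(\alpha)}{\sqrt{q(1-q)}}\;\le\;\frac{\pdf(\alpha)}{\sqrt{q}\,\sqrt{\pdf(\alpha)/\alpha}},
\]
but the inequality is reversed: a smaller denominator makes the fraction \emph{larger}, so the displayed bound is actually a \emph{lower} bound, not an upper bound, and it does not establish convergence to zero. The upper Mills-ratio inequality \(1-\cdf(\alpha)\le \pdf(\alpha)/\alpha\) is the wrong direction for this purpose; you need a \emph{lower} bound on \(1-q\). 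The companion inequality \(1-\cdf(\alpha)\ge \frac{\alpha}{\alpha^2+1}\pdf(\alpha)\) (or simply the asymptotic \(1-\cdf(\alpha)\sim \pdf(\alpha)/\alpha\)) gives
\[
  \frac{\pdf(\alpha)^2}{q(1-q)}\;\le\;\frac{(\alpha^2+1)\,\pdf(\alpha)}{q\,\alpha}\longrightarrow 0,
\]
which is the bound you want. The paper itself handles the limit with a one-line appeal to ``\(\pdf(\alpha)\to 0\) exponentially fast,'' so your instinct to make this step rigorous is good; you just need the other side of the Mills ratio.
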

\begin{proof}
  Let \(Z = \mathbf{1}[Y > \alpha]\) with \(\alpha = \cdf^{-1}(q)\). Then,
  using the law of total expectation, we have
  \[
    \E(XZ) = \E\bigl(\E(XZ \mid Y)\bigr) = \E\bigl(\rho Y \bm{1}[Y > \alpha]\bigr)
    = \rho \int_{\alpha}^{\infty} y \pdf(y)\,\du y = \rho \pdf(\alpha).
  \]
  Since \(\var(X) = 1\) and \(\var(Z) = q (1 - q)\), it follows that
  \[
    \cor(X,Z) = \frac{\E(XZ)}{\sqrt{\var(X)\,\var(Z)}} = \frac{\rho \pdf(\alpha)}{\sqrt{q \bigl(1 - q\bigr)}}.
  \]
  And since \(\pdf(\alpha) \to 0\) exponentially fast as \(q \to 1\), it follows that
  \(\cor(X,Z) \to 0\) as \(q \to 1\).
\end{proof}

\begin{theorem}[Correlation with Bernoulli Variable]
  Let \(X\) be a continuous random variable and \(Y\) be a Bernoulli random variable defined as:
  \[
    Y = \begin{cases}
      1 & \text{with probability } p,   \\
      0 & \text{with probability } 1-p.
    \end{cases}
  \]
  Let \(\mu_1 = \E \left( X \mid Y=1 \right)\), \(\mu_0 = \E \left( X \mid Y=0 \right)\) and
  $\sigma_X^2 = V \left(X \right)$ Then the correlation between \(X\) and \(Y\) is given by:
  \[
    \rho_{X,Y} = \frac{(\mu_1 - \mu_0)}{\sigma_X}\sqrt{p(1-p)}.
  \]
\end{theorem}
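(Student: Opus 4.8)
The plan is to compute the covariance directly via the law of total expectation and then divide by the product of standard deviations. Since $Y$ is Bernoulli, the key simplification is that $XY = X\,\mathbf{1}[Y=1]$, so conditioning on $Y$ collapses everything to the two conditional means $\mu_1$ and $\mu_0$.

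First I would write $\cov(X,Y) = \E(XY) - \E(X)\E(Y)$. For the first term, $\E(XY) = \E\bigl(\E(XY \mid Y)\bigr) = p\,\E(X \mid Y=1)\cdot 1 + (1-p)\,\E(X\mid Y=0)\cdot 0 = p\mu_1$. For the marginal mean, the same conditioning gives $\E(X) = p\mu_1 + (1-p)\mu_0$, and of course $\E(Y) = p$. Substituting, $\cov(X,Y) = p\mu_1 - p\bigl(p\mu_1 + (1-p)\mu_0\bigr) = p(1-p)\mu_1 - p(1-p)\mu_0 = p(1-p)(\mu_1 - \mu_0)$.

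Next I would record $\var(Y) = p(1-p)$ (standard for a Bernoulli) and $\var(X) = \sigma_X^2$ by assumption, and then assemble
\[
  \cor(X,Y) = \frac{\cov(X,Y)}{\sqrt{\var(X)\var(Y)}} = \frac{p(1-p)(\mu_1 - \mu_0)}{\sigma_X\sqrt{p(1-p)}} = \frac{(\mu_1 - \mu_0)}{\sigma_X}\sqrt{p(1-p)},
\]
using $\sqrt{p(1-p)}\,\bigl(p(1-p)\bigr)^{-1} \cdot p(1-p) = \sqrt{p(1-p)}$ to cancel the redundant factor. Implicitly this assumes $p \in (0,1)$ and $\sigma_X > 0$ so that the correlation is well defined; I would note this as a standing nondegeneracy condition rather than treat it as a case.

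There is no real obstacle here: the argument is a two-line conditioning computation, and the only thing to be careful about is bookkeeping the factor $\sqrt{p(1-p)}$ so that it does not get dropped or doubled. The statement is then of the same flavor as the preceding dichotomized-normal result, and specializing $X$ to be itself standard normal and identifying $\mu_1 - \mu_0$ with the standardized mean gap recovers that corollary; the $q \to 1$ vanishing again follows because $\sqrt{p(1-p)} \to 0$.
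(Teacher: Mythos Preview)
Your proof is correct and essentially identical to the paper's: both compute $\cov(X,Y)=\E(XY)-\E(X)\E(Y)$ by conditioning on $Y$ to get $\E(XY)=p\mu_1$ and $\E(X)=p\mu_1+(1-p)\mu_0$, yielding $\cov(X,Y)=p(1-p)(\mu_1-\mu_0)$, and then divide by $\sigma_X\sqrt{p(1-p)}$. Your explicit nondegeneracy remark ($p\in(0,1)$, $\sigma_X>0$) is a small addition the paper leaves implicit.
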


\begin{proof}
  The correlation \(\rho_{X,Y}\) is defined as:
  \[
    \rho_{X,Y} = \frac{\mathrm{Cov}(X,Y)}{\sqrt{\mathrm{Var}(X)\mathrm{Var}(Y)}}.
  \]
  We have \(\mathrm{Var}(Y)=p(1-p)\). Using the law of total covariance:
  \begin{align*}
    \mathrm{Cov}(X,Y) & = \E (XY) - \E(X) \E(Y)           \\
                      & = p\mu_1 - (p\mu_1 + (1-p)\mu_0)p \\
                      & = p(1-p)(\mu_1 - \mu_0).
  \end{align*}
  Since $\Var (Y)=p(1-p)$ the result follows.
\end{proof}

\begin{corollary}[Gaussian-Bernoulli Case]
  Suppose \(X\) and \(Z\) are jointly Gaussian random variables with:
  \[
    X, Z \sim N(0,1), \quad \mathrm{Corr}(X,Z)=\rho,
  \]
  and define \(Y=\mathbf{1}[Z>\alpha]\). Then the correlation between \(X\) and \(Y\) is:
  \[
    \rho_{X,Y} = \frac{\rho\,\phi(\alpha)}{\sqrt{\Phi(\alpha)(1 - \Phi(\alpha))}},
  \]
  where \(\phi(\alpha)\) and \(\Phi(\alpha)\) are the PDF and CDF of the standard normal
  distribution, respectively. Further for \(q = \Phi(\alpha)\), we have: $$ \rho_{X,Y} \to 0
    \quad \text{as} \quad q \to 1. $$
\end{corollary}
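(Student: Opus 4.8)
The plan is to obtain the closed form directly from the tower property for conditional expectations, essentially reusing the argument in the proof of the first theorem above, and then to handle the limit with a standard Gaussian-tail estimate. First I would record the two variances: $Y = \mathbf{1}[Z > \alpha]$ is Bernoulli with $\Pr(Y = 1) = \Pr(Z > \alpha) = 1 - \Phi(\alpha)$, so $\var(Y) = \Phi(\alpha)\bigl(1 - \Phi(\alpha)\bigr)$, while $\var(X) = 1$ by assumption. (Writing $q = \Phi(\alpha)$ this is just $q(1-q)$.)

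Next I would compute the covariance. Since $(X, Z)$ is bivariate standard normal with correlation $\rho$, we have $\E(X \mid Z) = \rho Z$, so by the tower property
\[
  \E(XY) = \E\bigl(\E(XY \mid Z)\bigr) = \E\bigl(\rho Z\,\mathbf{1}[Z > \alpha]\bigr) = \rho \int_{\alpha}^{\infty} z\,\phi(z)\,\du z = \rho\,\phi(\alpha),
\]
using $\phi'(z) = -z\phi(z)$. Because $\E(X) = 0$, this already equals $\cov(X, Y)$, and dividing by $\sqrt{\var(X)\var(Y)}$ gives $\rho_{X,Y} = \rho\,\phi(\alpha)/\sqrt{\Phi(\alpha)(1 - \Phi(\alpha))}$, as claimed. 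As a sanity check one can instead invoke the preceding theorem on correlation with a Bernoulli variable, with $\sigma_X = 1$ and the truncated-normal conditional means $\mu_1 = \rho\,\phi(\alpha)/(1 - \Phi(\alpha))$ and $\mu_0 = -\rho\,\phi(\alpha)/\Phi(\alpha)$; after simplification the two routes agree.

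For the limiting statement, note that $q = \Phi(\alpha) \to 1$ is equivalent to $\alpha \to \infty$, and since $\Phi(\alpha) \to 1$ it suffices to show $\phi(\alpha)/\sqrt{1 - \Phi(\alpha)} \to 0$. I would use the elementary lower bound $1 - \Phi(\alpha) \ge \frac{\alpha}{1 + \alpha^2}\phi(\alpha)$, valid for $\alpha > 0$ (equivalently, the Mills-ratio asymptotic $1 - \Phi(\alpha) \sim \phi(\alpha)/\alpha$), which yields $\phi(\alpha)/\sqrt{1 - \Phi(\alpha)} \le \sqrt{(1 + \alpha^2)\phi(\alpha)/\alpha} \to 0$, because $\phi(\alpha) = (2\pi)^{-1/2} e^{-\alpha^2/2}$ decays faster than any polynomial in $\alpha$ grows. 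I do not expect a genuine obstacle anywhere in this argument; the only points needing a bit of care are matching the direction $q \to 1$ to the upper tail $1 - \Phi(\alpha) \to 0$ (rather than the lower tail) and confirming that the $\Phi(\alpha)$ factor left in the denominator is harmless since it converges to $1$.
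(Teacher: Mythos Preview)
Your proposal is correct. The paper's proof of this corollary takes as its primary route exactly what you list as a sanity check: it invokes the preceding Bernoulli-correlation theorem, computes the truncated-normal conditional means \(\mu_1 = \rho\,\phi(\alpha)/(1-\Phi(\alpha))\) and \(\mu_0 = -\rho\,\phi(\alpha)/\Phi(\alpha)\), and substitutes. Your direct tower-property computation is instead the argument the paper uses for its earlier dichotomization theorem (which is in fact the same statement with the roles of \(Y\) and \(Z\) swapped), so you have simply reversed which derivation is primary and which is the check. Your Mills-ratio treatment of the limit is more explicit than the paper's one-line appeal, which merely asserts that the numerator is bounded and hence the ratio vanishes.
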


\begin{proof}
  From the theorem above, we identify:
  \begin{itemize}
    \item \(p = P(Z>\alpha) = 1-\Phi(\alpha)\).
    \item Since \((X,Z)\) is jointly normal, \(X|Z=z \sim N(\rho z, 1-\rho^2)\), we have:
          \[
            \mu_1 = E[X \mid Z>\alpha] = \rho E[Z \mid Z>\alpha] = \rho \frac{\phi(\alpha)}{1-\Phi(\alpha)}.
          \]
    \item Similarly,
          \[
            \mu_0 = E[X \mid Z \leq \alpha] = \rho E[Z \mid Z\leq\alpha] = -\rho\frac{\phi(\alpha)}{\Phi(\alpha)}.
          \]
  \end{itemize}
  Thus:
  \[
    \mu_1 - \mu_0 = \rho\left(\frac{\phi(\alpha)}{1-\Phi(\alpha)} + \frac{\phi(\alpha)}{\Phi(\alpha)}\right) = \frac{\rho\phi(\alpha)}{\Phi(\alpha)(1-\Phi(\alpha))}.
  \]
  Substituting these results into the theorem, we obtain:
  \begin{align*}
    \rho_{X,Y} & = \frac{\rho\phi(\alpha)}{\Phi(\alpha)(1-\Phi(\alpha))}\sqrt{\Phi(\alpha)(1-\Phi(\alpha))} \\[6pt]
               & = \frac{\rho\,\phi(\alpha)}{\sqrt{\Phi(\alpha)(1 - \Phi(\alpha))}}.
  \end{align*}
  Finally, note that $\rho(\alpha)$ is bounded hence $\rho_{X,Y} \rightarrow 0 $ as $q=\Phi(\alpha) \rightarrow 0$.
\end{proof}

One can also compute bounds for two correlated Bernoulli variables: Let \(X \sim
\mathrm{Bernoulli}(p)\) and \(Y \sim \mathrm{Bernoulli}(q)\) with \(0 < p,q < 1\). Their
correlation coefficient is given by
\[
  \rho = \frac{P(X=1,Y=1)-pq}{\sqrt{p(1-p)q(1-q)}},
\]
where the joint probability \(P(X=1,Y=1)\) satisfies the Fréchet bounds:
\[
  \max\{0,\, p+q-1\} \le P(X=1,Y=1) \le \min\{p,\, q\}.
\]
Thus, the extreme values of \(\rho\) are:
\[
  \rho_{\max}(p,q) = \frac{\min\{p,q\}-pq}{\sqrt{p(1-p)q(1-q)}}
\]
and
\[
  \rho_{\min}(p,q) = \frac{\max\{0,\, p+q-1\}-pq}{\sqrt{p(1-p)q(1-q)}}.
\]

\section{Additional Theory}

\subsection{Maximum--Absolute and Min--Max Normalization for Normally Distributed Data}%
\label{sec:maxabs-theory}

In \Cref{thm:maxabs-gev}, we show that the scaling factor in the max--abs method converges
in distribution to a Gumbel distribution.

\begin{theorem}
  \label{thm:maxabs-gev}
  Let \(X_1, X_2, \dots, X_n\) be a sample of normally distributed random variables, each with mean \(\mu\) and standard deviation \(\sigma\). Then
  \[
    \lim_{n \rightarrow \infty}\Pr\left(\max_{i \in [n]} |X_i| \leq x\right) = G(x),
  \]
  where \(G\) is the cumulative distribution function of a Gumbel distribution with
  parameters
  \[
    b_n = F_Y^{-1}(1 - 1/n)\quad \text{and} \quad a_n = \frac{1}{n f_Y(\mu_n)},
  \]
  where \(f_Y\) and \(F_Y^{-1}\) are the probability distribution function and quantile
  function, respectively, of a folded normal distribution with mean \(\mu\) and standard
  deviation \(\sigma\).
\end{theorem}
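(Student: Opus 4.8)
The plan is to reduce the statement to a standard application of extreme value theory for the Gumbel domain of attraction. Since $X_i \sim \normal(\mu,\sigma^2)$, each $|X_i|$ follows a folded normal distribution with mean $\mu$ and standard deviation $\sigma$; I would write $Y_i = |X_i|$, $M_n = \max_{i \in [n]} Y_i$, and let $F_Y$, $f_Y$ denote the CDF and density of this folded normal, so that $\Pr(M_n \le x) = F_Y(x)^n$. Because the law of $|X|$ is unchanged under $\mu \mapsto -\mu$, I may assume $\mu \ge 0$. The goal is then to show that $F_Y$ lies in the max-domain of attraction of the Gumbel law $G(x) = \exp(-e^{-x})$ with normalizing sequences $b_n = F_Y^{-1}(1-1/n)$ and $a_n = (1-F_Y(b_n))/f_Y(b_n) = 1/(nf_Y(b_n))$ (the quantity written $1/(nf_Y(\mu_n))$ in the statement, reading $\mu_n = b_n$); then $\Pr(M_n \le a_n x + b_n) \to G(x)$, which is precisely the claimed convergence of $\max_i |X_i|$ to a Gumbel law with location $b_n$ and scale $a_n$.

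Next I would establish the tail behaviour of the folded normal. For $y \ge 0$,
\[
  f_Y(y) = \frac{1}{\sigma\sqrt{2\pi}}\left(e^{-(y-\mu)^2/(2\sigma^2)} + e^{-(y+\mu)^2/(2\sigma^2)}\right),
  \qquad
  1 - F_Y(y) = \Bigl(1 - \Phi\bigl(\tfrac{y-\mu}{\sigma}\bigr)\Bigr) + \Bigl(1 - \Phi\bigl(\tfrac{y+\mu}{\sigma}\bigr)\Bigr),
\]
and as $y \to \infty$ the term in $e^{-(y-\mu)^2/(2\sigma^2)}$ dominates (when $\mu = 0$ the two exponential terms coincide). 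Combining this with the Mills-ratio expansion $1 - \Phi(t) = \phi(t)\bigl(1/t + O(t^{-3})\bigr)$ gives
\[
  1 - F_Y(y) \sim \frac{\sigma}{(y-\mu)\sqrt{2\pi}}\,e^{-(y-\mu)^2/(2\sigma^2)},
  \qquad
  f_Y(y) \sim \frac{1}{\sigma\sqrt{2\pi}}\,e^{-(y-\mu)^2/(2\sigma^2)},
\]
so the reciprocal hazard satisfies $(1-F_Y(y))/f_Y(y) \sim \sigma^2/(y-\mu) \to 0$, with derivative also tending to $0$. This is von Mises' sufficient condition for membership in the Gumbel domain of attraction, and it also identifies the normalizing constants: $b_n = F_Y^{-1}(1-1/n)$ is well defined for $n \ge 2$ since $F_Y$ is continuous and strictly increasing on $(0,\infty)$, and $a_n$ is the reciprocal hazard at $b_n$, equal to $1/(nf_Y(b_n))$ because $1 - F_Y(b_n) = 1/n$.

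Alternatively, to avoid quoting the von Mises theorem, I would argue directly using the elementary equivalence $F_Y(u_n)^n \to e^{-\tau}$ if and only if $n\bigl(1 - F_Y(u_n)\bigr) \to \tau$. Taking $u_n = b_n + a_n x$ and using $n\bigl(1-F_Y(b_n)\bigr) = 1$, it then suffices to show $\tfrac{1 - F_Y(b_n + a_n x)}{1 - F_Y(b_n)} \to e^{-x}$ for each fixed $x$; with $b_n \to \infty$ and $a_n \sim \sigma^2/(b_n - \mu) \to 0$, the slowly varying prefactor ratio tends to $1$, while the Gaussian exponents contribute $\exp\!\Bigl(-\tfrac{(b_n - \mu) a_n x}{\sigma^2} - \tfrac{a_n^2 x^2}{2\sigma^2}\Bigr) \to e^{-x}$, since $(b_n - \mu) a_n/\sigma^2 \to 1$. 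Hence $\Pr(M_n \le b_n + a_n x) \to e^{-e^{-x}} = G(x)$.

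I expect the main obstacle to be the care needed in the tail analysis: one must check that the slowly varying Mills-ratio correction $1/(y-\mu)$ has ratio tending to $1$ along the sequence $b_n + a_n x$ versus $b_n$, control the subdominant $e^{-(y+\mu)^2/(2\sigma^2)}$ contribution (noting that the sign of $\mu$ is immaterial by the folded-normal symmetry), and, if going the von Mises route, verify the derivative condition rather than merely the pointwise limit of the reciprocal hazard. Everything after these asymptotics is a routine invocation of the Fisher–Tippett–Gnedenko machinery.
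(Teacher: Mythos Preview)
Your proposal is correct and follows essentially the same route as the paper: identify $|X_i|$ as folded normal, then verify von Mises' sufficient condition $\lim_{x\to\infty}\frac{d}{dx}\bigl((1-F_Y(x))/f_Y(x)\bigr)=0$ for the Gumbel domain of attraction, and read off the standard normalizing constants $b_n=F_Y^{-1}(1-1/n)$, $a_n=1/(nf_Y(b_n))$. Your treatment is in fact more careful than the paper's (explicit Mills-ratio asymptotics, handling of the subdominant exponential, the $\mu\mapsto-\mu$ symmetry), and your optional direct argument via $n(1-F_Y(b_n+a_nx))\to e^{-x}$ is a nice self-contained alternative that the paper does not give.
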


The gist of \Cref{thm:maxabs-gev} is that the limiting distribution of \(\max_{i \in
  [n]}|X_i|\) has expected value \(b_n + \gamma a_n\), where \(\gamma\) is the
Euler-Mascheroni constant, which shows that the scaling factor depends on the sample size.
In \Cref{fig:maxabs-gev}, we observe empirically that the limiting distribution agrees well
with the empirical distribution in expected value even for small values of \(n\).

In \Cref{fig:maxabs-n} we show the effect of increasing the number of observations, \(n\),
in a two-feature lasso model with max-abs normalization applied to both features. The
coefficient corresponding to the Normally distributed feature shrinks as the number of
observation \(n\) increases. Since the expected value of the Gumbel distribution diverges
with \(n\), this means that there's always a large enough \(n\) to force the coefficient in
a lasso problem to zero with high probability.

\begin{figure}[htpb]
  \centering
  \subfigure[Theoretical versus empirical distribution of the maximum absolute value of normally distributed random variables.\label{fig:maxabs-gev}]{\includegraphics[]{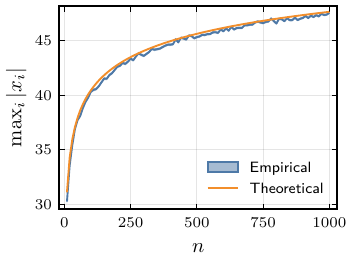}}%
  \hspace{1cm}
  \subfigure[Estimation of mixed features under maximum absolute value scaling\label{fig:maxabs-n}]{\includegraphics[]{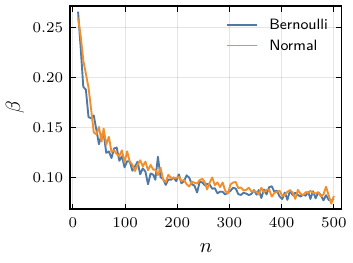}}
  \caption{%
    Effects of maximum absolute value scaling
  }
\end{figure}

For min--max normalization, the situation is similar and we omit the details here. The main
point is that the scaling factor is strongly dependent on the sample size, which makes it
unsuitable for normally distributed data in several situations, such as on-line learning
(where sample size changes over time) or model validation with uneven data splits.

\subsection{Solution to the Elastic Net}%
\label{sec:elastic-net-estimator}

Let \((\hat{\beta}_0^{(n)}, \hat{\vec{\beta}}^{(n)})\) be a solution to the problem in
\Cref{eq:elastic-net}. Expanding the function, we have
\[
  \frac{1}{2}\left( \vec y^\T \vec y - 2(\tilde{\mat{X}}\vec{\beta} + \beta_0)^\T\vec{y} + (\tilde{\mat{X}}\vec{\beta} + \beta_0)^\T(\tilde{\mat{X}}\vec{\beta} + \beta_0)\right)
  + \lambda_1 \lVert \vec\beta \rVert_1 + \frac{\lambda_2}{2}\lVert \vec \beta \rVert_2^2.
\]
Taking the subdifferential with respect to \(\vec{\beta}\) and \(\beta_0\), the KKT
stationarity condition yields the following system of equations:
\begin{equation}
  \label{eq:kkt-elasticnet}
  \begin{cases}
    \tilde{\mat{X}}^\T(\tilde{\mat{X}}\vec{\beta} + \beta_0 - \vec{y}) + \lambda_1 g + \lambda_2 \vec\beta \ni \vec{0}, \\
    n \beta_0 + (\tilde{\mat{X}}\vec{\beta})^\T \vec{1} - \vec{y}^\T \vec{1} = 0,
  \end{cases}
\end{equation}
where \(g\) is a subgradient of the \(\ell_1\) norm that has elements \(g_i\) such that
\[
  g_i \in
  \begin{cases}
    \{\sign{\beta_i}\} & \text{if } \beta_i \neq 0, \\
    [-1, 1]            & \text{otherwise}.
  \end{cases}
\]

\subsubsection{Orthogonal Features}

If the features of the normalized design matrix are orthogonal, that is,
\(\tilde{\mat{X}}^\intercal \tilde{\mat{X}} = \diag\left(\tilde{\vec{x}}_1^\T
\tilde{\vec{x}}_1, \dots, \tilde{\vec{x}}_p^\intercal \tilde{\vec{x}}_p\right) \), then
\Cref{eq:kkt-elasticnet} can be decomposed into a set of \(p + 1\) conditions:
\[
  \begin{cases}
    \tilde{\vec{x}}_j^\T (\tilde{\vec{x}}_j \beta_j + \ones \beta_0 - \vec{y}) + \lambda_2 \beta_j + \lambda_1 g \ni 0, & j \in [p], \\
    n \beta_0 + (\tilde{\mat{X}}\vec{\beta})^\T \vec{1} -  \vec{y}^\T \ones = 0.
  \end{cases}
\]
The inclusion of the intercept ensures that the locations (means) of the features do not
affect the solution (except for the intercept itself). We will therefore from now on assume
that the features are mean-centered so that \(c_j = \bar{\vec{x}}_j\) for all \(j\) and
therefore \(\tilde{\vec{x}}_j^\T \ones = 0\). A solution to the system of equations is then
given by the following set of equations~\citep{donoho1994}:
\begin{equation*}
  \hat{\beta}^{(n)}_j = \frac{\st_{\lambda_1}\left(\tilde{\vec{x}}_j^\T \vec{y}\right)}{\tilde{\vec{x}}_j^\T \tilde{\vec{x}}_j + \lambda_2},
  \qquad
  \hat{\beta}_0^{(n)} = \frac{\vec{y}^\T \ones}{n},
\end{equation*}
where \(\st_\lambda(z) = \sign(z) \max(|z| - \lambda, 0)\) is the soft-thresholding
operator.

\subsection{Bias and Variance of the Elastic Net Estimator}
\label{sec:bias-var-deriv}

Here, we derive the results in \Cref{sec:theory} in more detail. Let
\[
  {Z_j} = \tilde{\vec{x}}_j^\T \vec{y} = \tilde{\vec{x}}_j^\T(\mat{X}\vec{\beta}^* + \vec{\varepsilon}) = \tilde{\vec{x}}_j^\T (\vec{x}_j\beta_j^* + \bm{\varepsilon})
  \qquad
  \text{and}
  \qquad
  d_j = s_j(\tilde{\vec{x}}_j^\T \tilde{\vec{x}}_j + \lambda_2)
\]
so that \(\hat{\beta}_j = \st_{\lambda_1}({Z_j})/d_j\). Since \(d_j\) is fixed under our
assumptions, we focus on \(S_{\lambda_1}({Z_j})\). First observe that since \(c_j =
\bar{\bm{x}}_j\),
\[
  \begin{aligned}
    \tilde{\vec{x}}_j^\T \tilde{\vec{x}}_j & = \frac{1}{s_j^2}(\vec{x}_j - c_j)^\T (\vec{x}_j - c_j) = \frac{\vec{x}_j^\T\vec{x}_j - nc_j^2}{s^2_j} = \frac{n \nu_j}{s_j^2}, \\
    \tilde{\vec{x}}_j^\T \vec{x}_j         & = \frac{1}{s_j}(\vec{x}_j^\T \vec{x}_j - \vec{x}_j^\T \ones c_j) = \frac{n \nu_j}{s_j},
  \end{aligned}
\]
where \(\nu_j\) is the uncorrected sample variance of \(\vec{x}_j\). This means that
\begin{equation}
  {Z_j} = \tilde{\vec{x}}_j^\T (\vec{x}_j\beta_j^* + \bm{\varepsilon}) = \frac{\beta_j^* n \nu_j- \vec{x}_j^\T \vec{\varepsilon} + c_j \ones^\intercal \bm{\varepsilon}}{s_j}
  \qquad\text{and}\qquad
  d_j = s_j\left(\frac{n \nu_j}{s_j^2} + \lambda_2\right).
\end{equation}
For the expected value and variance of \({Z_j}\) we then have
\begin{align*}
  \E {Z_j}   & = \mu_j = \E \left( \tilde{\vec{x}}_j^\T (\vec{x}_j\beta^*_j + \vec{\varepsilon}) \right)  = \tilde{\vec{x}}_j^\T\vec{x}_j \beta^*_j = \frac{\beta_j^* n \nu_j}{s_j},            \\
  \var {Z_j} & = \sigma_j^2 = \var\left(\tilde{\vec{x}}_j ^\T \vec{\varepsilon}\right) = \tilde{\vec{x}}_j^\T \tilde{\vec{x}}_j\sigma_\varepsilon^2 = \frac{n\nu_j\sigma_\varepsilon^2}{s_j^2}.
\end{align*}

The expected value of the soft-thresholding estimator is
\begin{equation*}
  \E \st_\lambda({Z_j}) = \int_{-\infty}^\infty \st_\lambda(z) f_{Z_j}(z) \du z
  = \int_{-\infty}^{-\lambda}(z + \lambda)f_{Z_j}(z) \du z + \int_{\lambda}^\infty (z - \lambda)f_{Z_j}(z) \du z.
\end{equation*}
And then the bias of \(\hat\beta_j\) with respect to the true coefficient \(\beta_j^*\) is
\begin{equation*}
  \E \hat\beta_j - \beta_j^* = \frac{1}{d_j}\E \st_\lambda({Z_j}) - \beta^*_j.
\end{equation*}

Finally, we note that the variance of the soft-thresholding estimator is
\begin{equation}
  \label{eq:st-variance}
  \var {S_\lambda({Z_j})} = \int_{-\infty}^{-\lambda}(z + \lambda)^2f_{Z_j}(z) \du z + \int_{\lambda}^\infty (z - \lambda)^2 f_{Z_j}(z) \du z - \left(\E \st_\lambda({Z_j})\right)^2
\end{equation}
and that the variance of the elastic net estimator is therefore
\begin{equation*}
  \var \hat\beta_j = \frac{1}{d_j^2} \var \st_\lambda({Z_j}).
\end{equation*}

\subsubsection{Normally Distributed Noise}%
\label{sec:normally-distributed-noise}

We now assume that \(\vec{\varepsilon}\) is normally distributed. Then
\[
  {Z_j} \sim \normal\left(\mu_j = \tilde{\vec{x}}_j^\T\vec{x}_j \beta_j^*, \sigma_j^2 = \tilde{\vec{x}}_j^\T\tilde{\vec{x}}_j \sigma_\varepsilon^2 \right).
\]
Let \(\theta_j = -\mu_j -\lambda_1 \) and \(\gamma_j = \mu_j - \lambda_1\). Then the
expected value of soft-thresholding of \({Z_j}\) is
\begin{align*}
  \E \st_{\lambda_1}({Z_j}) & = \int_{-\infty}^\frac{\theta_j}{\sigma_j} (\sigma_j u - \theta_j) \pdf(u) \du u + \int_{-\frac{\gamma_j}{\sigma_j}}^\infty (\sigma_j u + \gamma_j) \pdf(u) \du u                                               \nonumber \\
                            & = -\theta_j \cdf\left(\frac{\theta_j}{\sigma_j}\right) - \sigma_j \pdf\left(\frac{\theta_j}{\sigma_j}\right) + \gamma_j \cdf\left(\frac{\gamma_j}{\sigma_j}\right) + \sigma_j \pdf\left(\frac{\gamma_j}{\sigma_j}\right)
\end{align*}
where \(\pdf(u)\) and \(\cdf(u)\) are the probability density and cumulative distribution
functions of the standard normal distribution, respectively. Computing \Cref{eq:st-variance} gives us
\begin{align*}
  \var{S_\lambda(Z_j)} & = \frac{\sigma_j^2}{2} \left( \erf\left(\frac{\theta_j}{\sigma_j\sqrt{2}}\right) \phantom{-{}} \frac{\theta_j}{\sigma_j}\sqrt{\frac{2}{\pi}} \exp\left(-\frac{\theta_j^2}{2\sigma_j^2}\right) + 1 \right)  \nonumber  \\
                       & \phantom{={}} + 2 \theta_j \sigma_j \pdf \left(\frac{\theta_j}{\sigma_j}\right) + \theta_j^2 \cdf\left(\frac{\theta_j}{\sigma_j}\right) \nonumber                                                                     \\
                       & \phantom{={}} + \frac{\sigma_j^2}{2} \left( \erf\left(\frac{\gamma_j}{\sigma_j\sqrt{2}}\right) - \frac{\gamma_j}{\sigma_j}\sqrt{\frac{2}{\pi}} \exp\left(-\frac{\gamma_j^2}{2\sigma_j^2}\right) + 1 \right) \nonumber \\
                       & \phantom{={}} + 2 \gamma_j \sigma_j \pdf \left(\frac{\gamma_j}{\sigma_j}\right) + \gamma_j^2 \cdf\left(\frac{\gamma_j}{\sigma_j}\right) \nonumber                                                                     \\
                       & \phantom{={}} - \big(\E \st_{\lambda_1}({Z_j})\big)^2.
\end{align*}

\subsection{Derivation of Estimate in the Noiseless Case}%
\label{sec:noiseless-estimator}

If we assume \(s_j = (q_j - q_j^2)^\delta\), then \Cref{eq:noiseless-estimator} becomes
\begin{equation*}
  \label{eq:noiseless-estimator-2}
  \hat{\beta}_j = \frac{\st_{\lambda_1}\left(\beta_j^* n (q_j - q_j^2)^{1 - \delta}\right)}{n(q_j - q_j^2)^{1 - \delta} + (q_j - q_j^2)^\delta \lambda_2  }.
\end{equation*}

If we are in the lasso case, then \(\lambda_2 = 0\) and
\[
  \hat{\beta}_j = \frac{\st_{\lambda_1}\left(\beta_j^* n (q_j - q_j^2)^{1 - \delta}\right)}{n(q_j - q_j^2)^{1 - \delta}},
\]
so we need to choose \(\delta = 1\), and hence \(s_j = q_j - q_j^2\), to get rid of the
dependency on \(q_j\). For the ridge case, we instead have \(\lambda_1 = 0\) and hence
\[
  \hat{\beta}_j
  = \frac{\beta_j^* n (q_j - q_j^2)^{1 - \delta}}{n(q_j - q_j^2)^{1 - \delta} + (q_j - q_j^2)^\delta \lambda_2  }
  = \frac{\beta_j^* n}{n + (q_j - q_j^2)^{2\delta - 1} \lambda_2},
\]
which shows that we need to choose \(\delta = 1/2\) and hence \(s_j = \sqrt{q_j - q_j^2}\),
to get rid of the dependency on \(q_j\) in this case.

Note that if \(\lambda_1,\lambda_2 > 0\) (the elastic net case), then there is no choice of
\(\delta\) that will make the estimator independent of \(q_j\).

\subsection{Bias and Variance for Ridge Regression}%
\label{sec:ridge-variance}

\begin{corollary}[Variance in Ridge Regression]
  \label{cor:ridge-variance}
  Assume the conditions of \Cref{thm:classbalance-bias} hold, except that
  \(\lambda_1 = 0\). Then
  \[
    \lim_{q_j \rightarrow 1^-} \var \hat{\beta}_j =
    \begin{cases}
      0                                          & \text{if } 0 \leq \delta < 1/4, \\
      \frac{\sigma_\varepsilon^2 n}{\lambda_2^2} & \text{if } \delta = 1/4,        \\
      \infty                                     & \text{if } \delta > 1/4.
    \end{cases}
  \]
\end{corollary}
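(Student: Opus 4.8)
The plan is to exploit the fact that with \(\lambda_1 = 0\) the soft-thresholding operator \(\st_{\lambda_1}\) reduces to the identity, so that \(\hat{\beta}_j = Z_j/d_j\) where \(Z_j = \tilde{\vec{x}}_j^\T \vec{y}\) and \(d_j\) is as in \Cref{eq:z-d}. Since \(d_j\) is deterministic under our assumptions, \(\var \hat{\beta}_j = \var(Z_j)/d_j^2 = \sigma_j^2/d_j^2\). Using the binary-feature expressions for \(\sigma_j^2\) and \(d_j\) from \Cref{sec:theory-binary-features} together with \(s_j = (q_j - q_j^2)^{\delta}\), this becomes a completely explicit rational function of \(v := q_j - q_j^2\), so the whole corollary reduces to determining the order of vanishing or blow-up of this function as \(v \to 0^+\) (equivalently \(q_j \to 1^-\)).

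First I would write \(\sigma_j^2 = \sigma_\varepsilon^2 n\, v^{1-2\delta}\) and factor the denominator as \(d_j = n v^{1-\delta} + \lambda_2 v^{\delta} = v^{\delta}\bigl(n v^{1-2\delta} + \lambda_2\bigr)\), which yields the clean form
\[
  \var \hat{\beta}_j = \frac{\sigma_\varepsilon^2 n\, v^{1-4\delta}}{\bigl(n v^{1-2\delta} + \lambda_2\bigr)^2}.
\]
Then I would split on the sign of \(1-2\delta\). For \(\delta < 1/2\) the bracket tends to \(\lambda_2 > 0\), so the limit is governed purely by the exponent \(1-4\delta\): it is \(0\) for \(\delta < 1/4\), equals \(\sigma_\varepsilon^2 n/\lambda_2^2\) for \(\delta = 1/4\), and is \(+\infty\) for \(1/4 < \delta < 1/2\). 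For \(\delta = 1/2\) the bracket tends to \((n+\lambda_2)\) while \(v^{1-4\delta} = v^{-1} \to \infty\), so the limit is \(+\infty\). For \(\delta > 1/2\) one has \(n v^{1-2\delta}/\lambda_2 \to \infty\), so the bracket satisfies \(n v^{1-2\delta} + \lambda_2 = n v^{1-2\delta}(1 + o(1))\) and the ratio behaves like \((\sigma_\varepsilon^2/n)\, v^{-1} \to \infty\). Collecting these cases gives exactly the three-way split stated in the corollary.

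The step I would treat most carefully is the \(\delta > 1/2\) regime, where both the numerator and each term of \(d_j\) either blow up or vanish, so one must justify the dominant-term replacement in the denominator and then track the \emph{net} power of \(v\) after cancellation. I would also flag at the outset that the statement tacitly requires \(\lambda_2 > 0\) (as the presence of \(\lambda_2^2\) in the middle case already signals): when \(\lambda_2 = 0\) the estimator is ordinary least squares, \(d_j = n v^{1-\delta}\), and \(\var \hat{\beta}_j = (\sigma_\varepsilon^2/n)\, v^{-1} \to \infty\) for every \(\delta\), so the \(\delta < 1/4\) branch only makes sense with a strictly positive ridge penalty. No appeal to the Gaussian tail asymptotics used for \Cref{thm:classbalance-bias,thm:classbalance-variance} is needed here, since the absence of soft-thresholding makes the argument elementary.
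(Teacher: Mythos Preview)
Your proposal is correct and follows essentially the same route as the paper: both reduce to \(\var\hat{\beta}_j = \sigma_j^2/d_j^2\) and analyze the resulting power of \(v = q_j - q_j^2\). If anything, your case split on \(\delta\) is more careful than the paper's one-line simplification to \((\sigma_\varepsilon^2 n/\lambda_2^2)\,v^{1-4\delta}\), which tacitly assumes the bracket \(n v^{1-2\delta} + \lambda_2\) tends to \(\lambda_2\) and so is only literally valid for \(\delta < 1/2\).
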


\subsection{Centering and Interaction Features}%
\label{sec:centering-interactions}

The main motivation for centering is that it removes correlation between the main features
and the interaction, which would otherwise affect the estimates due to the regularization.
Centering normal features is also important because it ensures that their means do not
factor into the estimation of their effects, which is otherwise the case since the variance
of \(\bm{x}_3\) would then be \(q_1(\sigma^2 + \mu^2(1 - q_1))\) in the case when
\(\bm{x}_1\) is centered and \((q_1 - q_1^2)(\sigma^2 + \mu^2)\) otherwise. Centering
binary features is also important because the variance of the interaction term is otherwise
\(q_1\sigma^2\) (provided \(\bm{x}_2\) is centered), which would mean that the encoding of
values of the binary feature (e.g. \(\{0,1\}\) versus \(\{-1, 1\}\)) would affect the
interaction term.

\subsection{Extended Results on Bias and Variance for Ridge, Lasso, and Elastic Net Regression}%
\label{sec:additional-results-biasvar}

In \Cref{fig:binary-onedim-bias-var-elnet}, we show bias, variance, and mean-squared error
for the weighted elastic net. We see that the behavior of bias as \(q_j \rightarrow 1^-\)
depends on noise level and that there is a bias--variance trade-off with respect to
\(\omega\). As in \Cref{sec:mixed-data}, we modify the weighting factor to have
comparability under \(\kappa = 2\).

\begin{figure}[htpb]
  \centering
  \includegraphics[]{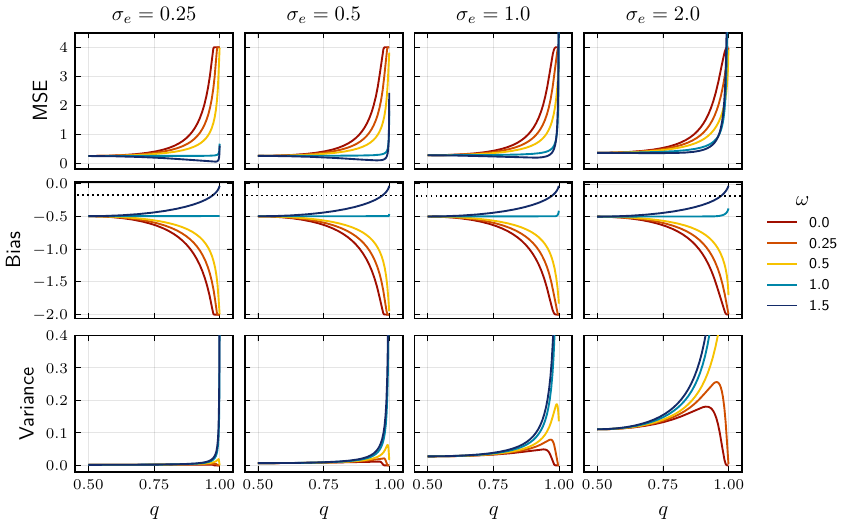}
  \caption{%
    Bias, variance, and mean-squared error in the case of the one-dimensional weighted elastic
    net. The measures are shown for different noise levels (\(\sigma_\varepsilon\)), class
    balances (\(q_j\)), and values of (\(\omega\)), which controls the weights that are set to
    \(u_j = v_j = 2\times 4^{\omega - 1}(q-q^2)^\omega\) in order for the results to be
    comparable across different values of \(\omega\). The dotted lines represent the asymptotic
    bias of the estimator in the case of \(\omega = 1\). In the case of \(\omega > 1\), the
    limit of the bias is zero.
  }
  \label{fig:binary-onedim-bias-var-elnet}
\end{figure}

\begin{figure}[htpb]
  \centering
  \includegraphics[]{binary_onedim_bias_var_lasso.pdf}
  \caption{%
    Bias, variance, and mean-squared error for a one-dimensional lasso problem,
    parameterized by noise level (\(\sigma_\varepsilon\)), class balance (\(q\)), and
    scaling (\(\delta\)). Dotted lines represent asymptotic bias of the lasso
    estimator in the case when \(\delta = 1/2\).}
  \label{fig:bias-var-onedim-lasso-full}
\end{figure}

\begin{figure}[htpb]
  \centering
  \includegraphics[]{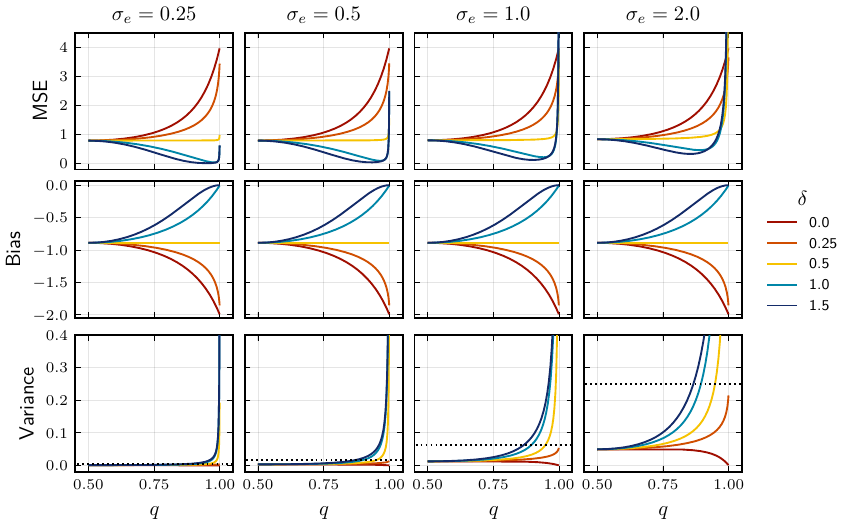}
  \caption{%
    Bias, variance, and mean-squared error for one-dimensional ridge regression,
    parameterized by noise level (\(\sigma_\varepsilon\)), class balance (\(q\)), and
    scaling (\(\delta\)). Dotted lines represent asymptotic bias of the ridge
    estimator in the case of \(\delta = 1/4\).}
  \label{fig:bias-var-onedim-ridge-full}
\end{figure}

\section{Proofs}

\section{Proof of \Cref{thm:classbalance-bias}}%
\label{sec:classbalance-bias-proof}

To avoid excessive notation, we allow ourselves to abuse notation and will drop the
subscript \(j\) everywhere in this proof, allowing \(\beta^*\), \(s\), and so on to
respectively denote \(\beta^*\), \(s_j\) et cetera.

Since \(s = (q - q^2)^\delta\), we have
\begin{align*}
  \mu    & = \beta^* n (q - q^2)^{1 - \delta},                      & \frac{\theta}{\sigma} & = -a \sqrt{q - q^2} - b (q - q^2)^{\delta - 1/2},                                          \\
  \sigma & = \sigma_\varepsilon \sqrt{n} (q - q^2)^{1/2 - \delta},  & \frac{\gamma}{\sigma} & = a \sqrt{q - q^2} - b (q - q^2)^{\delta - 1/2},                                           \\
  d      & = n (q - q^2)^{1 - \delta} + \lambda_2 (q - q^2)^\delta, & \frac{\theta}{d}      & = \frac{-\beta^*n - \lambda_1 (q - q^2)^{\delta - 1}}{n + \lambda_2(q-q^2)^{2\delta - 1}}, \\
  \theta & = -\beta^* n (q - q^2)^{1-\delta} - \lambda_1,           & \frac{\gamma}{d}      & = \frac{\beta^*n - \lambda_1 (q - q^2)^{\delta - 1}}{n + \lambda_2(q-q^2)^{2\delta - 1}},  \\
  \gamma & = \beta^* n (q - q^2)^{1-\delta} - \lambda_1,
\end{align*}
with
\[
  a = \frac{\beta^* \sqrt{n}}{\sigma_\varepsilon} \qquad \text{and} \qquad b = \frac{\lambda_1}{\sigma_\varepsilon \sqrt{n}}.
\]

We are interested in
\begin{equation}
  \label{eq:eval-qlimit}
  \lim_{q \rightarrow 1^-} \E \hat{\beta} =\lim_{q \rightarrow 1^-}\frac{1}{d}\left(-\theta \cdf\left(\frac{\theta}{\sigma}\right) - \sigma \pdf\left(\frac{\theta}{\sigma}\right) + \gamma \cdf\left(\frac{\gamma}{\sigma}\right) + \sigma \pdf\left(\frac{\gamma}{\sigma}\right)\right).
\end{equation}
Before we proceed, note the following limits, which we will make repeated use of throughout the proof.
\begin{equation}
  \label{eq:eval-sigma-limits}
  \lim_{q \rightarrow 1^-} \frac{\theta}{\sigma} = \lim_{q \rightarrow 1^-} \frac{\gamma}{\sigma} =
  \begin{cases}
    -\infty & \text{if } 0 \leq \delta < \frac{1}{2}, \\
    -b      & \text{if } \delta = \frac{1}{2},        \\
    0       & \text{if } \delta > \frac{1}{2},
  \end{cases}
\end{equation}

Starting with the terms involving \(\cdf\) inside the limit in \Cref{eq:eval-qlimit}, for
now assuming that they are well-defined and that the limits of the remaining terms also
exist seperately, we have
\begin{align}
  \lim_{q \rightarrow 1^-} \left(-\frac{\theta}{d} \cdf\left(\frac{\theta}{\sigma}\right) + \frac{\gamma}{d} \cdf \left(\frac{\gamma}{\sigma}\right)\right)
   & = \lim_{q \rightarrow 1^-} \Bigg(\left(\frac{\beta^* n}{n + \lambda_2 (q-q^2)^{2\delta - 1}} + \frac{\lambda_1}{n(q-q^2)^{1-\delta} + \lambda_2 (q-q^2)^{\delta}} \right) \cdf \left(\frac{\theta}{\sigma}\right)  \nonumber                                                                                                                                                                                    \\
   & \phantom{= \lim_{q \rightarrow 1^-} \Bigg( } + \left(\frac{\beta^* n}{n + \lambda_2 (q-q^2)^{2\delta - 1}} - \frac{\lambda_1}{n(q-q^2)^{1-\delta} + \lambda_2 (q-q^2)^{\delta}} \right)\cdf \left(\frac{\gamma}{\sigma}\right) \Bigg) \nonumber                                                                                                                                                                 \\
   & = \lim_{q \rightarrow 1^-} \frac{\beta^*n}{n + \lambda_2 (q-q^2)^{2\delta - 1}}\left(\cdf \left(\frac{\theta}{\sigma}\right) + \cdf \left(\frac{\gamma}{\sigma}\right) \right)                                                                                                                                                                                                                        \nonumber \\
   & \phantom{={}} +  \lim_{q \rightarrow 1^-}\frac{\lambda_1}{n(q-q^2)^{1-\delta} + \lambda_2(q -q^2)^{\delta}} \left(\cdf \left(\frac{\theta}{\sigma}\right) - \cdf \left(\frac{\gamma}{\sigma}\right)\right). \label{eq:eval-qlimit-terms}
\end{align}
Considering the first term in \Cref{eq:eval-qlimit-terms}, we see that
\[
  \lim_{q \rightarrow 1^-} \frac{\beta^*n}{n + \lambda_2 (q-q^2)^{2\delta - 1}}\left(\cdf \left(\frac{\theta}{\sigma}\right) + \cdf \left(\frac{\gamma}{\sigma}\right) \right)  =
  \begin{cases}
    0                                          & \text{if } 0 \leq \delta < 1/2, \\
    \frac{2n \beta^*}{n + \lambda_2} \cdf (-b) & \text{if } \delta = 1/2,        \\
    \beta^*                                    & \text{if } \delta > 1/2.
  \end{cases}
\]
For the second term in \Cref{eq:eval-qlimit-terms}, we start by observing that if \(\delta
= 1\), then \((q - q^2)^{\delta - 1} = 1\), and if \(\delta > 1\), then
\(\lim_{q\rightarrow 1^-}(q - q^2)^{\delta - 1} = 0\). Moreover, the arguments of \(\cdf\)
approach 0 in the limit for \(\delta \geq 1\), which means that the entire term vanishes in
both cases (\(\delta \geq 1\)).

For \(0 \leq \delta < 1\), the limit is indeterminite of the form \(\infty \times 0\). We
define
\[
  f(q) = \cdf \left(\frac{\theta}{\sigma}\right) - \cdf \left(\frac{\gamma}{\sigma}\right)
  \qquad\text{and}\qquad
  g(q) = n(q - q^2)^{1-\delta} + \lambda_2(q - q^2)^\delta,
\]
such that we can express the limit as \(\lim_{q \rightarrow 1^-}f(q)/g(q)\). The
corresponding derivatives are
\[
  \begin{aligned}
    f'(q) & = \left(-\frac{a}{2}(1-2q)(q - q^2)^{-1/2} - b(\delta - 1/2)(1-2q)(q - q^2)^{\delta - 3/2}\right)\pdf\left(\frac{\theta}{\sigma}\right)                \\
          & \phantom{= {}} - \left(\frac{a}{2}(1-2q)(q - q^2)^{-1/2} - b(\delta - 1/2)(1-2q)(q - q^2)^{\delta - 3/2}\right)\pdf\left(\frac{\gamma}{\sigma}\right), \\
    g'(q) & = n(1 - \delta)(1-2q)(q - q^2)^{-\delta} + \lambda_2 \delta(1 - 2q) (q - q^2)^{\delta - 1}
  \end{aligned}
\]
Note that \(f(q)\) and \(g(q)\) are both differentiable and \(g'(q) \neq 0\) everywhere in
the interval \((1/2, 1)\). Now note that we have
\begin{multline}
  \label{eq:eval-qlimit-secondterm}
  \frac{f'(q)}{g'(q)} = \frac{1}{n(1-\delta)(q-q^2)^{1/2-\delta} + \lambda_2 \delta (1-2q)(q - q^2)^{\delta-1/2}} \\
  \times \left(-\left(\frac{a}{2} + b(\delta - 1/2)(q - q^2)^{\delta - 1}\right)\pdf\left(\frac{\theta}{\sigma}\right) - \left(\frac{a}{2} - b(\delta - 1/2)(q - q^2)^{\delta - 1}\right)\pdf\left(\frac{\gamma}{\sigma}\right) \right).
\end{multline}
For \(0 \leq \delta < 1/2\), $\lim_{q \rightarrow 1^-}f'(q)/g'(q) = 0$ since the exponential terms of \(\pdf\) in \Cref{eq:eval-qlimit-secondterm} dominate in the limit.

For \(\delta = 1/2\), we have
\[
  \lim_{q \rightarrow 1^-} \frac{f'(q)}{g'(q)} = -\frac{a}{n + \lambda_2} \lim_{q \rightarrow 1^-}\left(\pdf\left(\frac{\theta}{\sigma}\right) + \pdf\left(\frac{\gamma}{\sigma}\right)\right) = -\frac{2a \pdf(-b)}{n + \lambda_2}
\]
so that we can use L'Hôpital's rule to show that the second term in
\Cref{eq:eval-qlimit-terms} becomes
\begin{equation}
  \label{eq:eval-qlimit-stdcase}
  -\frac{2\beta^*\lambda_1\sqrt{n}}{\sigma_\varepsilon(n + \lambda_2)} \pdf\left(\frac{-\lambda_1}{\sigma_\varepsilon\sqrt{n}}\right).
\end{equation}

For \(\delta > 1/2\), we have
\[
  \begin{aligned}
    \lim_{q \rightarrow 1^-} \frac{f'(q)}{g'(q)} & = \lim_{q \rightarrow 1^-} \frac{-\frac{a}{2}\left(\pdf\left(\frac{\theta}{\sigma}\right) + \pdf\left(\frac{\gamma}{\sigma}\right)\right)}{n(1-\delta)(q - q^2)^{1/2 - \delta} + \lambda_2 \delta(1 - 2q)(q - q^2)^{\delta - 1/2}}                                                       \\
                                                 & \phantom{= {}} + \lim_{q \rightarrow 1^-} \frac{b(\delta - 1/2)\left(\pdf\left(\frac{\gamma}{\sigma}\right) - \pdf\left(\frac{\theta}{\sigma}\right)\right)}{n(1 - \delta)(q - q^2)^{3/2 - 2\delta} + \lambda_2 \delta (1 - 2q)(q - q^2)^{1/2}}                                          \\
                                                 & = 0 + \lim_{q \rightarrow 1^-} \frac{b(\delta - 1/2) e^{-\frac{1}{2}\left(a^2(q - q^2) + b^2(q - q^2)^{2\delta - 1}\right)}\left(e^{-ab(q-q^2)^\delta} - e^{ab(q-q^2)^\delta}\right)}{\sqrt{2\pi}\left(n(1-\delta)(q - q^2)^{3/2 - 2\delta} + \lambda_2\delta(1-2q)(q-q^2)^{1/2}\right)} \\
                                                 & = 0
  \end{aligned}
\]
since the exponential term in the numerator dominates.

Now we proceed to consider the terms involving \(\pdf\) in \Cref{eq:eval-qlimit}. We have
\begin{equation}
  \label{eq:eval-qlimit-pdfterm}
  \lim_{q \rightarrow 1^-} \frac{\sigma}{d} \left(\pdf \left(\frac{\gamma}{\sigma}\right) - \pdf \left(\frac{\theta}{\sigma}\right)\right)
  = \sigma_\varepsilon \sqrt{n} \lim_{q \rightarrow 1^-} \frac{ \pdf\left(\frac{\gamma}{\sigma}\right) - \pdf\left(\frac{\theta}{\sigma}\right)}{n(q-q^2)^{1/2} + \lambda_2(q - q^2)^{2\delta - 1/2}}
\end{equation}
For \(0 \leq \delta < 1/2\), we observe that the exponential terms in \(\pdf\) dominate in the limit, and so we can distribute the limit and consider the limits of the respective terms individually, which both vanish.

For \(\delta \geq 1/2\), the limit in \Cref{eq:eval-qlimit-pdfterm} has an indeterminate
form of the type \(\frac{0}{0}\). Define
\[
  u(q) = \pdf\left(\frac{\gamma}{\sigma}\right) - \pdf\left(\frac{\theta}{\sigma}\right)
  \qquad\text{and}\qquad
  v(q) = n(q - q^2)^{1/2} + \lambda_2 (q - q^2)^{2\delta - 1/2}
\]
which are both differentiable in the interval \((1/2, 1)\) and \(v'(q) \neq 0\) everywhere
in this interval. The derivatives are
\[
  \begin{aligned}
    u'(q) & = -\pdf\left(\frac{\gamma}{\sigma}\right)\frac{\gamma}{\sigma} \left(\frac{1}{2}\left(a(1-2q)(q - q^2)^{-1/2}\right) - b(\delta - 1/2)(1- 2q)(q - q^2)^{\delta - 3/2}\right)                 \\
          & \phantom{= {}} + \pdf\left(\frac{\theta}{\sigma}\right) \frac{\theta}{\sigma} \left(\frac{1}{2}\left(a(1-2q)(q - q^2)^{-1/2}\right) + b(\delta - 1/2)(1- 2q)(q - q^2)^{\delta - 3/2}\right), \\
    v'(q) & = \frac{n}{2} (1 - 2q)(q - q^2)^{-1/2} + \lambda_2(2\delta - 1/2)(1 - 2q)(q - q^2)^{2\delta - 3/2}.
  \end{aligned}
\]
And so
\begin{equation}
  \begin{split}
    \frac{u'(q)}{v'(q)} = \frac{1}{n + \lambda_2(4\delta - 1)(q - q^2)^{2\delta - 1}}  \Bigg( & \left(a - b(2\delta - 1)(q - q^2)^{\delta - 1}\right) \pdf\left(\frac{\gamma}{\sigma}\right) \frac{\gamma}{\sigma} \\ & + \left(a + b(2\delta - 1)(q - q^2)^{\delta - 1}\right)\pdf\left(\frac{\theta}{\sigma}\right) \frac{\theta}{\sigma}\Bigg).
  \end{split}
\end{equation}
Taking the limit, rearranging, and assuming that the limits of the separate terms exist, we obtain
\begin{multline}
  \label{eq:eval-qlimit-pdfterm-split}
  \lim_{q \rightarrow 1^-} \frac{u'(q)}{v'(q)} = a \lim_{q \rightarrow 1^-}  \frac{1}{n + \lambda_2 (4 \delta - 1)(q - q^2)^{2\delta - 1}} \left( \pdf\left(\frac{\gamma}{\sigma}\right)\frac{\gamma}{\sigma} - \pdf\left(\frac{\theta}{\sigma}\right)\frac{\theta}{\sigma}\right) \\
  + b (2\delta - 1) \lim_{q \rightarrow 1^-} \frac{1}{n + \lambda_2 (4 \delta - 1)(q - q^2)^{2\delta - 1}} \bigg( \pdf\left(\frac{\gamma}{\sigma}\right) \left(a(q -q^2)^{\delta - 1/2}- b(q - q^2)^{2\delta - 3/2} \right) \\
  - \pdf\left(\frac{\theta}{\sigma}\right) \left(-a(q - q^2)^{\delta - 1/2} - b(q - q^2)^{2\delta - 3/2}\right) \bigg).
\end{multline}
For \(\delta = 1/2\), we have
\[
  \lim_{q \rightarrow 1^-} \frac{u'(q)}{v'(q)} = -\frac{a}{n + \lambda_2}\left(-b \pdf(-b) - b \pdf(-b)\right) + 0 = 2ab \pdf(-b) = \frac{2 \beta^* \lambda_1}{\sigma_\varepsilon^2(n + \lambda_2)} \pdf \left(\frac{-\lambda_1}{\sigma_\varepsilon\sqrt{n}}\right).
\]
Using L'Hôpital's rule, \Cref{eq:eval-qlimit-pdfterm} must consequently be
\[
  \frac{2 \beta^* \lambda_1\sqrt{n}}{\sigma_\varepsilon(n + \lambda_2)} \pdf \left(\frac{-\lambda_1}{\sigma_\varepsilon\sqrt{n}}\right),
\]
which cancels with \Cref{eq:eval-qlimit-stdcase}.

For \(\delta > 1/2\), we first observe that the first term in
\Cref{eq:eval-qlimit-pdfterm-split} tends to zero due to \Cref{eq:eval-sigma-limits} and
the properties of the standard normal distribution. For the second term, we note that this
is essentially of the same form as \Cref{eq:eval-qlimit-secondterm} and that the limit is
therefore 0 here.

\subsection{Proof of \Cref{thm:classbalance-variance}}\label{sec:classbalance-variance-proof}

The variance of the elastic net estimator is given by
\begin{multline}
  \label{eq:varthm-var}
  \var \hat{\beta}_j = \frac{1}{d^2}\Bigg( \frac{\sigma^2}{2}\bigg(2 + \erf\left(\frac{\theta}{\sigma \sqrt{2}}\right) - \frac{\theta}{\sigma}\sqrt{\frac{2}{\pi}} \exp\left(-\frac{\theta^2}{2\sigma^2}\right) + \erf\left(\frac{\gamma}{\sigma\sqrt{2}}\right) - \frac{\gamma}{\sigma} \sqrt{\frac{2}{\pi}} \exp\left(- \frac{\gamma^2}{2\gamma^2}\right)\bigg) \\
  + 2\theta\sigma \pdf\left(\frac{\theta}{\sigma}\right) + \theta^2 \cdf\left(\frac{\theta}{\sigma}\right) + 2\gamma \sigma \pdf\left(\frac{\gamma}{\sigma}\right) + \gamma^2 \cdf\left(\frac{\gamma}{\sigma}\right) \Bigg)
  - \left(\frac{1}{d}\E \hat{\beta}_j\right)^2.
\end{multline}
We start by noting the following identities:
\[
  \begin{aligned}
    \theta^2                  & = \left(\beta^* n\right)^2 (q-q^2)^{2-2\delta} + \lambda_1^2 + 2\lambda_1 \beta^* n(q-q^2)^{1-\delta},                \\
    d^2                       & = n^2(q -q^2)^{2 - 2\delta} + 2n\lambda_2 (q-q^2) + \lambda_2^2 (q-q^2)^{2\delta},                                    \\
    \theta \sigma             & =  -\sigma_\varepsilon\left(\beta^* n^{3/2}(q- q^2)^{3/2-2\delta} + \sqrt{n} \lambda_1 (q-q^2)^{1/2 - \delta}\right), \\
    \frac{\theta^2}{\sigma^2} & = a^2(q-q^2) + b^2(q-q^2)^{2\delta - 1} + 2ab (q -q^2)^\delta,                                                        \\
    \frac{\sigma}{d}          & = \frac{\sigma_\varepsilon \sqrt{n}}{n(q-q^2)^\frac{1}{2} + \lambda_2 (q-q^2)^{2\delta - 1/2}}.
  \end{aligned}
\]
Expansions involving \(\gamma\) instead of \(\theta\) have identical expansions up to sign
changes of the individual terms. Also recall the definitions provided in the proof of
\Cref{thm:classbalance-bias}.

Starting with the case when \(0 \leq \delta < 1/2\), we write the limit of
\Cref{eq:varthm-var} as
\begin{align*}
   & \lim_{q \rightarrow 1^-} \var \hat{\beta}_j                                                                                                                                                                                                                                                                                                                                         \\ & = \sigma_\varepsilon^2 n  \lim_{q \rightarrow 1^-} \frac{1}{\left(n(q-q^2)^{1/2} + \lambda_2(q-q^2)^{2\delta - 1/2}\right)^2}\bigg(1 + \erf\left(\frac{\theta}{\sigma \sqrt{2}}\right) - \frac{\theta}{\sigma}\sqrt{\frac{2}{\pi}} \exp\left(-\frac{\theta^2}{2\sigma^2}\right)\bigg)                                                                                               \\
   & \phantom{= {}} + \sigma_\varepsilon^2 n  \lim_{q \rightarrow 1^-} \frac{1}{\left(n(q-q^2)^{1/2} + \lambda_2(q-q^2)^{2\delta - 1/2}\right)^2}\bigg(1 + \erf\left(\frac{\gamma}{\sigma\sqrt{2}}\right) - \frac{\gamma}{\sigma} \sqrt{\frac{2}{\pi}} \exp\left(- \frac{\gamma^2}{2\sigma^2}\right)\bigg)                                                                               \\
   & \phantom{= {}}+ \lim_{q \rightarrow 1^-} \frac{2\theta\sigma}{d^2} \pdf\left(\frac{\theta}{\sigma}\right) + \lim_{q \rightarrow 1^-} \frac{\theta^2}{d^2} \cdf\left(\frac{\theta}{\sigma}\right) + \lim_{q \rightarrow 1^-} \frac{2\gamma}{d^2} \sigma \pdf\left(\frac{\gamma}{\sigma}\right) + \lim_{q \rightarrow 1^-}\frac{\gamma^2}{d^2} \cdf\left(\frac{\gamma}{\sigma}\right) \\
   & \phantom{= {}}- \left( \lim_{q\rightarrow 1^-}\frac{1}{d}\E \hat{\beta}_j\right)^2,
\end{align*}
assuming, for now, that all limits exist. Next, let
\[
  \begin{aligned}
    f_1(q) & = 1 + \erf\left(\frac{\theta}{\sigma\sqrt{2}}\right) - \frac{\theta}{\sigma}\sqrt{\frac{2}{\pi}} \exp\left(-\frac{\theta^2}{2\sigma^2}\right) , \\
    f_2(q) & = 1 + \erf\left(\frac{\gamma}{\sigma\sqrt{2}}\right) - \frac{\gamma}{\sigma}\sqrt{\frac{2}{\pi}} \exp\left(-\frac{\gamma^2}{2\sigma^2}\right) , \\
    g(q)   & = \left(n^2(q-q^2) + 2n \lambda_2 (q-q^2)^{2\delta} + \lambda_2^2 (q-q^2)^{4\delta - 1}\right)^2.
  \end{aligned}
\]
And
\begin{align*}
  f_1'(q) & = \frac{\theta^2}{\sigma^2}\sqrt{\frac{2}{\pi}}\exp\left(-\frac{\theta^2}{2\sigma^2}\right),                                    \\
  f_2'(q) & = \frac{\gamma^2}{\sigma^2}\sqrt{\frac{2}{\pi}}\exp\left(-\frac{\gamma^2}{2\sigma^2}\right),                                    \\
  g'(q)   & = (1-2q)\left((q-q^2)^{-1} + 4n\delta \lambda_2 (q-q^2)^{2\delta - 1} + \lambda_2^2 (4 \delta - 1)(q-q^2)^{4\delta - 2}\right).
\end{align*}
\(f_1\), \(f_1\) and \(g\) are differentiable in \((1/2, 1)\) and \(g'(q) \neq 0\) everywhere in this interval. \(f_1/g\) and \(f_2/g\) are indeterminate of the form \(0/0\). And we see that
\[
  \lim_{q \rightarrow 1^-} \frac{f_1'(q)}{g'(q)} = \lim_{q \rightarrow 1^-} \frac{f_2'(q)}{g'(q)} = 0
\]
due to the dominance of the exponential terms as \(\theta/\sigma\) and \(\gamma/\sigma\)
both tend to \(-\infty\). Thus \(f_1/g\) and \(f_2/g\) also tend to 0 by L'Hôpital's rule.
Similar reasoning shows that
\[
  \lim_{q \rightarrow 1^-} \frac{2\theta \sigma}{d^2} \pdf \left(\frac{\theta}{\sigma}\right) = \lim_{q \rightarrow 1^-} \frac{\theta^2}{d^2} \cdf \left(\frac{\theta}{\sigma}\right) = 0.
\]
The same result applies to the respective terms involving \(\gamma\). And since we in
\Cref{thm:classbalance-bias} showed that \(\lim_{q\rightarrow 1^-} \frac{1}{d} \E
\hat{\beta}_j = 0\), the limit of \Cref{eq:varthm-var} must be 0.

For \(\delta = 1/2\), we start by establishing that
\begin{equation}
  \label{eq:var-deltahalf-integral}
  \lim_{q \rightarrow 1^-} \int_{-\infty}^{-\lambda}(z+ \lambda)^2 f_Z(z) \du z = \lim_{q \rightarrow 1^-} \left(\sigma^2 \int_{-\infty}^\frac{\theta}{\sigma} y^2 \pdf(y) \du y + 2 \theta \sigma \int_{-\infty}^\frac{\theta}{\sigma} y \pdf(y) \du y + \theta^2 \int_{-\infty}^\frac{\theta}{\sigma} \pdf(y) \du y\right)
\end{equation}
is a positive constant since \(\theta/\sigma \rightarrow -b\), \(\sigma =
\sigma_\varepsilon \sqrt{n}\), \(\theta \rightarrow - \lambda\), and \(\theta\sigma
\rightarrow - \sigma_\varepsilon \sqrt{n}\lambda\). An identical argument can be made in
the case of \(\lim_{q \rightarrow 1^-} \int_{\lambda}^{\infty}(z - \lambda)^2 f_Z(z) \du
z.\) We then have
\[
  \lim_{q \rightarrow 1^-} \frac{1}{d^2} \int_{-\infty}^{-\lambda}(z+ \lambda)^2 f_Z(z) \du z = \frac{C^+}{\lim_{q\rightarrow 1^-} d^2} = \frac{C^+}{0} = \infty,
\]
where \(C^+\) is some positive constant. And because \(\lim_{q\rightarrow 1^-} \frac{1}{d}
\E \hat{\beta}_j = \beta^*\)~(\Cref{thm:classbalance-bias}), the limit of
\Cref{eq:varthm-var} must be \(\infty\).

Finally, for the case when \(\delta > 1/2\), we have
\begin{multline}
  \label{eq:var-deltagthalf-integral}
  \lim_{q \rightarrow 1^-} \frac{1}{d^2} \left(\sigma^2 \int_{-\infty}^\frac{\theta}{\sigma} y^2 \pdf(y) \du y + 2 \theta \sigma \int_{-\infty}^\frac{\theta}{\sigma} y \pdf(y) \du y + \theta^2 \int_{-\infty}^\frac{\theta}{\sigma} \pdf(y) \du y\right) \\
  = \lim_{q \rightarrow 1^-} \Bigg( \frac{n \sigma^2}{ \left(n (q-q^2)^{1/2} + \lambda_2(q-q^2)^{2\delta - 1/2}\right)^2} \int_{-\infty}^\frac{\theta}{\sigma} y^2 \pdf(y) \du y
  \\- \frac{2\sigma_\varepsilon \sqrt{n}\left(\beta^* n (q- q^2)^{1 - \delta} - \lambda_1\right)}{\left(n(q-q^2)^{3/4 - \delta/2} + \lambda_2(q-q^2)^{3\delta / 2 - 1/4}\right)^2} \int_{-\infty}^\frac{\theta}{\sigma} y \pdf(y) \du y \\
  + \left(\frac{-\beta^*n(q-q^2)^{1-\delta} - \lambda_1}{n(q-q^2)^{1-\delta} + \lambda_2(q-q^2)^\delta}\right)^2 \int_{-\infty}^\frac{\theta}{\sigma} \pdf(y) \du y\Bigg).
\end{multline}
Inspection of the exponents involving the factor \((q - q^2)\) shows that the first term inside the limit will dominate. And since the upper limit of the integrals, \(\theta/\sigma \rightarrow  0\)  as \(q \rightarrow 1^-\), the limit must be \(\infty\).

\subsection{Proof of \Cref{cor:ridge-variance}}

We have
\begin{equation*}
  \lim_{q\rightarrow 1^-}\var \hat{\beta}_j = \lim_{q \rightarrow 1^-}\frac{\sigma^2}{d^2} \left(\frac{\sigma_\varepsilon \sqrt{n} (q - q^2)^{1/2 - \delta}}{n (q-q^2)^{1 - \delta} + \lambda_2 (q-q^2)^\delta}\right)^2
  = \frac{\sigma_\varepsilon^2 n}{\lambda_2^2} \lim_{q \rightarrow 1^-}(q-q^2)^{1 - 4\delta},
\end{equation*}
from which the result follows directly.

\subsection{Proof of \Cref{thm:weighted-elasticnet-bias-variance}}

\subsubsection{Expected Value}

Starting with the expected value, our proof follows a similar structure as in the proof for
\Cref{thm:classbalance-bias} (\Cref{sec:classbalance-bias-proof}). We start by noting the
values of some of the important terms. As before we will drop the subscript \(j\)
everywhere to simplify notation. We have
\begin{align*}
  \mu    & = \beta^*(q -q^2)^\omega,                     & \frac{\theta}{\sigma} & = -a\sqrt{q-q^2} - b(q-q^2)^{\omega - 1/2},                                           \\
  \sigma & = \sigma_\varepsilon \sqrt{n(q-q^2)},         & \frac{\gamma}{\sigma} & = a\sqrt{q-q^2} - b(q-q^2)^{\omega - 1/2},                                            \\
  d      & = n(q-q^2) + \lambda_2(q-q^2)^\omega.\,       & \frac{\theta}{d}      & = \frac{-\beta^*n - \lambda_1(q-q^2)^{\omega - 1}}{n + \lambda_2(q-q^2)^{\omega -1}}, \\
  \theta & = -\beta^*n(q-q^2) - \lambda_1(q-q^2)^\omega, & \frac{\gamma}{d}      & = \frac{\beta^*n - \lambda_1(q-q^2)^{\omega - 1}}{n + \lambda_2(q-q^2)^{\omega -1}},  \\
  \gamma & = \beta^*n(q-q^2) - \lambda_1(q-q^2)^\omega.
\end{align*}

First note the following limit (which is analogous to that in \Cref{eq:eval-sigma-limits}).
\begin{equation}
  \lim_{q \rightarrow 1^-} \frac{\theta}{\sigma} = \lim_{q \rightarrow 1^-} \frac{\gamma}{\sigma} =
  \begin{cases}
    -\infty & \text{if } 0 \leq \omega < \frac{1}{2}, \\
    -b      & \text{if } \omega = \frac{1}{2},        \\
    0       & \text{if } \omega > \frac{1}{2}.
  \end{cases}
\end{equation}

As in \Cref{sec:classbalance-bias-proof}, we are looking to compute the following limit:
\begin{equation}
  \label{eq:eval-qlimit-weighted}
  \lim_{q \rightarrow 1^-} \E \hat{\beta} =\lim_{q \rightarrow 1^-}\frac{1}{d}\left(-\theta \cdf\left(\frac{\theta}{\sigma}\right) - \sigma \pdf\left(\frac{\theta}{\sigma}\right) + \gamma \cdf\left(\frac{\gamma}{\sigma}\right) + \sigma \pdf\left(\frac{\gamma}{\sigma}\right)\right).
\end{equation}

Starting with the terms involving \(\cdf\) and assuming that the limit can be distributed,
we have
\begin{align}
  \lim_{q \rightarrow 1^-} \left(-\frac{\theta}{d} \cdf\left(\frac{\theta}{\sigma}\right) + \frac{\gamma}{d} \cdf \left(\frac{\gamma}{\sigma}\right)\right)
   & = \lim_{q\rightarrow 1^-} \frac{\beta^* n + \lambda_1(q-q^2)^{\omega -1}}{n + \lambda_2(q-q^2)^{\omega - 1}}\cdf\left(\frac{\theta}{\sigma}\right) \nonumber                \\
   & \phantom{= {}} + \lim_{q\rightarrow 1^-} \frac{\beta^* n - \lambda_1(q-q^2)^{\omega -1}}{n + \lambda_2(q-q^2)^{\omega - 1}}\cdf\left(\frac{\gamma}{\sigma}\right) \nonumber \\
   & =\begin{cases}
        0                              & \text{if } 0 \leq \omega < 1, \\
        \frac{\beta^*n}{n + \lambda_2} & \text{if } \omega = 1,        \\
        \beta^*                        & \text{if } \omega > 1.
      \end{cases} \label{eq:eval-qlimit-terms-weighted}
\end{align}

The derivation of the first case in \Cref{eq:eval-qlimit-terms-weighted} depends on
\(\omega\). For \(0 \leq \omega \leq 1/2\), it stems from the facts that
\(\cdf(\theta/\sigma) \rightarrow 0\) and \(\cdf(\theta/\sigma) \rightarrow 0\) as \(q
\rightarrow 1^-\) together with the existence of the \((q-q^2)^{\omega - 1}\) factor in
both numerator and denominator. For \(1/2 \leq \omega < 1\), the terms cancel each other
out. In the second case, when \(\omega = 1\), the result stems from \(\cdf(\theta/\sigma)\)
and \(\cdf(\gamma/\sigma)\) both tending to 1/2 as \(q \rightarrow 1^-\). And finally for
\(\omega > 1\), the terms involving the \((q-q^2)^{\omega - 1}\) factors vanish and again
the values of the cumulative distribution functions tend to 1/2.

Now, we turn to the terms involving the probability density function \(\pdf\). Again, we
assume the limit is distributive so that
\begin{equation}
  \label{eq:eval-qlimit-pdf-weighted}
  \lim_{q \rightarrow 1^-} \frac{\sigma}{d} \left(\pdf\left(\frac{\gamma}{\sigma}\right) - \left(\frac{\theta}{\sigma}\right)\right) =
  \lim_{q\rightarrow 1^-}\frac{\sigma}{d} \pdf \left(\frac{\gamma}{\sigma}\right) - \lim_{q \rightarrow 1^-} \frac{\sigma}{d} \pdf \left(\frac{\theta}{\sigma}\right).
\end{equation}
Starting with the first term on the right-hand side of \Cref{eq:eval-qlimit-pdf-weighted},
we have
\[
  \lim_{q\rightarrow 1^-}\frac{\sigma}{d} \pdf \left(\frac{\gamma}{\sigma}\right) =
  \frac{\sigma_\varepsilon \sqrt{n} \pdf\left(\frac{\gamma}{\sigma}\right)}{n(q-q^2)^{1/2} + \lambda_2(q-q^2)^{\omega - 1/2}}.
\]
For \(0 \leq \omega < 1/2\), this limit is 0 since the exponential terms in the numerator
will dominate as \(q \rightarrow 1^-\). For \(\omega = 1/2\), we have the limit
\(\sigma_\varepsilon\sqrt{n} \pdf(-b)/\lambda_2\). For \(\omega > 1/2\), the limit is
indeterminate of the type \(0/0\). Let
\begin{equation*}
  f_1(q) = \pdf\left(\frac{\gamma}{\sigma}\right)
  \qquad\text{and}\qquad
  g(q) = n(q-q^2)^{1/2} + \lambda_2(q-q^2)^{\omega - 1/2}
\end{equation*}
and observe that \(f_1\) and \(g\) are differentiable and \(g'(q) \neq 0 \) for \(q \in
(1/2, 1)\). The derivatives are
\begin{align*}
  f_1'(q) & = -\left(\frac{a}{2}(1-2q)(q-q^2)^{-1/2} - b(\omega - 1/2)(1-2q)(q-q^2)^{\omega - 3/2}\right) \frac{\gamma}{\sigma} \pdf\left(\frac{\gamma}{\sigma}\right), \\
  g'(q)   & = \frac{n}{2}(1-2q)(q-q^2)^{-1/2} + \lambda_2(\omega - 1/2)(1-2q)(q-q^2)^{\omega - 3/2}.
\end{align*}
Next, we find that
\begin{equation}
  \label{eq:eval-qlimit-pdf-derivative-weighted}
  \frac{f_1'(q)}{g'(q)} = \frac{-a + b(2\omega - 1)(q-q^2)^{\omega - 1}}{n + \lambda_2(2\omega - 1)(q-q^2)^{\omega - 1}} \left(\frac{\gamma}{\sigma}\right) \pdf\left(\frac{\gamma}{\sigma}\right).
\end{equation}
Taking the limit of \Cref{eq:eval-qlimit-pdf-derivative-weighted} and invoking L'Hôpital's
rule yields
\[
  \lim_{q\rightarrow 1^-} \frac{f_1'(q)}{g'(q)} =0
\]
both when \(1/2 < \omega < 1\) and \(\omega \geq 1\) since \(\gamma/\sigma\) tends to 0 as
\(q \rightarrow 1^-\) for \(\omega > 1/2\) and the \(\pdf\) term tends to a constant, plus
the fact that the remaining factor in the expression also tends to a constant since the
terms involving \((q-q^2)^{\omega - 1}\) vanish when \(\omega > 1\), are constant when
\(\omega = 1\), and cancel each other out in the limit when \(\omega < 1\).

Finally, if we now consider the second term on the right-hand side of
\Cref{eq:eval-qlimit-pdf-weighted}, set \(f_2(q) = \pdf(\theta/\sigma)\), and perform the
same steps as above, we find that the limits are the same in all cases, which means that
the limits in \Cref{eq:eval-qlimit-pdf-weighted} cancel in the case when \(\omega = 1/2\)
and therefore that
\[
  \lim_{q \rightarrow 1^-} \frac{\sigma}{d} \left(\pdf\left(\frac{\gamma}{\sigma}\right) - \left(\frac{\theta}{\sigma}\right)\right) = 0
\]
for \(0 \leq \omega \). The limit in \Cref{eq:eval-qlimit-weighted} is given by
\Cref{eq:eval-qlimit-terms-weighted}.

\subsubsection{Variance}

The proof for the variance result is in many ways equivalent to that in the case of
variance of the normalized unweighted elastic net (\Cref{sec:classbalance-variance-proof})
and we therefore omit many of the details here.

In the case of \(0 \leq \omega < 1/2\), the proof is simplified since the \(d^2\) term
tends to \(\infty\) whilst the numerator takes the same limit as in the normalized case,
which means that the limit is \(0\) in this case. For \(\omega = 1/2\), we consider
\Cref{eq:var-deltahalf-integral} and observe that it again tends to a positive constant
whilst \(\lim_{q\rightarrow 1^-} d^2 = 0\), which means that the limit of the expression,
and hence variance of the estimator, tends to \(\infty\). For \(\omega > 1/2\), an
identical argument for the expression in \Cref{eq:var-deltagthalf-integral} holds and the
limit is therefore \(\infty\) in this case as well.

\subsection{Proof of \Cref{thm:maxabs-gev}}


If \(X_i \sim \normal(\mu, \sigma)\), then \(|X_i| \sim \fnormal(\mu,\sigma)\). By the
Fisher--Tippett--Gnedenko theorem, we know that \((\max_i |X_i| - b_n) / a_n\) converges in
distribution to either the Gumbel, Fréchet, or Weibull distribution, given a proper choice
of \(a_n > 0\) and \(b_n \in \mathbb{R}\). A sufficient condition for convergence to the
Gumbel distribution for a absolutely continuous cumulative distribution
function~\citep[Theorem 10.5.2]{nagaraja2003} is
\[
  \lim_{x \rightarrow \infty} \frac{d}{dx}\left(\frac{1- F(x)}{f(x)}\right) = 0.
\]
We have
\[
  \begin{aligned}
    \frac{1 - F_Y(x)}{f_Y(x)} & = \frac{1 - \frac{1}{2}\erf{\left(\frac{x - \mu}{\sqrt{2\sigma^2}}\right)} - \frac{1}{2}\erf{\left(\frac{x + \mu}{\sqrt{2\sigma^2}}\right)}}{\frac{1}{\sqrt{2\pi\sigma^2}}e^{\frac{-(x-\mu)^2}{2\sigma^2}} + \frac{1}{\sqrt{2\pi\sigma^2}}e^{\frac{-(x+\mu)^2}{2\sigma^2}}} \\
                              & = \frac{2 - \cdf\left(\frac{x - \mu}{\sigma}\right) - \cdf\left(\frac{x + \mu}{\sigma}\right)}{\frac{1}{\sigma}\left(\pdf\left(\frac{x - \mu}{\sigma}\right) + \pdf\left(\frac{x + \mu}{\sigma}\right)\right)}                                                              \\
                              & \rightarrow \frac{\sigma(1 - \cdf(x))}{\pdf(x)} \text{ as } n \rightarrow n,
  \end{aligned}
\]
where \(\pdf\) and \(\cdf\) are the probability distribution and cumulative density
functions of the standard normal distribution respectively. Next, we follow \citet[example
  10.5.3]{nagaraja2003} and observe that
\[
  \frac{d}{dx} \frac{\sigma(1 - \cdf(x))}{\pdf(x)} = \frac{\sigma x (1 - \cdf(x))}{\pdf(x)} - \sigma \rightarrow 0 \text{ as } x \rightarrow \infty
\]
since
\[
  \frac{1 - \cdf(x)}{\pdf(x)} \sim \frac{1}{x}.
\]
In this case, we may take \(b_n = F_Y^{-1}(1 - 1/n)\) and \(a_n = \big(n
f_Y(b_n)\big)^{-1}\).

\section{Additional Experiments}
\label{sec:additional-experiments}

In this section we present additional and extended results from the main section.

\subsection{Power and False Discoveries for Multiple Features}%
\label{sec:power-fdr-multiple}

Here, we study how the power of correctly detecting \(k=10\) signals under \(q_j\) linearly
spaced in \([0.5, 0.99]\)~(\Cref{fig:binary-power}). We set \(\beta^*_j = 2\) for each of
the signals, use \(n = 100\,000\), and let \(\sigma_\varepsilon = 1\). The level of
regularization is set to \(\lambda_1 = n 4^\delta/10\). As we can see, the power is
directly related to \(q_j\) and for unbalanced features stronger the higher the choice of
\(\delta\) is.

We also consider a version of the same setup, but with \(p\) linearly spaced in \([20,
    100]\) and compute normalized mean-squared error (NMSE) and false discovery rate
(FDR)~(\Cref{fig:binary-fdr-mse}). As before, we let \(k = 10\) and consider three
different levels of class imbalance. The remaining \(p-k\) features have class balances
spaced evenly on a logarithmic scale from 0.5 to 0.99. Unsurprisingly, the increase in
power gained from selecting \(\delta = 1\) imposes increased false discovery rates. We also
see that the mean-squared error depends on class balance. In line with our previous
results, \(\delta \in \{0, 1/2\}\) appears to work well for balanced features whilst
\(\delta = 1\) works better when there are large imbalances. In the case when \(q_j =
0.99\), the model under scaling with \(\delta = 0\) does not detect any of the true
signals.

\begin{figure}[htpb]
  \centering
  \subfigure[%
    The power (probability of detecting all true signals) of the lasso. In our
    orthogonal setting, power is constant over \(p\), which is why we have
    omitted the parameter in the plot. \label{fig:binary-power}
  ]{\includegraphics[]{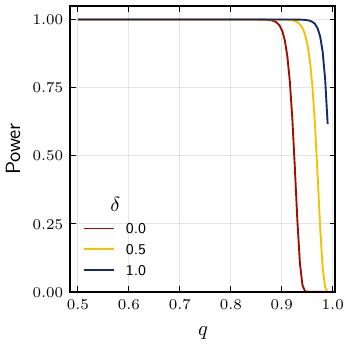}}\hfill%
  \subfigure[%
    NMSE and FDR: the rate of coefficients incorrectly set to non-zero (false
    discoveries) to the total number of estimated coefficients that are nonzero
    (discoveries).
    \label{fig:binary-fdr-mse}]{\includegraphics[]{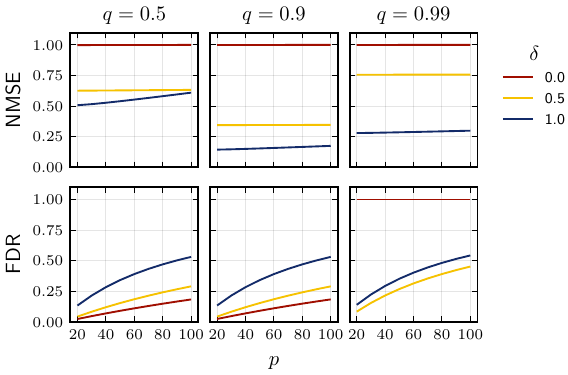}}
  \caption{%
    Normalized mean-squared error (NMSE), false discovery rate (FDR), and power
    for a lasso problem with \(k = 10\) true signals (nonzero \(\beta_j^*\)),
    varying \(p\), and \(q_j \in [0.5, 0.99]\). The noise level is set at
    \(\sigma_\varepsilon = 1\) and \(\lambda_1 = 0.02\).
  }
\end{figure}

\subsection{Support Size and Predictive Performance}
\label{sec:predictive-performance-support}

In this section we analyze the support size of the lasso estimates for the experiment in
\Cref{sec:experiments-predictive-performance}. In \Cref{fig:hyperopt-support}, we have, in
addition to NMSE on the validation set, also plotted the size of the support of the lasso
(cardinality of the set of features that have corresponding nonzero coefficients). Here we
only show results for \(\delta \in \{0, 1/2, 1\}\).

\begin{figure}[htpb]
  \centering
  \includegraphics[]{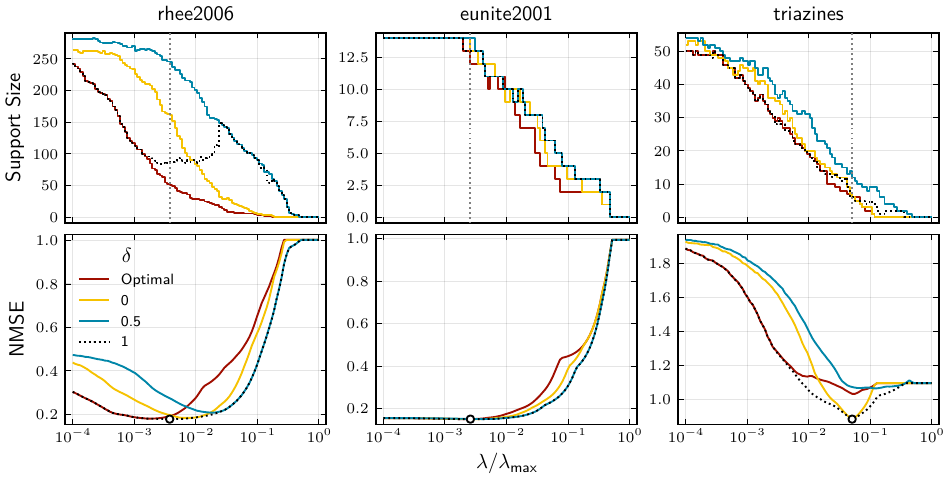}
  \caption{%
    Support size and normalized mean-squared error (NMSE) for the validation set for the lasso
    fit to datasets \data{rhee2006}, \data{eunite2001}, and \data{triazines} across combinations of
    \(\delta\) and \(\lambda\). The optimal \(\delta\) is marked with dashed black lines and
    the best combination of \(\delta\) (among 0, 1/2, and 1) and \(\lambda\) is shown as a dot.
  }
  \label{fig:hyperopt-support}
\end{figure}

\subsection{Predictive Performance for Simulated Data}%
\label{sec:predictive-performance-simulated}

In this experiment, we consider predictive performance in terms of mean-squared error of
the lasso and ridge regression given different levels of class balance (\(q_j \in \{0.5,
0.9, 0.99\}\)), signal-to-noise ratio, and normalization (\(\delta\)). All of the features
are binary, but here we have used \(n=300\) and \(p = \num{1000}\). The \(k=10\) first
features correspond to true signals with \(\beta^*_j = 1\) and all have class balance
\(q\). To set signal-to-noise ratio levels, we rely on the same choice as in
\citet{hastie2020} and use a log-spaced sequence of values from 0.05 to 6. We use standard
hold-out validation with equal splits for training, validation, and test sets. And we fit a
full lasso path, parameterized by a log-spaced grid of 100 values\footnote{This is a
  standard choice of grid, used for instance by \citet{friedman2010}}, from
\(\lambda_\text{max}\) (the value of \(\lambda\) at which the first feature enters the
model) to \(10^{-2}\lambda_\text{max}\) on the training set and pick \(\lambda\) based on
validation set error. Then we compute the hold-out test set error and aggregate the results
across 100 iterations.

The results~(\Cref{fig:binary-sim}) show that the optimal normalization type in terms of
prediction power depends on the class balance of the true signals. If the imbalance is
severe, then we gain by using \(\delta=1/2\) or \(1\), which gives a chance of recovering
the true signals. If everything is balanced, however, then we do better by not scaling. In
general, \(\delta=1/2\) works well for these specific combinations of settings.

\begin{figure}[htpb]
  \centering
  \includegraphics[]{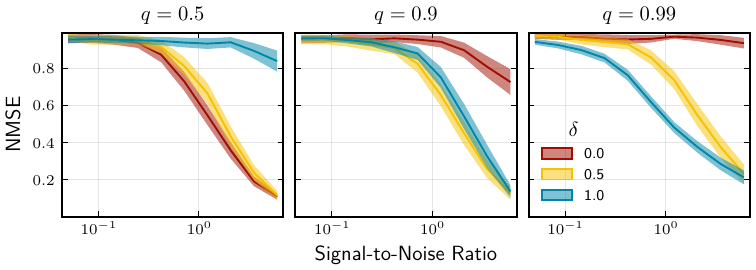}
  \caption{%
    Normalized mean-squared prediction error in a lasso model for different types of
    normalization (\(\delta\)), types of class imbalances (\(q_j\)), and signal-to-noise ratios
    (0.05 to 6) in a dataset with \(n=300\) observations and \(p = \num{1000}\) features. The
    error is aggregated test-set error from hold-out validation with \(100\) observations in
    each of the training, validation, and test sets. The plot shows means and Student's
    \(t\)-based 95\% confidence intervals. } \label{fig:binary-sim}
\end{figure}

\subsection{Comparisons of Normalization Methods on Real Data}
\label{sec:normalization-tuning}

In this section we present an extended and modified version of the experiment in
\Cref{sec:experiments-predictive-performance}, by considering several additional datasets,
including datasets with binary responses to which we have fit \(\ell_1\) and
\(\ell_2\)-regularized logistic regression. Instead of only considering parameterization
over \(\delta\), we also extend the benchmarks to cover additional normalization types. For
each of the datasets, we have fit the lasso and ridge for 10-times repeated 10-folds
cross-validation over a grid of \(\lambda\) and normalization method. In the case of the
method we call ``ours'', we have extended the grid over \(\delta\) as well, and return the
results for the \(\delta\) with lowest error.

The results are shown in \Cref{fig:method-comparison} and show that the optimal choice of
normalization depends on the dataset and type of model. Using our strategy, which
corresponds to standardization for continuous data and hyper-tuning over \(\delta\) attains
best results for most of the datasets, only ever slightly worse than the best performing
method. Among the methods, \(\ell_1\) normalization seems to perform poorly in general.
Given the fact that that it corresponds to variance-scaling for binary data, which we have
shown results in considerable variance, this is not particularly surprising.

\begin{figure*}[htpb]
  \centering
  \subfigure[%
    Results for the lasso
  ]{\includegraphics[]{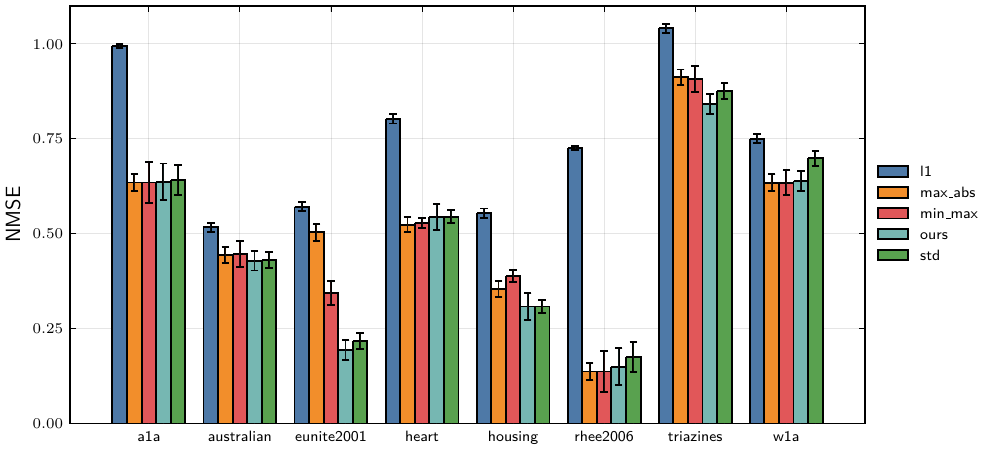} }
  \subfigure[%
    Results for ridge regression
  ]{\includegraphics[]{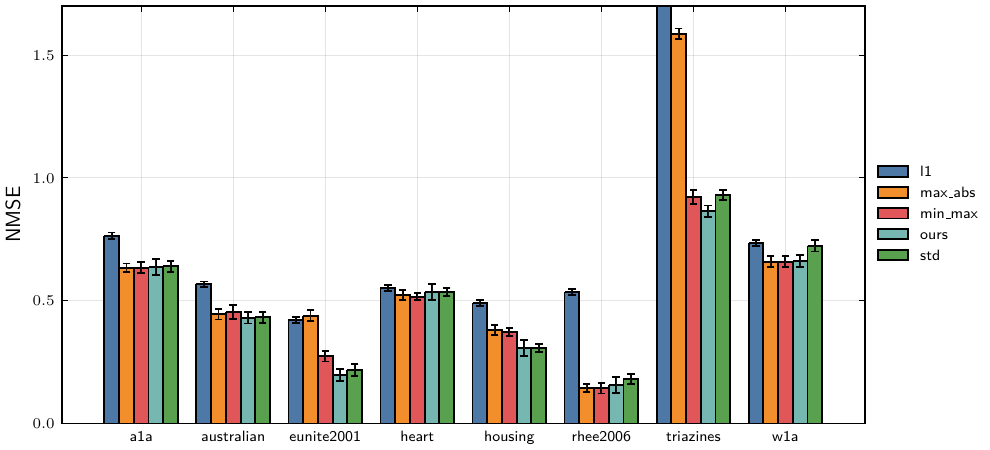}}
  \caption{%
    Cross-validation error from 10-times repeated 10-folds cross validation for the lasso and ridge
    and various datasets and normalization strategies. The error is normalized mean-squared
    error (NMSE). In the case of datasets \data{a1a}, \data{w1a}, \data{heart}, \data{w1a},
    and \data{australian}, we have fit regularized logistic regression and
    otherwise regularized linear regression. The error bars show 95\% confidence intervals.
  }
  \label{fig:method-comparison}
\end{figure*}

\subsection{Dichotomization and Feature Selection}
\label{sec:dichotomization}

In this experiment, we attempt to study the effect of normalization choice in feature
selection on datasets with dichotomized features and the following dichotomization scheme
to convert the continuous features to binary.

Each variable was converted to a binary feature by comparing against threshold based on
either historical standards, regulatory guidelines, or domain-specific knowledge of urban
housing factors in the 1970s~(\Cref{tab:dichotomization}).

\begin{table}[htbp]
  \centering
  \caption{
    Dichotomization procedure and linear regression coefficients for the
    features of the Boston housing dataset used in the experiment in
    \Cref{sec:dichotomization}.
  }
  \label{tab:dichotomization}
  \begin{tabular}{
      l
      p{9.5cm}
      S[table-format=-1.2,round-mode=places,round-precision=2]
      S[table-format=0.2,round-mode=places,round-precision=2]
    }
    \toprule
    Feature             & Dichotomization Rule                                                                         & $\hat{\bm{\beta}}_j^\text{OLS}$ & $q_j$     \\
    \midrule
    Crime Rate          & Above/below national U.S. average crime rate (1970--1971)                                    & -1.872296271668692              & 0.8992094 \\
    Zoning              & Presence/absence of large lot zoning (residential lots over 25,000 sq.ft)                    & 1.956883402813590               & 0.2648221 \\
    Business            & Residential vs industrial areas (below/above 10\% non-retail business acres)                 & -2.41422163254033               & 0.4664031 \\
    Charles River       & Original binary feature (borders river or not)                                               & 5.118087353331369               & 0.0691699 \\
    NOx Concentration   & Above/below EPA air quality standard for NOx (53 parts per billion)                          & -2.607920261778666              & 0.5177865 \\
    Rooms               & Above/below typical family dwelling size (more/less than 6 rooms)                            & 3.486474827676998               & 0.6581027 \\
    Age                 & Newer vs historic housing (less/more than 50\% built before 1940)                            & -1.101306211297333              & 0.7094861 \\
    Distance            & Close vs far from employment centers (less/more than 5 miles)                                & -4.762540742210745              & 0.2667984 \\
    Highway Access      & Limited vs good highway access (accessibility index below/above 20)                          & 1.250197827256583               & 0.2608695 \\
    Property Tax        & Below/above Massachusetts average property tax rate (approximately \$12 per \$1000 in 1970s) & -7.347485535327863              & 0.9664031 \\
    Pupil-Teacher Ratio & Below/above recommended educational value (16 students per teacher)                          & -6.567220592441502              & 0.8320158 \\
    Demographics        & More/less diverse population (above/below 85\% white)                                        & 3.245798334030099               & 0.9486166 \\
    Lower Status        & Middle-class vs lower-income areas (below/above 15\% lower status population)                & -7.240796759180348              & 0.3201581 \\
    \bottomrule
  \end{tabular}
\end{table}

Having dichotomized the data, we then first fit a standard linear regression model to the
data, and use this as a proxy for feature importance. Then, for three types of
normalization: \(\ell_1\)-normalization (\(\delta=1\)), standardization, and min--max
normalization, we fit a lasso model to the data and compute ranks of the features by
checking at which point they enter the model.

Finally, we compare the ranks of the features in the lasso model to the ranks of the
coefficients from the standard linear regression model, using the latter as the reference.
We use four different metrics to compare the ranks: Spearman's rank correlation, Kendall's
\(\tau\), mean absolute difference, and normalized discounted cumulative gain (NDCG). The
results are presented in \Cref{tab:method_comparison}. The results show that the
normalization method using \(\ell_1\)-normalization (\(\delta = 1\)) best corresponds to
the ranks of the linear regression coefficients.

\begin{table}[htbp]
  \centering
  \caption{
    Comparison between ranks of ordinary least-squares coefficients and ranks
    given by the order of model entry along the lasso path for the Boston
    housing dataset. The metrics used are Spearman's and Kendall's rank
    correlations, normalized discounted cumulative gain (NDCG), and mean
    absolute difference (MAD). Best values are marked in blod face. For all
    measures except MAD, higher values are better.
  }%
  \label{tab:method_comparison}
  \begin{tabular}{l S[table-format=1.4,detect-weight] S[table-format=1.4] S[table-format=1.4]}
    \toprule
    Metric   & {\(\ell_1\)-Normalization} & {Standardization} & {Min--Max/Max--Abs} \\
    \midrule
    Spearman & \best 0.7308               & 0.5714            & 0.5                 \\
    Kendall  & \best 0.5128               & 0.4359            & 0.3846              \\
    MAD      & \best 2.0                  & 2.7692            & 3.0769              \\
    NDCG     & \best 0.9515               & 0.9351            & 0.9186              \\
    \bottomrule
  \end{tabular}
\end{table}

\section{Summary of Data Sets}\label{sec:data-summary}

In \Cref{tab:dataset-info} we summarize the datasets we use in our paper.

\begin{table}
  \centering
  \caption{Details of the real datasets used in the experiments. The median \(q\) value
    refers to the median of the proportion of ones for the binary features in the data. Note that in the case of \data{housing}, there is
    only a single binary feature.}
  \label{tab:dataset-info}
  \small
  \begin{tabular}{
      l
      S[table-format=4.0]
      S[table-format=4.0]
      l
      l
      S[table-format=1.3,round-mode=places,round-precision=3]
      p{6cm}
    }
    \toprule
    Dataset           & {\(n\)} & {\(p\)} & Response   & Design     & {Median \(q\)} & Description                                                                                                                                                                                                                                                                                 \\
    \midrule

    \data{eunite2001} & 336     & 16      & continuous & mixed      & 0.856143       & Mid-term load forecasting competition dataset (EUNITE 2001) using National Taiwan University's winning approach. Contains 15 features including 7-day historical loads (scaled) with winter-only training data from 1997-1998 to predict January 1999 daily maximum loads \citep{chen2004}. \\

    \addlinespace
    \data{housing}    & 506     & 13      & continuous & mixed      & 0.93083        & Boston housing dataset with information about housing in the Boston area. Response is median value of owner-occupied homes in \$1000s \citep{harrison1978}.                                                                                                                                 \\

    \addlinespace
    \data{triazines}  & 186     & 60      & continuous & mixed      & 0.973118       & Pharmaceutical dataset relating molecular structures of triazine derivatives to their ability to inhibit dihydrofolate reductase \citep{hirst1994,king1995}.                                                                                                                                \\

    \addlinespace
    \data{rhee2006}   & 842     & 361     & continuous & binary     & 0.995249       & HIV-1 drug resistance data with protease and reverse transcriptase mutations as features. Response measures in vitro susceptibility to antiretroviral drugs \citep{rhee2006}.                                                                                                               \\

    \addlinespace
    \data{leukemia}   & 38      & 7129    & binary     & continuous &                & Gene expression data for leukemia patients. Classifies between acute myeloid leukemia (AML) and acute lymphoblastic leukemia (ALL) \citep{golub1999}.                                                                                                                                       \\

    \addlinespace
    \data{australian} & 690     & 14      & binary     & continuous & 0.557246       & Credit approval dataset, originally from the StaLog database. The task is to predict credit approval using a number of different features~\citep{quinlan1987,henery1992}.                                                                                                                  \\

    \addlinespace
    \data{heart}      & 270     & 13      & binary     & mixed      & 0.677778       & Heart disease dataset, originally from the StatLog database. The task is to predict the presence of heart disease from a number of features that have already been selected from a larger set of features~\citep{henery1992}.                                                              \\

    \addlinespace
    \data{a1a}        & 1605    & 123     & binary     & binary     & 0.970093       & Subset of Adult dataset derived from census data. Predicts whether income exceeds \$50,000/year based on census information \citep{becker1996,platt1998}.                                                                                                                                   \\

    \addlinespace
    \data{w1a}        & 2477    & 300     & binary     & binary     & 0.976181       & Derived from web page data, classifying whether pages belong to specific categories \citep{platt1998}.                                                                                                                                                                                      \\

    \bottomrule
  \end{tabular}
\end{table}

We also visualize the distribution of class balance among all the binary features in
\Cref{fig:data-hist-q}. We note that the class imbalance for many of these datasets is
quite severe.

\begin{figure}[htpb]
  \centering
  \includegraphics[]{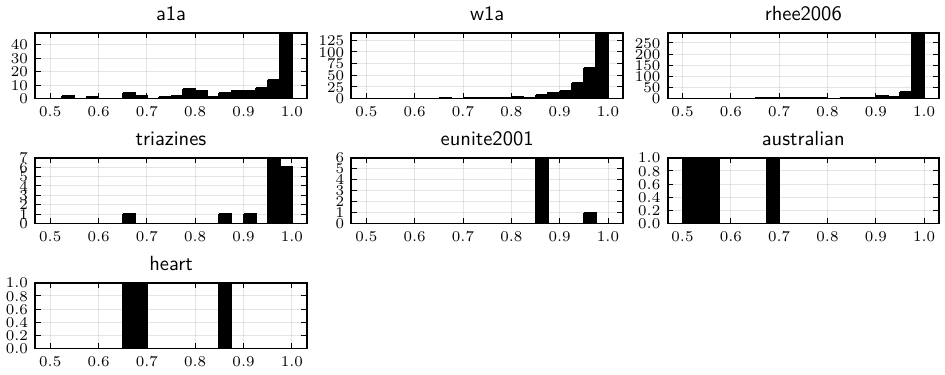}
  \caption{%
    Histograms over the distribution of \(q\) (class balance, that is, the
    proportion of ones) for the binary features in each of the datasets
    used in the paper.
  }
  \label{fig:data-hist-q}
\end{figure}

\end{document}